 \documentclass[opre,sglanonrev]{informs5}
\RequirePackage{tgtermes}
\RequirePackage{newtxtext}
\RequirePackage{newtxmath}
\RequirePackage{bm}
\RequirePackage{endnotes}

\OneAndAHalfSpacedXII 

\usepackage{algorithm}
\usepackage{algpseudocode}
\usepackage{tikz}

\usepackage{natbib}
 \bibpunct[, ]{(}{)}{,}{a}{}{,}%
 \def\bibfont{\small}%
\EquationsNumberedThrough    

\TheoremsNumberedThrough     
\ECRepeatTheorems  %

\MANUSCRIPTNO{MOOR-0001-2024.00}

\usepackage{appendix}
\usepackage{graphicx}
\usepackage{subcaption}
\usepackage{wrapfig}
\usepackage{float}
\usepackage{multirow}
\usepackage{booktabs} 
\usepackage{multicol} 
\usepackage{makecell}
\usepackage[pdftex,colorlinks=true,urlcolor=blue,citecolor=black,anchorcolor=black,linkcolor=black]{hyperref}
\usepackage{enumitem}

\begin{document}


 \RUNAUTHOR{Li et al.} 

 \RUNTITLE{Optimal Low-Rank Stochastic Gradient Estimation for LLM Training}

\TITLE{Optimal Low-Rank Stochastic Gradient Estimation for LLM Training }

\ARTICLEAUTHORS{%

\AUTHOR{Zehao Li}
\AFF{Guanghua School of Management, Peking University, Beijing, 100871, China, zehaoli@stu.pku.edu.cn  } 

\AUTHOR{Tao Ren}
\AFF{Guanghua School of Management, Peking University, Beijing, 100871, China, rtkenny@stu.pku.edu.cn  }

\AUTHOR{Zishi Zhang}
\AFF{Guanghua School of Management, Peking University, Beijing, 100871, China, zishizhang@stu.pku.edu.cn  }

\AUTHOR{Xi Chen}
\AFF{Stern School of Business, New York University, New York, NY 10012, USA, xc13@stern.nyu.edu}

\AUTHOR{Yijie Peng}
\AFF{Guanghua School of Management, Peking University, Beijing, 100871, China, pengyijie@pku.edu.cn  }
} 

\ABSTRACT{
Large language model (LLM) training is often bottlenecked by memory consumption and the noise of stochastic gradients in extremely high-dimensional parameter spaces. Motivated by empirical evidence that many LLM gradient matrices are
effectively low-rank during training, we present an unbiased, memory-efficient, low-rank matrix estimator with the lowest variance that is
applicable across common stochastic gradient estimation paradigms.
The core idea is to project a high-dimensional stochastic gradient estimator onto a random low-dimensional subspace and lift it back, reducing memory while keeping the estimator unbiased and controlling mean-squared error via an optimally designed projection distribution, including Haar--Stiefel projections. 
The projection distribution is derived by solving a constrained functional optimization problem, yielding an optimal random projector that guides algorithm design. Empirically, the resulting low-rank gradient estimators deliver both practical memory savings and improved training behavior. In RoBERTa-large fine-tuning, our method attains the lowest peak GPU memory among compared methods (e.g., 3.83GB versus 16.7GB for full BP) while remaining competitive in accuracy; in autoregressive LLM pretraining (LLaMA-20M/60M/100M), our method outperforms the traditional methods, supporting the benefit of the proposed optimal projection strategy. 
}%


\KEYWORDS{large language model training, stochastic gradient estimation, low-rank matrix, constrained distributional optimization} 

\maketitle

%


\section{Introduction}

Large language models (LLMs) are large-scale neural networks pretrained on massive text corpora and have demonstrated strong capabilities across a wide range of tasks, including reasoning, coding, and decision support. With the growing use of instruction tuning and domain-specific adaptation, LLMs are increasingly becoming a core foundation for intelligent systems in applications ranging from everyday communication to industrial optimization \citep{huang2025orlm,dai2025assured,liang2026llm}. Their development typically involves two stages. The first is pretraining, in which a model is trained on broad unlabeled data to acquire general-purpose representations. The second is fine-tuning, where the pretrained model is further adapted to task- or domain-specific data so as to better serve a particular objective.

LLM training is inherently a stochastic optimization problem: parameter updates are driven by randomness from minibatch sampling and environment interaction. In both pretraining and fine-tuning, gradient-based optimization remains the standard approach for updating model parameters. Consequently, how to obtain stochastic gradient estimators becomes the central problem. In simulation optimization literature, classical stochastic gradient estimation methods include infinitesimal perturbation analysis (IPA) and likelihood ratio (LR) methods \citep{heidelberger1988convergence,ho1983infinitesimal,glynn1990likelihood,l1990unified,reiman1989sensitivity}. At the LLM scale, gradients are naturally matrix-valued, aligning with the weight-matrix structure of neural networks. The training bottleneck is driven by entrywise noise aggregation and optimizer-state memory across massive weight blocks, which introduces both computational and statistical challenges.

On the computational side, the memory requirements are substantial. For instance, pretraining a LLaMA 7B model from scratch with a single batch size requires at least 58 GB of memory, which makes training infeasible on consumer-level GPUs such as
 NVIDIA RTX 4090 with 24 GB of memory \citep{zhao2024galore}. To address this issue, parameter-efficient training methods have been developed, which aim to adapt large pretrained models by updating only a small subset of parameters while keeping the backbone frozen. A prominent example is low-rank adaptation (LoRA) and its variants \citep{hu2022lora,dettmers2023qlora,ye2025lola,xu2026parameter}, which inject trainable low-rank matrices into selected weight modules so that the update is constrained to a low-dimensional subspace, thereby reducing the number of parameters that are updated and the associated optimizer-state memory.

Furthermore, the variance of the gradient estimators becomes an even more critical challenge in such stochastic optimization problems. While Monte Carlo gradient estimation and variance reduction techniques have been extensively studied in the stochastic optimization literature \citep{vihola2018unbiased,cui2020variance,ye2025unified}, existing methods are predominantly developed for vector-valued gradients and do not explicitly exploit the low-rank matrix structure that often arises in LLM training. In contrast, LLM training involves optimization over high-dimensional neural weight matrices, where the variance accumulated across all entries can be substantial and interacts nontrivially with the matrix rank. 
To address this gap, our goal is to develop methods that simultaneously reduce memory overhead and control gradient variance, enabling scalable and stable training for high-dimensional stochastic neural optimization tasks.

Our motivations are twofold. A key empirical observation in recent literature is that neural network gradients tend to lie in a low-rank subspace during training. For example, \cite{zhao2024galore} identify this phenomenon as a well-established property across various architectures and demonstrate that in Transformer models, the gradient of a value matrix with dimensions $1024\times1024 $ typically exhibits only around 10 dominant eigenvalues—implying that the full gradient can be well-approximated by a low-rank matrix. A low effective rank suggests that the descent signal concentrates in a small subspace; a projection that respects this geometry can filter irrelevant noise directions, improving both memory and estimator MSE. Another motivation comes from the recent progress in parameter-efficient training methods, as mentioned before \citep{xu2026parameter}. However, existing methods mainly reduce the number of parameters that are updated. From the perspective of stochastic gradient estimation, we focus on reducing the cost and noise of gradient estimation itself for large weight matrices.

Motivated by these findings, we propose a new low-rank stochastic gradient estimation method for LLM training based on classical IPA and LR methods. Our method exploits the intrinsic low-rank structure of neural gradients to simultaneously reduce memory consumption and estimator variance—two critical challenges in large-scale stochastic optimization. Our approach randomly projects the full-rank stochastic gradient estimator onto a lower-dimensional subspace, significantly reducing computational overhead and memory consumption. At each iteration we compute the directional derivative along a random \(r\)-dimensional subspace \(V\in\mathbb R^{n\times r}\) and lift it back, reducing computation and memory from \(O(mn)\) to \(O\!\bigl(r(m+n)\bigr)\) with \(r\ll\min\{m,n\}\).  Despite the dimensionality reduction, the resulting low-rank gradient estimator effectively balances estimation accuracy with computational efficiency. 

 This method falls within the broader class of random subspace projection techniques, which perform optimization not in the full parameter space, but within a randomly selected low-dimensional subspace at each iteration. Our setting, however, differs from prior subspace methods in three important ways. First, most literature is developed for vector-valued problems. It does not exploit the low-rank structure of matrix-valued gradients that is pervasive in modern LLM training \citep{spall1992multivariate,spall1997one,ghadimi2013stochastic,duchi2015optimal,ji2019improved,kozak2023zeroth}. Second, even in the matrix-oriented subspace literature, the projection mechanism is usually taken as given: one samples a random sketch or projector according to a prescribed rule and then analyzes the resulting estimator or algorithm. The resulting analysis is not cast as a stochastic optimization problem over projection distributions \citep{vu2018random,gutman2023coordinate,ding2025new}. 
 Third, the projection rule is almost always chosen heuristically, typically using Gaussian sampling, and the question of how to optimally design the distribution over projection matrices is largely left unexplored \citep{ghadimi2013stochastic,nesterov2017random,chen2024enhancing,chen2025memory}.
 
 A central challenge in random subspace projection is how to choose an efficient distribution over projection matrices, since this choice directly affects the statistical efficiency of the resulting gradient estimator. To address this issue in a principled way, we analyze the mean squared error (MSE) of our low-rank stochastic gradient estimator and formulate the selection of the projection distribution as a constrained distributional optimization problem. The admissible projections are required to satisfy two structural conditions: they must produce a low-rank estimator and preserve weak unbiasedness, meaning that the estimator matches the true gradient up to a positive scalar factor. The low-rank structure is also essential for reducing memory cost. Importantly, the resulting MSE depends on two coupled sources of randomness: the stochasticity of data sampling and the additional randomness induced by projection. As a result, the projection distribution should be treated as a decision variable, rather than a fixed implementation choice.

We study this constrained design problem in two cases. In the instance-independent case, where no reliable information about data-induced gradient variability is available, we establish a universal optimality principle: among all admissible random projectors, the smallest achievable MSE of the information-agnostic upper bound is attained by balanced subspace distributions whose projectors are isotropic on average, meaning that they distribute projection energy evenly across parameter directions. This result provides a sharp worst-case benchmark and implies that symmetric distributions such as Haar–Stiefel sampling and uniform coordinate subspaces are not merely heuristics but are provably optimal in the information-agnostic setting.

In the instance-dependent case, where partial information about gradient variability can be estimated, we go further and solve the optimal projection design problem. We explicitly characterize an optimal sampling distribution that adapts to the eigen-structure of the underlying gradient noise, preferentially allocating projection budget to directions with larger signal-to-noise impact. This yields a principled, data-aware subspace sampling rule that can strictly reduce estimator MSE relative to standard isotropic subspace sampling, and in favorable low-effective-rank situations can match the accuracy of full-gradient estimation while using substantially less memory.

In summary, our paper makes the following key contributions.
\begin{itemize}
\item From the viewpoint of stochastic gradient estimation, we build a (weakly) unbiased low-rank stochastic gradient estimator by leveraging classical IPA and LR estimators. We then design a memory-efficient randomized subspace projection algorithm for LLM training, where gradients/updates are computed in a low-dimensional subspace and mapped back to the full parameter space. To further amortize the projection cost and improve stability, we introduce a lazy-update mechanism that reuses one sampled subspace for multiple inner steps.

\item We formulate a constrained functional optimization problem that minimizes the MSE of our low-rank gradient estimator over admissible random projectors. In the instance-independent setting, we optimize a tractable upper bound and characterize optimal random projectors that achieve weak unbiasedness and the smallest achievable MSE under this robust design. In the instance-dependent setting, we derive a sharper instance-dependent characterization, solve the optimal distribution of the random projection, and provide a practical sampling procedure that yields an unbiased, memory-efficient estimator with the lowest MSE.

\item Extensive experiments provide empirical support for both the theoretical insights and the practical effectiveness of our framework across representative LLM training regimes. In LR-based fine-tuning of RoBERTa-large on multiple benchmarks, our structured subspace designs, especially the Stiefel sampler, achieve strong and often superior accuracy over vanilla low-rank baselines while retaining the memory-efficiency advantages of low-rank training. In IPA-based autoregressive pretraining of LLaMA models at multiple scales, the proposed optimal projector consistently outperforms Gaussian subspace sampling in both training and evaluation loss. 

\end{itemize}

\smallskip
The rest of the paper is organized as follows. Section \ref{sec2} reviews the related literature, and Section \ref{sec3} introduces the necessary preliminaries. Section \ref{sec4} presents the proposed low-rank gradient estimator and the associated optimization algorithm. Section \ref{sec5} develops the theoretical analysis. Section \ref{sec6} reports the numerical results, and Section \ref{sec7} concludes. All proofs are deferred to the Appendix.

\section{Literature Review}\label{sec2}
Our work contributes to three streams of literature: stochastic gradient estimation, randomized subspace optimization, and parameter-efficient training methods.

\textbf{Stochastic gradient estimation.}
 Gradient estimation with only noisy function evaluation arises ubiquitously in stochastic optimization problems such as quantile optimization \citep{hu2022stochastic,hu2024quantile}, distributionally robust analysis \citep{fan2020distributionally,lam2024distributionally,wang2025sinkhorn}, complex systems optimization \citep{xu2023gradient},  convex discrete optimization \citep{zhang2023gradient}, adaptive data stream \citep{che2026stochastic}, simulation-based inference \citep{peng2020maximum,li2025new}, and inventory optimization \citep{wang2023large,chungunbalanced}. Classical unbiased gradient estimators, such as IPA, LR methods, and other variants \citep{ho1983infinitesimal,glynn1990likelihood,hong2009estimating,heidergott2010gradient,peng2018new}, are widely used in stochastic optimization due to their general applicability and flexibility. Reviews of gradient estimation techniques can be found in \cite{FU2006575,fu2015stochastic} and \cite{mohamed2020monte}. The above gradient estimators are all scalar- or vector-valued. In this work, we focus on matrix-valued gradient estimators under the specific setting of neural network-based optimization.
 
 Furthermore, bias and variance are typically capable of measuring the properties of an estimator. Conversely, we often minimize the variance or the MSE to design an optimal estimator. For example, \cite{rhee2015unbiased} construct finite variance unbiased estimators for a general class of SDEs with an optimal distribution for randomization.
 \cite{cui2020variance} analyze the variance of single-run unbiased stochastic gradient estimators, as well as offering insights on finding an estimator with lower variance. Variance reduction techniques such as multilevel Monte Carlo \citep{giles2008multilevel,rosenbaum2017multilevel,vihola2018unbiased,hu2023contextual} and optimal importance sampling have also been studied to design a gradient estimator and algorithm \citep{pan2020adaptive,he2024adaptive,aolaritei2025stochastic}. Bias and variance of the gradient estimators also affect the convergence rate of the stochastic optimization algorithms \citep{karimi2019non,li2024beyond,hu2025convergence}.

\textbf{Randomized subspace optimization.}
Following another line of literature, random subspace optimization has also been widely studied across different problems. An important subclass emerges in zeroth-order (ZO) optimization, where gradient information is unavailable and must be inferred from function evaluations \citep{ye2025unified,lam2024distributionally,wu2022joint,chen2020unbiased}. The classical Kiefer–Wolfowitz scheme \citep{harold1997stochastic} estimates partial derivatives by perturbing each coordinate individually, leading to computational overhead proportional to the input dimension. To alleviate this burden, more recent approaches adopt synchronized perturbations along randomly chosen directions, enabling simultaneous estimation of all gradient components from a single or a few function evaluations. These perturbation vectors are typically sampled from identically and independently distributed Gaussian due to its ease of sampling \citep{nesterov2017random,ghadimi2013stochastic}. Other effective and common methods are the simultaneous perturbation stochastic
 approximation \citep{spall1992multivariate,spall1997one}, the random coordinate method \citep{ji2019improved,lu2018accelerating}, or the uniform random vector \citep{duchi2015optimal}. 

In this sense, ZO methods can be naturally viewed as a special case of randomized subspace projections with rank one, where stochastic smoothing is applied over one-dimensional subspaces \citep{cai2022zeroth}. Extensions to higher-rank subspaces have also been studied to exploit structural information in a variety of optimization settings, including linear programming via random projections \citep{vu2018random}, proximal gradient methods with adaptive subspace sampling \citep{grishchenko2021proximal}, coordinate-descent-type methods on Riemannian manifolds \citep{gutman2023coordinate}, and orthogonally randomized subspace algorithms \citep{kozak2021stochastic,kozak2023zeroth}; related low-rank structure has also been explored in semidefinite programming \citep{ding2025new}. These works focus on deterministic optimization problems, where random subspace projection is primarily used for dimensionality reduction and algorithmic simplification. Moreover, they typically lack a variance-aware perspective and rely on heuristic subspace distributions that are not optimized for the statistical structure of matrix-valued gradient estimators, which is a central focus of our work.

\textbf{Parameter-efficient training methods. }
In both large-scale pretraining and task-specific fine-tuning, a common strategy to reduce training memory is to restrict learning to a small set of trainable parameters. LoRA is a representative approach that freezes the backbone and parameterizes weight updates by a low-rank factorization \citep{hu2022lora}. 
LoRA does not optimize the full weight matrix directly; instead, it introduces a low-rank adaptation layer to fine-tune large models with significantly fewer parameters. While this yields memory savings, it restricts optimization to a low-dimensional subspace and thus limits the reachable update space relative to full-parameter training  \citep{chen2024enhancing}. A rich family of LoRA derivatives
has been developed to make the fine-tuning more efficient in downstream tasks \citep{zhang2023adalora,dettmers2023qlora,ye2025lola,xu2026parameter}. In contrast, GaLore \citep{zhao2024galore} and its variant \citep{he2024subspace} leverage the empirical low-rank structure of neural gradients to reduce memory usage. However, they first compute the full gradient and only then project it onto a low-rank subspace via PCA, meaning that they do not reduce the cost of gradient estimation itself. \cite{chen2024enhancing,chen2025memory} offer a different perspective. Their methods can be interpreted as randomized subspace optimization in gradient-free neural network training, which fits into our framework as a special case. More broadly, other recent studies have also investigated related questions in LLM training and inference \citep{liu2024uncertainty,wang2024understanding,li2025agentgit,liang2026llm}.

\paragraph{Notations.}
We introduce essential notations used throughout this paper. For a positive integer $n$, we write
$[n]:=\{1,2,\ldots,n\}$.
For a finite set $J$, we use $|J|$ to denote its cardinality, and $\mathbf{1}_{\{\cdot\}}$ for the indicator function.
For a matrix $A\in\mathbb{R}^{m\times n}$, we write $A^\top$ for its transpose and $\mathrm{tr}(A)$ for its trace.
We use $\langle A,B\rangle := \mathrm{tr}(A^\top B)$ to denote the Frobenius inner product, and
$\|A\|_F := \sqrt{\langle A,A\rangle}=\sqrt{\mathrm{tr}(A^\top A)}$ for the Frobenius norm.
When needed, $\|A\|_2$ denotes the spectral (operator) norm, i.e., the largest singular value of $A$.
We use $I_n$ for the $n\times n$ identity matrix, and for a positive semi-definite matrix $A\succeq 0$ we write
$A=Q\mathrm{diag}(\lambda_1,\ldots,\lambda_n)Q^\top$ for its spectral decomposition with $\lambda_1\ge\cdots\ge\lambda_n\ge 0$.
For a random subset $J\subseteq [n]$, the notation $|J|=r$ means that $J$ has exactly $r$ elements.

\section{Problem Formulation: Gradient Estimation in LLM Training}\label{sec3}

In this section, we provide a unified perspective on prevalent LLM training paradigms through the lens of gradient estimation. Specifically, we formulate these paradigms within a stochastic optimization framework and classify existing approaches into two families—IPA and LR—according to how the gradients are estimated. 
We also discuss the respective strengths and limitations of these two families in the context of LLM training, thereby motivating the development of our new low-rank approach in the next section, which can be applied to improve both families.

Specifically, LLM training can be cast as the following stochastic
optimization problem:
\begin{equation}\label{eq:llm-train}
\min_{\Theta\in\mathbb{R}^{m\times n}:\ \mathrm{NN}_\Theta\in\mathcal{H}}
\ \mathbb{E}_{\xi\sim p(\cdot;\Theta)}\big[F(\xi,\Theta)\big],
\end{equation}
where $\mathrm{NN}_\Theta$ denotes a neural network (in modern LLMs, typically a Transformer) parameterized by $\Theta\in\mathbb{R}^{m\times n}$,
$\mathcal{H}$ is a hypothesis class of admissible networks, $\xi$ collects the randomness in training
(e.g., minibatch sampling, dropout, or environment randomness),
and $F(\xi,\Theta)$ denotes the training loss induced by the model $\mathrm{NN}_\Theta$ under $\xi$.
This formulation is general in that the randomness $\xi\sim p(\cdot;\Theta)$ may depend on the trainable parameter $\Theta$, resulting in a parameter-dependent sampling distribution. In practice, the parameter $\Theta$ is a high-dimensional tensor that aggregates all trainable weights across all layers of the neural network. Without loss of generality, we focus on a single matrix block (e.g., one layer) and treat the remaining parameters as fixed throughout this work.

A standard way to solve \eqref{eq:llm-train} is gradient-based optimization, in
which {the central challenge is to construct an estimator $\hat{g}(\Theta)$ of the true gradient $$g(\Theta):=\nabla_\Theta \mathbb{E}_{\xi\sim p(\cdot;\Theta)}\big[F(\xi,\Theta)\big].$$ 

We begin by introducing the first gradient estimation family, namely \textit{infinitesimal perturbation analysis} (IPA).
When the randomness distribution does not depend on $\Theta$ (i.e., $p(\xi;\Theta)=p(\xi)$) and
$F(\xi,\Theta)$ is differentiable in $\Theta$, the IPA estimator, given by
\begin{equation}\label{eq:ipa:1}
    \widehat g_{\mathrm{IPA}}(\xi,\Theta)=\nabla_\Theta F(\xi,\Theta), 
\end{equation}
is unbiased under standard dominated-convergence type conditions \citep{FU2006575,fu2015stochastic}, i.e., $ \mathbb{E}[\widehat g_{\mathrm{IPA}}(\xi,\Theta)]=\nabla_\Theta \mathbb{E}[F(\xi,\Theta)]$.
In neural network training, backpropagation (BP) provides an efficient computational implementation of the IPA estimator, which computes the pathwise gradient 
$\nabla_\Theta F(\xi,\Theta)$ by recursively applying the chain rule along the computational graph of $\mathrm{NN}_\Theta$.
Stochastic gradient descent (SGD) combined with BP constitutes the dominant paradigm in modern LLM training and falls within the IPA family \citep{peng2022new}. In these settings, the randomness $\xi$—for example, from minibatch sampling or dropout—is exogenous to the high-dimensional $\Theta$ and is typically of relatively low dimension. 
Consequently, the IPA gradient estimator 
$\nabla_\Theta F(\xi,\Theta)$ typically enjoys relatively low variance \citep{cui2020variance}.
The main limitation of IPA-based methods lies in their substantial memory requirement, as computing $\nabla_\Theta F(\xi,\Theta)$ via BP requires storing the entire computational path of the neural network \citep{ren2025zeroth}. 
In addition, IPA is not directly applicable when the sampling distribution depends on $\Theta$ or involves discrete randomness, unless such randomness admits a suitable reparameterization \citep{mohamed2020monte}. We illustrate this with the following example.

\begin{example}[IPA gradient estimator for a one-layer ReLU network]\label{ex:ipa}
Consider training a one-layer network $\mathrm{NN}_\Theta(x):=\psi(\Theta^\top x),$
with ReLU activation function $\psi(z):=\max\{0,z\}$ and model parameter $\Theta\in\mathbb R^{m\times n}$.
The training sample $\xi=(x,y)\in\mathbb R^m\times\mathbb R^n$ is drawn from a distribution
$p(\xi)$.
The training loss is squared loss $F(\xi,\Theta):=\frac12\big\|\mathrm{NN}_\Theta(x)-y\big\|^2$.
Then, on $\{\Theta^\top x\neq 0\}$, the IPA gradient estimator defined in \eqref{eq:ipa:1} is given by
$$\widehat g_{\mathrm{IPA}}(\xi,\Theta)
=\nabla_\Theta F(\xi,\Theta)
= x\Big(\big(\psi(\Theta^\top x)-y\big)\odot \mathbf 1\{\Theta^\top x>0\}\Big)^\top,$$
where $\odot$ denotes the elementwise product of two vectors in $\mathbb R^n$.
\end{example}

A second classical family of gradient estimators is the \emph{likelihood-ratio} (LR) estimator.
It becomes relevant in LLM training whenever the distribution of the randomness $\xi\sim p(\cdot;\Theta)$ depends on $\Theta$. If $p(\xi;\Theta)$ is differentiable in $\Theta$ and the loss function does not depend on $\Theta$
explicitly, i.e., $F(\xi,\Theta)=F(\xi)$, the LR estimator is given by
\begin{equation}\label{eq:lr:1}
    \widehat g_{\mathrm{LR}}(\xi,\Theta)=(F(\xi)-b)\,\nabla_\Theta\log p(\xi;\Theta),
\end{equation}
which is unbiased, with an optional constant $b$ for variance reduction \citep{greensmith2004variance,mohamed2020monte}. LR can also be used when $\Theta$ appears in $F(\xi,\Theta)$ rather than in $p(\xi;\Theta)$, by a
suitable change of variables that makes $\Theta$ enter $p(\xi;\Theta)$.
Many methods in the LLM training literature belong to the LR family.
For example, in reinforcement-learning-based LLM post-training, the LLM itself generates the random sample
$\xi$ (such as a token sequence), so that its sampling distribution depends on the model parameter
$\Theta$. In the reinforcement learning literature, the LR estimator \eqref{eq:lr:1} is commonly
referred to as the {policy gradient} (or {REINFORCE}) estimator \citep{lin2025reusing}. Another LR-family method commonly used in LLM training is zeroth-order (ZO) optimization \citep{ye2025unified,wu2022joint}.
ZO methods inject a random perturbation (e.g., Gaussian) into $\Theta$ and,
by a change-of-variables argument, induce a $\Theta$-dependent density that again
enables an LR estimator. In this way, ZO constructs a stochastic gradient estimator
using only function evaluations of the black-box objective, without requiring
backpropagation; see Example~\ref{ex:zo}.


\begin{example}[Special case of LR Gradient Estimator: ZO optimization]\label{ex:zo}
Consider the problem \eqref{eq:llm-train}.
ZO methods introduce an auxiliary random perturbation $\sigma Z$ into the parameter
$\Theta$, where $Z\sim\mathcal N(0,I_{mn})$ and $\sigma>0$ is a prescribed
perturbation scale. The objective can then be written as $$\mathbb E_{\xi,Z}\big[F(\xi,\Theta+\sigma Z)\big]=\mathbb E_{\xi,Z'}\big[F(\xi,Z')\big],$$
where $Z'=\Theta+\sigma Z \sim \mathcal N(\Theta,\sigma^2 I_{mn})$.
Under this change of variables, $\Theta$ enters through the randomness,
and the LR estimator \eqref{eq:lr:1} can be applied, which yields
the standard one-point
estimator
\[
\widehat g_{\mathrm{LR\text{-}1pt}}(\xi,Z;\Theta)
=
F(\xi,\Theta+\sigma Z)\,\frac{Z}{\sigma}.
\]
In practice, an antithetic two-point form is often used to reduce variance,
leading to
\[
\widehat g_{\mathrm{LR\text{-}2pt}}(\xi,Z;\Theta)
=
\frac{F(\xi,\Theta+\sigma Z)-F(\xi,\Theta-\sigma Z)}{2\sigma} Z.
\]
Two canonical instances are finite-difference methods and
Simultaneous Perturbation Stochastic Approximation (SPSA), obtained by choosing
different distributions for $Z$; see
\cite{spall1992multivariate,spall1997one} and \cite{scheinberg2022finite} for further details.
\end{example}

In general, unlike IPA, LR does not require differentiating through the model or the sampling path, and thus naturally applies to discrete random variables.
Moreover, since LR methods, especially ZO methods, do not require storing gradients or intermediate activations for BP, it is relatively
memory-efficient. Its main drawback, however, is typically a higher estimator variance.

\section{Low-Rank Gradient Estimation with Lazy Updates}\label{sec4}

As discussed in the previous section, we introduced two major
gradient-estimation paradigms used in LLM training: IPA and LR.
In this section, we propose a low-rank gradient estimator,
together with a lazy-update gradient descent algorithm, applicable to both the IPA and LR families.
Our goal is to substantially reduce memory usage while
maintaining low estimation variance.
The key motivation stems from a widely observed phenomenon
in LLM training, as optimization progresses, the gradient matrices $\nabla_\Theta F(\xi,\Theta)$ often become effectively low-rank
\citep{zhao2024galore}.
As a result, despite their high ambient dimension, full gradient matrices can often be well approximated by
low-rank representations.
Motivated by this observation, it is unnecessary to estimate
or store every entry of the full gradient matrix.
Instead, one can construct low-rank gradient estimators that
capture the dominant descent directions while substantially
reducing both memory requirements and estimation variance.
Importantly, our method does not rely on any explicit low-rank
assumption on the true gradient.

\subsection{Design of Low-Rank Stochastic Gradient Estimators}
Our goal is to build a valid gradient estimator while avoiding the cost of forming and storing a full gradient matrix. Validity is
captured by unbiasedness conditions, which ensure that the estimator points in the correct descent
direction on average.
To make this precise, we distinguish two levels of unbiasedness.

\begin{definition}\label{def1}
A gradient estimator $\hat{g}(\Theta)$ is called \textbf{weakly unbiased} if $\mathbb{E}[\hat{g}(\Theta)] = c \cdot g(\Theta)$ for some constant scalar $c > 0$; it is called \textbf{strongly unbiased} if $c = 1$.
\end{definition}

Weak unbiasedness means that the estimator differs from the true gradient only by a positive scalar
factor $c$ in expectation. In high dimensions, this is often sufficient because it preserves the
direction of the gradient: the expected update still points along true gradient $ g(\Theta)$, up to
rescaling. We introduce this notion because LLM training typically cares about descent
directions rather than absolute magnitudes, and a constant factor can usually be absorbed into the
stepsize or learned by an adaptive optimizer. Strong unbiasedness is the special case $c=1$, so
weak unbiasedness strictly generalizes the classical notion. Under standard regularity conditions,
the classical IPA estimator \eqref{eq:ipa:1} and the classical LR estimator \eqref{eq:lr:1} are
strongly unbiased \citep{mohamed2020monte}.

We first propose two new low-rank gradient estimators, corresponding to classical IPA and LR, respectively. The key idea is to avoid forming the full
$m\times n$ gradient by optimizing only within a randomly chosen rank-$r$ subspace.
Specifically, as illustrated in the first step of Figure~\ref{fig:lowrank_schematic}, we draw a random projection matrix $V\in\mathbb R^{n\times r}$ with $r\ll n$ and introduce
an auxiliary variable $B\in\mathbb R^{m\times r}$.
We reparameterize the model parameter as
$\Theta \mapsto \Theta + B V^\top$ and compute derivatives with respect to
the low-dimensional variable $B$.
The resulting gradient in $B$ is then lifted back to an $m\times n$ matrix
by multiplying $V^\top$ to update the full parameter $\Theta$, as illustrated
in Figure~\ref{fig:lowrank_schematic}.
This construction keeps every update low-rank by design,
which leads to the following two low-rank gradient estimators.
\begin{definition}[Low-rank Gradient Estimators]
Given a random projection matrix $V\in\mathbb R^{n\times r}$ with low rank $r\ll \min\{m,n\}$ and an auxiliary variable $B\in\mathbb R^{m\times r}$, we define the following low-rank stochastic gradient estimators.

\begin{enumerate}
    \item[$\bullet$] \textbf{LowRank-IPA gradient estimator:} 
\begin{equation}\label{eq_lr_ipa}
     \hat{g}_{\text{LowRank\text{-}IPA}}(\xi,V,\Theta):=\nabla_BF(\xi,\Theta+BV^{\top})\bigg|_{B=\textbf{0}}V^{\top},
 \end{equation}
  where
       $\xi\sim p(\xi)$. 
       \item[$\bullet$] \textbf{LowRank-LR gradient estimator:} \begin{equation}\label{eq_lr_lr}
    \hat{g}_{\text{LowRank\text{-}LR}}(\xi,V,\Theta):=F(\xi)\nabla_B{\log p(\xi;\Theta+BV^{\top})}\bigg|_{B=\textbf{0}}V^{\top},
       \end{equation}
      where $\xi\sim p(\xi;\Theta)$.
\end{enumerate}
\end{definition}

\begin{figure}[htbp]
    \centering
    \includegraphics[width=0.9\textwidth]{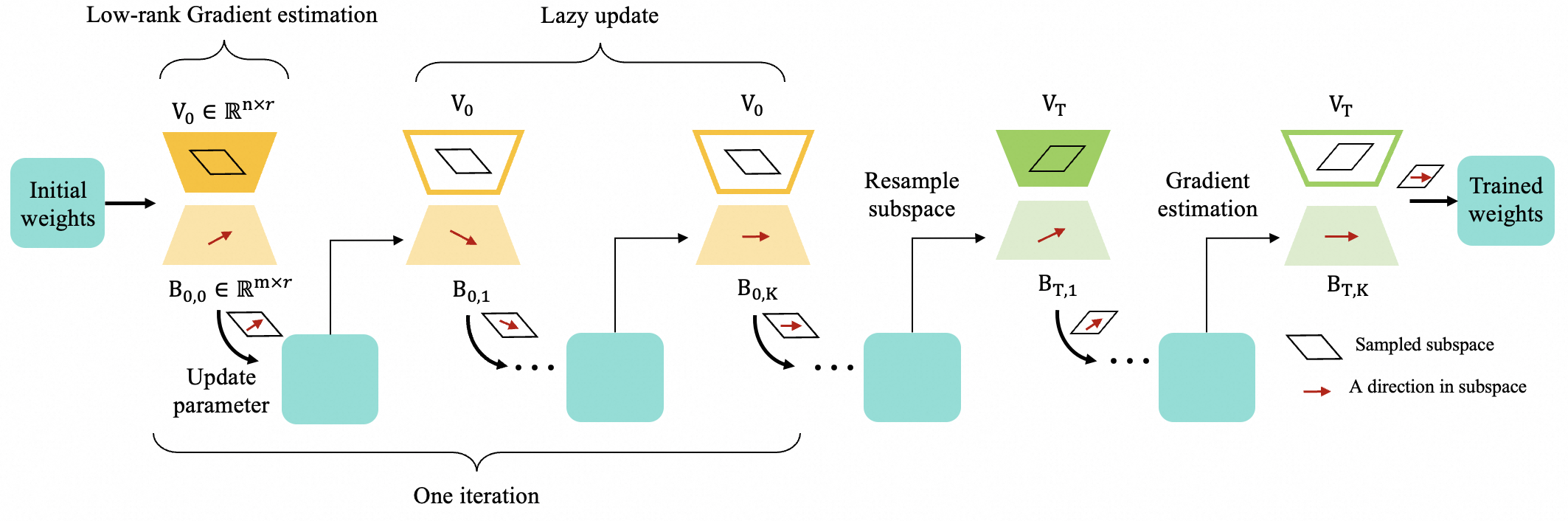}
   
\caption{Illustration of the proposed low-rank gradient estimator and the lazy-update
gradient descent framework. The gradient estimator with respect to $\Theta$ is computed via
a rank-$r$ reparameterization using an auxiliary variable $B$ and a randomly sampled projection
matrix $V$ (yellow trapezoids), and then lifted back to the original parameter
space by multiplying $V^\top$ (blue block). In the lazy-update scheme, the same projection direction
$V_0$ is reused for $K$ inner steps before switching to a new projection direction $V_1$ (green trapezoids).}\label{fig:lowrank_schematic}
\end{figure}

Intuitively, using the proposed gradient estimators can be viewed as a form of randomized subspace optimization.
At each step, we randomly select a low-dimensional subspace $V \in \mathbb{R}^{n\times r}$ from the decision space $\mathbb{R}^{m\times n}$, where $r \ll \min\{m,n\}$. Then we compute the directional derivative of the function $F$ along this subspace $V$. The gradient is evaluated at $B=0$ to ensure that the estimator corresponds to the gradient at the current value of $\Theta$. This way, the optimization is essentially conducted along a much lower-dimensional ($m\times r$) subspace.

The selection, or potentially the optimization, of the random projection matrix $V$ is central to achieving both memory savings and variance reduction. In addition to being of low rank, $V$
 must satisfy specific constraints to ensure the (weak) unbiasedness of the estimator as mentioned in Definition \ref{def1}. We introduce the following admissible class of distributions for the random matrix $V$.
 \begin{definition}\label{def3}
     We define the \textbf{space of admissible projection distributions} $\mathcal{D}$ as
\begin{equation}\label{dist}
    \mathcal{D} := \{\mathbb{P}\in \mathcal{P}(\mathbb{R}^{n\times r}):V\in\mathbb{R}^{n\times r}, \ \ \mathbb{E}_{V\sim\mathbb{P}}[VV^\top]=cI_n\},
\end{equation}
where \(\mathcal{P}(\mathbb{R}^{n\times r})\) denotes the set of all probability measures on \(\mathbb{R}^{n\times r}\), the first condition $V\in\mathbb{R}^{n\times r}$ is a low-rank constraint, and the second condition is an isotropy condition that ensures weak unbiasedness. We define the subspace projection matrix $P$ as $P:=VV^\top$.
 \end{definition}
 
The next theorem proves the unbiasedness of our low-rank stochastic gradient estimators if we sample the random projection $V$ from a distribution in $\mathcal{D}$. 
\begin{theorem}[Unbiasedness of the low-rank estimators]\label{thm:unbiasedness}
Fix a parameter block $\Theta\in\mathbb R^{m\times n}$ and let $V\in\mathbb R^{n\times r}$ be a random
projection matrix independent of the data randomness $\xi$.
The random subspace projection $V$ is sampled from a distribution in $\mathcal{D}$ in Equation \eqref{dist}. Assume the standard requirements for the unbiasedness of IPA and LR estimators hold \citep{mohamed2020monte}, then the LowRank-IPA gradient estimator \eqref{eq_lr_ipa} and LowRank-LR gradient estimator \eqref{eq_lr_lr} are weakly unbiased:
\begin{equation*}
    \mathbb{E}_{\xi,V}[\hat{g}_{\mathrm{LowRank\text{-}IPA}}(\xi,V,\Theta)] = \mathbb{E}_{\xi,V}[\hat{g}_{\mathrm{LowRank\text{-}LR}}(\xi,V,\Theta)] = c \cdot g(\Theta).
\end{equation*}
In particular, if $c=1$ then $\hat g_{\mathrm{LowRank\text{-}IPA}}$ and $\hat g_{\mathrm{LowRank\text{-}LR}}$  are strongly unbiased.
\end{theorem}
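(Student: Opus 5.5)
The plan is to compute each estimator explicitly in terms of the full-space quantities and then apply the tower property of conditional expectation, using the independence of $V$ and $\xi$. The first step is a chain-rule identity. For any differentiable $h:\mathbb R^{m\times n}\to\mathbb R$, the map $B\mapsto h(\Theta+BV^\top)$ has derivative at $B=\mathbf 0$ in direction $\Delta\in\mathbb R^{m\times r}$ equal to $\langle \nabla h(\Theta),\Delta V^\top\rangle=\mathrm{tr}(\nabla h(\Theta)^\top\Delta V^\top)=\langle \nabla h(\Theta)V,\Delta\rangle$. Hence $\nabla_B h(\Theta+BV^\top)|_{B=\mathbf 0}=\nabla h(\Theta)\,V$.

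Applying this with $h=F(\xi,\cdot)$ gives
\[
\hat g_{\mathrm{LowRank\text{-}IPA}}(\xi,V,\Theta)=\nabla_\Theta F(\xi,\Theta)\,VV^\top=\widehat g_{\mathrm{IPA}}(\xi,\Theta)\,VV^\top .
\]
Applying it with $h=\log p(\xi;\cdot)$ gives
\[
\hat g_{\mathrm{LowRank\text{-}LR}}(\xi,V,\Theta)=F(\xi)\nabla_\Theta\log p(\xi;\Theta)\,VV^\top=\widehat g_{\mathrm{LR}}(\xi,\Theta)\,VV^\top,
\]
where $\widehat g_{\mathrm{LR}}$ is taken with $b=0$. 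Both low-rank estimators therefore have the common form $\widehat g(\xi,\Theta)P$ with $P=VV^\top$, and $\widehat g$ is the classical estimator in \eqref{eq:ipa:1} or \eqref{eq:lr:1}.

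Next comes the expectation step. In the LR case $\xi\sim p(\cdot;\Theta)$ is sampled at the current $\Theta$ and $V$ is drawn independently. The joint law therefore factorizes, and by Fubini,
\[
\mathbb E_{\xi,V}[\widehat g(\xi,\Theta)VV^\top]=\mathbb E_\xi[\widehat g(\xi,\Theta)]\,\mathbb E_V[VV^\top].
\]
Matrix multiplication by a fixed $V$ is linear, and $\mathbb E[\widehat g\,VV^\top\mid V]=\mathbb E_\xi[\widehat g]\,VV^\top$. The assumed standard conditions give $\mathbb E_\xi[\widehat g(\xi,\Theta)]=g(\Theta)$, and the admissibility condition in Definition \ref{def3} gives $\mathbb E_V[VV^\top]=cI_n$. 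Together these yield $c\,g(\Theta)$. The case $c=1$ is immediate.

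The main obstacle is integrability. Fubini requires $\mathbb E\|\widehat g\,VV^\top\|_F<\infty$. This is bounded by $\mathbb E_\xi\|\widehat g\|_F\cdot\mathbb E_V\|VV^\top\|_2$, which is finite because $\|VV^\top\|_2\le \mathrm{tr}(VV^\top)$ and $\mathbb E\,\mathrm{tr}(VV^\top)=cn<\infty$. Integrability of $\widehat g$ is part of the standard IPA/LR conditions. I would also state explicitly that $c>0$ holds automatically: $\mathbb E[VV^\top]\succeq0$, and $c=0$ would force $V=0$ almost surely, which I treat as excluded.
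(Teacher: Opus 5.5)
Your proposal is correct and follows the paper's proof essentially step for step. The chain rule reduces each low-rank estimator to $\widehat g(\xi,\Theta)\,P$ with $P=VV^\top$, and independence of $\xi$ and $V$, together with $\mathbb E[P]=cI_n$ and the classical unbiasedness of IPA/LR, then gives $c\,g(\Theta)$. Your Fubini integrability bound via $\|VV^\top\|_2\le\operatorname{tr}(VV^\top)$ and your remark that $c>0$ is automatic once $V=0$ a.s.\ is excluded are useful details that the paper leaves implicit.
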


Theorem \ref{thm:unbiasedness} isolates the only requirement imposed on the random subspace: the average projector
$\mathbb E[VV^\top]$ must be proportional to the identity matrix. This condition ensures that, on average,
the projection does not favor any coordinate direction, so the low-rank update preserves the correct
descent direction up to a scalar factor. As a result, the design problem of choosing $V$
is cleanly separated from the underlying gradient-estimation family (IPA or LR): once the classical
estimator is unbiased, projecting it through an isotropic rank-$r$ subspace keeps it unbiased
(weakly for general $c$, strongly when $c=1$). Definition \ref{def3} provides the distributional constraints for the admissible projection distributions. In the next section, we will choose an optimal distribution from the admissible distributional class $\mathcal{D}$.

To provide a more concrete understanding of the proposed low-rank estimators, 
we instantiate them for a standard multi-layer feedforward neural network in the following Example~\ref{ex3}. 
This example derives the low-rank counterparts of the earlier IPA Example~\ref{ex:ipa} and the LR Example~\ref{ex:zo} (which focuses on the ZO special case). 
The same derivation extends readily to other neural network architectures 
(e.g., RNN, CNN, Transformers).

\begin{example}[Low-rank gradient estimators of feed-forward neural networks]\label{ex3}
Consider an $L$-layer feedforward neural network defined recursively by $z_\ell:=W_\ell x_{\ell-1}$ and $x_\ell := \psi_\ell(z_\ell)$ for $\ell=1,\dots,L$, where $W_\ell$ is the weight matrix at layer $\ell$, $\psi_\ell(\cdot)$ is the activation function, and $x_\ell$ denotes the activation at layer $\ell$.
We focus on training the weight matrix at the $i$-th layer, denoted by $\Theta := W_i \in \mathbb R^{m\times n}$. The network output is $\textsf{NN}_\Theta(x_0) = \psi_L\big(W_L \cdots \psi_{i+1}(W_{i+1} \cdot \psi_i(\Theta x_{i-1})) \cdots \big)$, where $x_0$ is the input and $x_{i-1}\in\mathbb{R}^n$. The final loss function is $F(\textsf{NN}_\Theta(x_0),\xi)$.

\begin{itemize}
    \item[(i)] LowRank-IPA estimator: given $V\in\mathbb R^{n\times r}$, 
\begin{equation}\label{eq:lr-ipa2}
\widehat g_{\mathrm{LowRank\text{-}IPA}}(\xi,V,\Theta)
=
\nabla_{z_i} F(\textsf{NN}_\Theta(x_0),\xi)\,(x_{i-1}^\top V)\,V^\top
\in \mathbb R^{m\times n},
\end{equation}
where $z_i := \Theta x_{i-1}\in\mathbb R^{m}$ and $\nabla_{z_i}F(\textsf{NN}_\Theta(x_0),\xi)$ is obtained via backpropagation, applying the chain rule to propagate the gradient from the final output layer backward to $z_i$.

\item[(ii)]  LowRank-LR estimator (ZO case):  following the ZO construction in Example~\ref{ex:zo}, we introduce a rank-$r$ perturbation $\sigma ZV^\top$ to $\Theta$, where $V\in\mathbb R^{n\times r}$, $Z\sim\mathcal N(0,I_{mr})$, and $\sigma>0$. The resulting low-rank one-point and two-point ZO estimator are given by
\begin{equation*}\label{eq:lr-zo-1pt}
\widehat g_{\mathrm{LowRank\text{-}LR\text{-}1pt}}(\xi,Z,V;\Theta)
=
F\big(\textsf{NN}_{\Theta+\sigma ZV^\top}(x_0),\xi\big)\,\frac{Z}{\sigma}\,V^\top
\in \mathbb R^{m\times n},
\end{equation*}
\begin{equation*}\label{eq:lr-zo-2pt}
\widehat g_{\mathrm{LowRank\text{-}LR\text{-}2pt}}(\xi,Z,V;\Theta)
=
\frac{
F\big(\textsf{NN}_{\Theta+\sigma ZV^\top}(x_0),\xi\big)
-
F\big(\textsf{NN}_{\Theta-\sigma ZV^\top}(x_0),\xi\big)
}{2\sigma}\,ZV^\top
\in \mathbb R^{m\times n}.
\end{equation*}

\end{itemize}

\end{example}

\subsection{Gradient Descent with Lazy Update}
 In this subsection, we introduce another essential strategy that complements the low-rank estimator: the \emph{lazy update}.
The one-step low-rank estimators in~\eqref{eq_lr_ipa}--\eqref{eq_lr_lr} resample a fresh random
subspace $V_t$ at every iteration. While conceptually simple, repeatedly sampling $V_t$
introduces additional randomness on top of the intrinsic stochasticity of the optimization
problem itself. Such frequent resampling may lead to excessive variance and memory overhead.

To address this issue, we adopt a \textit{lazy update} strategy implemented through a
two-level (outer--inner) stochastic gradient descent framework. An illustration of the lazy-update strategy is given in Figure~\ref{fig:lowrank_schematic}. At the beginning of each outer
iteration $t$, we sample one projection matrix $V_t$ and keep it fixed for the next $K$ inner
steps. During these $K$ steps, we compute low-rank gradients using the same $V_t$ and update the
parameters along the inner iterates
$\Theta_{(t,0)}, \Theta_{(t,1)}, \ldots, \Theta_{(t,K)}$.
Specifically, at inner step $k=0,1,\ldots,K-1$, let $\xi_{(t,k)}$ denote the randomness. Using the low-rank estimator in~\eqref{eq_lr_ipa} and \eqref{eq_lr_lr},
the inner-loop gradient descent of $\Theta$ takes the following form:
$$\Theta_{(t,k+1)}=
\Theta_{(t,k)}
-
\alpha_t
\nabla_B F(\xi_{(t,k)}, \Theta_{(t,k)} + B V_t^\top)\big|_{B=0} V_t^\top,$$
$$\Theta_{(t,k+1)}=
\Theta_{(t,k)}
-
\alpha_t
F(\xi_{(t,k)})
\nabla_B \log p(\xi_{(t,k)};\Theta_{(t,k)} + B V_t^\top)\big|_{B=0}
 V_t^\top,$$
 where the first one represents using the IPA gradient estimator and the second one represents using the LR gradient estimator.
After completing the $K$ inner updates, we set $\Theta_{t+1} = \Theta_{(t,K)}$,
and proceed to the next outer iteration by sampling a new projection matrix
$V_{t+1}$.  Intuitively, $K$ controls an exploration--exploitation tradeoff: a larger $K$ allows more
thorough optimization inside a sampled subspace, while a smaller $K$ switches subspaces more frequently,
encouraging exploration of different subspaces.

\begin{algorithm}[htbp]
\small
   \caption{(Low-rank gradient descent with lazy update)}
   \label{alg_lazyuodate}
   \begin{algorithmic}[1]
   \State Input: loss function $F(\xi,\Theta)$, initial iterate $\Theta_0$,  number of outer iterations $T$, the step-sizes $\alpha_t$.
   \For {$t \text{ in } 0: T-1$}
   \State Set $B_{(t,0)} = 0$, sample a low-rank matrix $V_t$ (will be introduced in Section \ref{sec5}) as the random subspace.
   \For {$k \text{ in } 0: K-1$}
   \State   Update the iteration for $\Theta$:   $\Theta_{(t,k)}=\Theta_t + B_{(t,k)}V_t^{\top}.$
   \State Fetch a batch of data $\xi_{(t,k)}$ and update the low-rank subspace gradient estimator $B_{(t,k)}$ by IPA or LR method:
   \begin{equation}[LowRank-IPA]\label{lazy_lowrank_ipa}
    \quad  B_{(t,k+1)}  = B_{(t,k)} - \alpha_t \nabla_B F(\xi_{(t,k)},\Theta_t+BV_t^{\top})\big|_{B=B_{(t,k)}}. 
   \end{equation}
   \begin{equation}[LowRank-LR]\label{lazy_lowrank_lr}
       \quad B_{(t,k+1)}  =B_{(t,k)}-\alpha_tF(\xi_{(t,k)})\nabla_B{\log p(\xi_{(t,k)};\Theta_t+BV_t^{\top})}\big|_{B=B_{(t,k)}}.
   \end{equation}
   \EndFor
   \State Update the iteration for $\Theta$: 
    $\Theta_{t+1} = \Theta_t + B_{(t,K)} V_t^{\top}$.

   \EndFor
   \State Output: $\Theta_{T}$.
   \end{algorithmic}
\end{algorithm}


In implementation, rather than directly updating the full parameter $\Theta$,
we adopt an equivalent formulation that optimizes a low-dimensional auxiliary
variable $B \in \mathbb{R}^{m \times r}$ within the subspace spanned by $V_t$,
and then maps the accumulated increment back to the original full parameter space. The pseudocode is presented in Algorithm~\ref{alg_lazyuodate}. At the beginning of each outer
iteration $t$, we draw a rank-$r$ projection matrix $V_t$ and initialize
the auxiliary variable by setting $B_{(t,0)} = \mathbf{0}$.
The inner loop maintains the parameterization
\begin{equation*}\label{eq:repara}
    \Theta_{(t,k)} = \Theta_t + B_{(t,k)} V_t^\top,
\qquad k = 0,1,\ldots,K-1,
\end{equation*}
so that all inner updates modify $\Theta_t$ only through the fixed projection
matrix $V_t$.
At inner step $k$, we update $B_{(t,k)}$ using either the IPA-type or
LR-type gradient descent rule (see \eqref{lazy_lowrank_ipa} and \eqref{lazy_lowrank_lr} in Algorithm~\ref{alg_lazyuodate}),
i.e., we perform a stochastic descent step with respect to $B$ while
evaluating the objective at the lifted parameter $\Theta_{(t,k)}$. Essentially, with $V_t$ and $\Theta_t$ fixed, the inner loop performs
stochastic gradient descent on the following low-dimensional subproblem:
\begin{equation}\label{eq_subproblem}
\min_{B \in \mathbb{R}^{m\times r}}
\;\mathbb{E}\!\left[
F\big(\xi, \Theta_t + B V_t^\top\big)
\right].
\end{equation}
After $K$ inner steps, the algorithm performs a {single} outer update of the full parameter by
\[
\Theta_{t+1}=\Theta_t + B_{(t,K)}V_t^\top,
\]
where $ B_{(t,K)} = -\alpha_t\sum_{s=0}^{K-1} \nabla_B F(\xi_{(t,s)},\Theta_t+BV_t^{\top})|_{B=B_{(t,s)}}$ can be written as a sum of $K$ low-rank increments taken inside the low-rank subspace spanned by $V_t$ as Equations \eqref{lazy_lowrank_ipa} or \eqref{lazy_lowrank_lr}. A new projection matrix $V_{t+1}$ is then sampled for the next outer iteration.

Combining with the lazy-update strategy, our proposed low-rank estimators offer significant memory savings in LLM training.  
The total memory cost can be decomposed into three components: memory for optimizer states (e.g., the first- and second-moment estimates in Adam-type optimizers \citep{kingma2014adam}), memory for gradients, and memory for activations.
First, the reduced dimensionality of the subproblem~\eqref{eq_subproblem} substantially decreases the memory required for both optimizer states and gradients.
Specifically, the memory needed for storing the gradient and the corresponding optimizer states is reduced from the original $m \times n$ to $m \times r$, which is significantly more efficient when $r \ll n$.
Second, the memory required to store intermediate activations is also greatly reduced.
Take the IPA case in Equation~\eqref{eq:lr-ipa2} as an example. 
In standard backpropagation, computing the gradient with respect to the weight matrix
requires storing the activation $x_{i-1}\in\mathbb{R}^{n}$ during the forward pass process.
In contrast, the low-rank estimator only needs to store the projected activation
$x_{i-1}^\top V \in \mathbb{R}^{r}$, which has a much lower dimension and thus leads to a substantial reduction in memory consumption.

\section{The Optimal Distribution for Random Subspace Projection}\label{sec5}

In this section, we optimize the distribution of $V$ over the admissible class $\mathcal{D}$, to minimize the MSE of the gradient estimator. This defines an infinite-dimensional optimization problem over a space of probability distributions, which is generally challenging. Our analysis is general and applies irrespective of the gradient estimation
family, including both IPA and LR methods.

We use the MSE to evaluate the quality of our gradient estimator. The estimator involves two independent sources of randomness: the data-driven noise, represented by $\xi$, and the randomness introduced by the subspace projection, represented by $V$ (with the subspace projection matrix defined as $P=VV^\top$). Accordingly, the MSE admits a natural decomposition into three components: (i) the intrinsic variance of the underlying IPA/LR estimator, (ii) the additional variance induced by the random projection, and (iii) a scalar bias term, which also originates from the random projection: since our estimator is only weakly unbiased, even a scalar bias increases the MSE.

\begin{proposition}\label{propMSE}
    The MSE of our low-rank gradient estimators $\hat g_{\mathrm{LowRank-IPA/LR}}$ defined in Equations \eqref{eq_lr_ipa} and \eqref{eq_lr_lr} satisfies
    \begin{equation}\label{MSE}
    \begin{aligned}
        \mathrm{MSE} &:=  \mathbb{E}_V\mathbb{E}_{\xi}[\Vert \hat g_{\mathrm{LowRank\text{-}IPA/LR}}(\xi,\Theta)-g(\Theta)\Vert_{F}^2] =  \underbrace{\operatorname{tr}\!\bigl(\Sigma_{\xi}\,\mathbb{E}[P^{2}]\bigr)}
              _{\text{IPA/LR variance}}
+\underbrace{\operatorname{tr}\!\bigl(\Sigma_{\Theta}\,
            \mathbb{E}[P^{2}-c^2I_n]\bigr)}
              _{\text{random projection variance}}+
  \underbrace{(1-c)^2\operatorname{tr}\Sigma_{\Theta}}_{\text{scalar bias}},
    \end{aligned}
    \end{equation}
    where $\Sigma_{\xi}:=\mathbb{E}_{\xi}[(\hat g_{\mathrm{IPA/LR}}(\xi,\Theta)- g(\Theta))^{\top}(\hat g_{\mathrm{IPA/LR}}(\xi,\Theta)- g(\Theta))]$, $\Sigma_{\Theta}:= g(\Theta)^{\top} g(\Theta)$, and $\hat g_{\mathrm{IPA/LR}}$ is classical IPA/LR estimators defined in Equations \eqref{eq:ipa:1} and \eqref{eq:lr:1}. 
\end{proposition}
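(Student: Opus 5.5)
The plan is to write the low-rank estimator as the classical estimator right-multiplied by the projector, and then expand the squared Frobenius norm. By the chain rule, $\nabla_B F(\xi,\Theta+BV^\top)|_{B=0} = \nabla_\Theta F(\xi,\Theta)\,V$. The same identity holds for $\log p(\xi;\Theta+BV^\top)$. Hence in both the IPA and LR cases
\[
\hat g_{\mathrm{LowRank\text{-}IPA/LR}}(\xi,V,\Theta)=\hat g(\xi)\,P,\qquad P=VV^\top,
\]
where $\hat g(\xi):=\hat g_{\mathrm{IPA/LR}}(\xi,\Theta)$ is the classical estimator. I then write $\hat g(\xi)=g+E(\xi)$, where $g:=g(\Theta)$ and $\mathbb E_\xi[E(\xi)]=0$ by the strong unbiasedness of the classical estimators (the same standard assumptions as in Theorem~\ref{thm:unbiasedness}). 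This gives the decomposition
\[
\hat g(\xi)P-g = E(\xi)P + g(P-cI_n) + (c-1)g .
\]

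Next I expand the squared Frobenius norm of this sum of three terms, using $\|A\|_F^2=\operatorname{tr}(A^\top A)$ and the independence of $V$ and $\xi$.
\begin{itemize}
\item The $E$-terms cross the others with zero expectation, because $\mathbb E_\xi E=0$ and $V$ is independent of $\xi$.
\item The pure $E$ term is $\mathbb E\operatorname{tr}(PE^\top EP)=\operatorname{tr}(\Sigma_\xi\mathbb E[P^2])$, by cyclicity and symmetry of $P$. This is the IPA/LR variance term.
\item The pure $g(P-cI)$ term is $\operatorname{tr}(\Sigma_\Theta\,\mathbb E[(P-cI)^2])$. Since $\mathbb E[P]=cI_n$ for a distribution in $\mathcal D$, $\mathbb E[(P-cI)^2]=\mathbb E[P^2]-2c\,\mathbb E[P]+c^2I=\mathbb E[P^2]-c^2I_n$. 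This is the projection variance term.
\item The cross term between $g(P-cI)$ and $(c-1)g$ is proportional to $\operatorname{tr}(\Sigma_\Theta\,\mathbb E[P-cI])=0$, again because $\mathbb E[P]=cI_n$.
\item The last term is $(1-c)^2\operatorname{tr}(g^\top g)=(1-c)^2\operatorname{tr}\Sigma_\Theta$, the scalar bias.
\end{itemize}
Summing these gives \eqref{MSE}.

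The main step needing care is the chain-rule identification $\nabla_B(\cdot)|_{B=0}=\nabla_\Theta(\cdot)V$, which puts both the LR and IPA estimators into the common form $\hat g P$. One must also check that the trace manipulations respect the right-multiplication structure. Since $P$ multiplies on the right, the relevant second moments are the $n\times n$ matrices $\Sigma_\xi=\mathbb E[E^\top E]$ and $\Sigma_\Theta=g^\top g$, and we need $\operatorname{tr}(P A P)=\operatorname{tr}(AP^2)$. Finite second moments of $\hat g$ and of $P$ are assumed so that the expectations are well defined and Fubini applies.
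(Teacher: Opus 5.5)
Your proposal is correct and follows essentially the paper's argument: write the estimator as $\hat g_{\mathrm{IPA/LR}}P$, center it at $g(\Theta)$, and kill the cross terms using $\mathbb{E}_\xi[\hat g_{\mathrm{IPA/LR}}-g(\Theta)]=0$ and the independence of $V$ and $\xi$, then evaluate the quadratic terms as traces. The only cosmetic difference is that you split $g(P-I_n)=g(P-cI_n)+(c-1)g$ up front, so that $\mathbb{E}[P]=cI_n$ kills an extra cross term, whereas the paper expands $\mathbb{E}[(P-I_n)^2]$ and rewrites it at the end as $\mathbb{E}[P^2-c^2I_n]+(1-c)^2I_n$.
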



In Proposition \ref{propMSE}, $\operatorname{tr}(\Sigma_{\xi})$ is the intrinsic statistical variance of the
underlying IPA/LR estimator before projection, while
$\operatorname{tr}(\Sigma_{\Theta})=\|g(\Theta)\|_F^2$ measures the squared Frobenius norm of the
true gradient $g(\Theta)$.
The decomposition in~\eqref{MSE} separates three effects.
The first term shows that $\Sigma_\xi$ is weighted by the second moment of the random projector, which is exactly where the projection distribution matters. For fixed intrinsic noise $\Sigma_\xi$, the data-induced variance scales with the second moment of random projection.
The second term is the additional variance
purely due to the randomness of the projection.
The third term is a scalar bias penalty:
since the estimator is only weakly unbiased with $\mathbb E[P]=cI_n$, any mismatch $c\neq 1$
shrinks or inflates the gradient on average and increases the MSE proportionally to $\|g(\Theta)\|_F^2$.

It is convenient to regroup the three terms as a single quadratic form.
Using $(P-I_n)^2=P^2-2P+I_n$ and $\mathbb E[P]=cI_n$, we obtain
\[
\mathrm{MSE}
=\operatorname{tr}\!\Big(\Sigma_{\xi}\,\mathbb E[P^2]\Big)
+\operatorname{tr}\!\Big(\Sigma_{\Theta}\,\mathbb E[(P-I_n)^2]\Big)
=\operatorname{tr}\!\Big((\Sigma_{\xi}+\Sigma_{\Theta})\,\mathbb E[P^2]\Big)
+(1-2c)\operatorname{tr}(\Sigma_{\Theta}).
\]
This representation highlights the central design objective: for fixed $c$ and rank $r$, minimizing the MSE
amounts to minimizing the weighted second moment $\operatorname{tr}\big((\Sigma_{\xi}+\Sigma_{\Theta})\,\mathbb E[P^2]\big)$.
In particular, $\Sigma:=\Sigma_{\xi}+\Sigma_{\Theta}$ acts as an instance weight that trades off
how much we care about suppressing data noise versus preserving the signal $g(\Theta)$, and
$\mathbb E[P^2]$ is the sole place where the projection distribution enters.
In the sequel, we study both the instance-dependent setting (when $\Sigma$ is known or estimable)
and the instance-independent setting (when $\Sigma$ is unknown and we minimize an upper bound),
with $\operatorname{tr}(\Sigma\,\mathbb E[P^2])$ as the key term to control.

\subsection{Instance-Independent Optimization}
We first consider the instance-independent setting, where no prior information is available about \(
\Sigma=\Sigma_{\xi}+\Sigma_{\Theta}.
\)
In this case, it is natural to optimize a worst-case instance-independent upper bound of the MSE. Using the spectral–Frobenius inequality \citep{fang1994inequalities}, we have
\begin{equation}\label{upper}
    \text{tr}(\Sigma\mathbb{E}P^2)\le \Vert\Sigma\Vert_2\text{tr}(\mathbb{E}P^2).
\end{equation}
Therefore, if $\Sigma$ is unknown, a principled surrogate is to minimize
$\operatorname{tr}(\mathbb E[P^2])$, which depends only on the projection law and
controls the MSE uniformly over all problem instances with the same spectral scale
$\|\Sigma\|_2$. This leads to the following distributional optimization problem:
\begin{equation}\label{OPT1}
    \min_{\mathcal{D}} \text{tr}(\mathbb{E}_V[P^2]), \quad s.t. \ P=VV^{\top},\ \mathbb{E}P=cI_n.
\end{equation}
The constraint $P=VV^\top$ enforces the rank-$r$ structure, while $\mathbb E[P]=cI_n$
imposes (weak) unbiasedness by making the projection isotropic in expectation. Theorem~\ref{thm2} characterizes the optimal solution for the random projection matrix $P$.
\begin{theorem}[Optimal instance-independent low-rank random projector]\label{thm2}
    The solution of the constrained distributional optimization problem (\ref{OPT1}) is 
    \begin{equation*}
        \min_{\mathcal{D}} \mathrm{tr}(\mathbb{E}_V[P^2]) = \frac{n^2c^2}{r},
    \end{equation*}
and the equality holds if and only if the distribution of $V$ satisfies $V^{\top}V=\frac{cn}{r}I_r$ almost surely. 
 \end{theorem}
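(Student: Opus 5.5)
The plan is to reduce $\mathrm{tr}(P^2)$ to a statement about the nonzero eigenvalues of $P$, then apply Cauchy--Schwarz pointwise and Jensen in expectation, with the constraint $\mathbb{E}P=cI_n$ pinning down $\mathbb{E}\,\mathrm{tr}(P)$.

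\textbf{Eigenvalue reduction.} Write $P=VV^\top$ with $V\in\mathbb{R}^{n\times r}$. Then $P$ is positive semidefinite of rank at most $r$. Its nonzero eigenvalues $\mu_1,\dots,\mu_r\ge 0$ (padding with zeros if needed) coincide with the eigenvalues of the $r\times r$ Gram matrix $G:=V^\top V$. Hence
\[
\mathrm{tr}(P)=\mathrm{tr}(G)=\sum_{i=1}^r\mu_i,\qquad \mathrm{tr}(P^2)=\mathrm{tr}(G^2)=\sum_{i=1}^r\mu_i^2 .
\]

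\textbf{Pointwise bound.} Cauchy--Schwarz gives, almost surely,
\[
\mathrm{tr}(P^2)\ge \frac{1}{r}\big(\mathrm{tr}\,P\big)^2,
\]
with equality iff all $\mu_i$ are equal. Since $\mathrm{tr}(G^2)\ge \frac1r(\mathrm{tr}\,G)^2$ holds with equality exactly when $G$ is a scalar multiple of $I_r$, equality means $V^\top V=\lambda(V)I_r$ for some random scalar $\lambda(V)\ge 0$.

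\textbf{Expectation bound.} Take expectations and apply Jensen:
\[
\mathbb{E}\,\mathrm{tr}(P^2)\ge \frac1r\,\mathbb{E}\big[(\mathrm{tr}P)^2\big]\ge \frac1r\big(\mathbb{E}\,\mathrm{tr}P\big)^2 .
\]
The isotropy constraint gives $\mathbb{E}\,\mathrm{tr}P=\mathrm{tr}(cI_n)=cn$, so the right-hand side equals $n^2c^2/r$. The Jensen step is tight iff $\mathrm{tr}P$ is almost surely constant, i.e. $\mathrm{tr}P=cn$ a.s.

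\textbf{Equality characterization.} Combining the two equality conditions, equality holds iff $V^\top V=\lambda I_r$ with $r\lambda=cn$ almost surely, i.e. $V^\top V=\frac{cn}{r}I_r$ a.s. The converse is immediate: if $V^\top V=\frac{cn}{r}I_r$, then $\mathrm{tr}(P^2)=r\,(cn/r)^2=n^2c^2/r$ pointwise.

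\textbf{Attainment.} To show the minimum is actually achieved within $\mathcal{D}$, exhibit a feasible law. Take $V=\sqrt{cn/r}\,U$ with $U$ Haar-distributed on the Stiefel manifold $\{U\in\mathbb{R}^{n\times r}:U^\top U=I_r\}$. By rotation invariance, $\mathbb{E}[UU^\top]$ commutes with every orthogonal matrix, so it is a multiple of $I_n$; its trace is $r$, hence $\mathbb{E}[UU^\top]=\frac rn I_n$ and $\mathbb{E}P=cI_n$. A uniformly random $r$-subset of coordinate vectors, scaled the same way, also works.

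\textbf{Main obstacle.} The step needing the most care is the \emph{only if} direction: one must argue that both inequalities are tight simultaneously almost surely. Pointwise equality must hold a.s. because the gap is a nonnegative random variable with zero mean, and Jensen tightness forces a.s. constancy of $\mathrm{tr}P$. The degenerate case $\mu_i=0$ for some $i$ is automatically excluded, because all eigenvalues must equal $cn/r>0$.
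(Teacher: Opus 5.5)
Your proof is correct and follows the paper's argument essentially step for step: the pointwise Cauchy--Schwarz bound $\mathrm{tr}(P^2)\ge \frac{1}{r}(\mathrm{tr}\,P)^2$ on the at most $r$ nonzero eigenvalues, then Jensen with $\mathbb{E}\,\mathrm{tr}\,P=cn$, then simultaneous tightness of both inequalities to force $V^\top V=\frac{cn}{r}I_r$ almost surely. Your explicit attainment step (feasibility of the Haar--Stiefel law) and your treatment of the all-zero-eigenvalue case are small additions; the paper defers the first to Proposition~\ref{prop:isotropic} and leaves the second implicit.
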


This result provides a clean geometric interpretation: among all rank-$r$ random projectors
that are isotropic in expectation, the best projection is the one that spreads the projection weight
as evenly as possible across the $r$ directions. It requires
\(
V^\top V=\frac{cn}{r}I_r,
\)
meaning that the $r$ columns of $V$ are orthogonal and have the same length.
Equivalently, $P$ has rank $r$ and all of its nonzero eigenvalues are equal, so no direction
inside the chosen subspace is favored over another. As a result, the second-moment term
$\operatorname{tr}(\mathbb E[P^2])$ is minimized, which in turn minimizes the MSE upper bound. 
A similar projector structure appears in \citet{kozak2023zeroth} for vector-space optimization,
but its optimality was not explicitly characterized there. In our setting, we show that the same
structure is exactly the optimal choice for stochastic low-rank projections in matrix space
when $\Sigma$ is unknown.

Algorithms~\ref{alg:haar_stiefel_boxed}--\ref{alg:coord_axis_boxed} provide two practical ways to sample such an optimal random projection matrix $V\in\mathbb R^{n\times r}$.
Both algorithms follow the same high-level template: we first generate an $n\times r$ matrix $U$ whose
columns form an orthonormal set that satisfy the optimality condition in Theorem \ref{thm2}, and then rescale it by
$\alpha=\sqrt{cn/r}$ to obtain $V=\alpha U$. This rescaling ensures that the average projection strength
matches the weak-unbiasedness requirement $\mathbb E[VV^\top]=cI_n$.

\begin{algorithm}[htbp]
\small
\caption{Haar--Stiefel sampler}
\label{alg:haar_stiefel_boxed}
\hrule\vspace{4pt}
\begin{algorithmic}[1]
\Require integers $n>r$, constant $c>0$, $\alpha=\sqrt{cn/r}$.
\Ensure $V\in\mathbb{R}^{n\times r}$ and $P\in\mathbb{R}^{n\times n}$ with $P=VV^\top$.
\State Sample $G\in\mathbb{R}^{n\times r}$ with i.i.d.\ Gaussian entries $G_{ij}\sim\mathcal{N}(0,1)$.
\State Compute the thin QR factorization $G=QR$, where $Q\in\mathbb{R}^{n\times r}$ has orthonormal columns
      and $R\in\mathbb{R}^{r\times r}$ is upper triangular.
\State Remove the QR sign ambiguity:
      set $D\gets \operatorname{diag}(\operatorname{sgn}(\operatorname{diag}(R)))$, and let $U\gets QD$.
      \Comment{$U$ is Haar–Stiefel on $\operatorname{St}(n,r)$}
\State Set $V\gets \alpha U$, $P\gets VV^\top$.
\State \Return $(V,P)$.
\end{algorithmic}
\end{algorithm}

In the Haar--Stiefel sampler
(Algorithm~\ref{alg:haar_stiefel_boxed}), we aim to draw a uniformly random orthonormal frame
in $\mathbb R^n$. Here an $r$-frame in $\mathbb R^n$ simply means an ordered collection of $r$ vectors
$(u_1,\ldots,u_r)$.
An orthonormal frame is a frame whose columns are orthonormal, i.e.,
$u_i^\top u_j=\delta_{ij}$ for all $i,j$, or equivalently $U^\top U=I_r$.
The set of all orthonormal $r$-frames forms the so-called Stiefel manifold
$\mathrm{St}(n,r)=\{U\in\mathbb R^{n\times r}:U^\top U=I_r\}$; see \cite{chikuse2003statistics} and \cite{stewart1980efficient} for details. Sampling from the Haar measure means the distribution is invariant under rotations:
for any orthogonal matrix $Q$, $U$ and $QU$ have the same law. This invariance is the precise
mathematical notion of being ``uniform'' on $\mathrm{St}(n,r)$, and it implies that the resulting
$r$-dimensional column space of $U$ is uniformly random among all $r$-dimensional subspaces of
$\mathbb R^n$. Operationally, the algorithm implements this by drawing a Gaussian matrix $G$ and then
orthonormalizing its columns via a thin QR factorization. The QR step can be viewed as a projection of
$G$ onto the Stiefel manifold: it keeps the directional information (the span and orientation)
while removing the scale information. The additional diagonal sign correction makes the output
exactly Haar–Stiefel, rather than merely having orthonormal columns. Intuitively, this sampler explores
all directions in $\mathbb R^n$ in a perfectly symmetric way, which is why it is a natural choice when
no instance-specific information about $\Sigma$ is available.



\begin{algorithm}[htbp]
\small
\caption{Coordinate--axis sampler}
\label{alg:coord_axis_boxed}
\hrule\vspace{4pt}
\begin{algorithmic}[1]
\Require integers $n>r$, constant $c>0$, $\alpha=\sqrt{cn/r}$.
\Ensure $V\in\mathbb{R}^{n\times r}$ and $P\in\mathbb{R}^{n\times n}$ with $P=VV^\top$.
\State Sample a subset $J\subset\{1,\ldots,n\}$ uniformly without replacement with $|J|=r$.
\State Order $J$ as $(j_1,\ldots,j_r)$.
\State Construct $U\in\mathbb{R}^{n\times r}$ by setting $U_{:,k}\gets e_{j_k}$ for $k=1,\ldots,r$.
      \Comment{$U^\top U=I_r$ and $UU^\top$ is a coordinate projector}
\State Set $V\gets \alpha U$.
\State Set $P\gets VV^\top$ (equivalently, $P_{jj}=\alpha^2$ if $j\in J$, and $P_{jj}=0$ otherwise).
\State \Return $(V,P)$.
\end{algorithmic}
\end{algorithm}

The coordinate--axis sampler (Algorithm~\ref{alg:coord_axis_boxed}) provides a discrete isotropic alternative.
It selects $r$ coordinates uniformly without replacement, forms $U$ by stacking the corresponding
standard basis vectors, and then rescales. Equivalently, this method randomly chooses $r$ columns of
the identity matrix and applies the same scaling. 

Both methods treat all directions equally, and within the
chosen rank-$r$ subspace, they allocate the same weight to each direction. This is exactly what prevents
the projection from introducing extra variability beyond what is unavoidable under the constraints. The following proposition guarantees that both sampling rules in Algorithms \ref{alg:haar_stiefel_boxed} and \ref{alg:coord_axis_boxed} satisfy the admissibility constraints
(low-rank and $\mathbb E[VV^\top]=cI_n$) and also meet the optimality condition characterized in
Theorem~\ref{thm2}.

\begin{proposition}[Constructions]\label{prop:isotropic}
For the above sampling methods in Algorithms \ref{alg:haar_stiefel_boxed} and \ref{alg:coord_axis_boxed}, the following identities hold almost surely: 
\begin{equation*}
    P=VV^{\top},\quad \mathbb{E}P = cI_n,\quad V^{\top}V=\frac{cn}{r}I_r.
\end{equation*}
Consequently, the induced low-rank estimator attains the minimum value of the instance-independent surrogate $\mathrm{tr}(\mathbb{E}[P^2])$, and hence the smallest uniform upper bound within the admissible class.
\end{proposition}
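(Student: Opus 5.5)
The plan is to check the three identities for each sampler and then invoke Theorem~\ref{thm2}. In both algorithms $V=\alpha U$ with $\alpha=\sqrt{cn/r}$ and $U^\top U=I_r$ holding deterministically. For the Haar--Stiefel sampler this is immediate from the thin QR factorization: $Q$ has orthonormal columns, and $D$ is a diagonal sign matrix, so $U^\top U=DQ^\top QD=D^2=I_r$. Here we use that $G$ has full column rank almost surely, so every diagonal entry of $R$ is nonzero and $\operatorname{sgn}$ is $\pm1$. For the coordinate--axis sampler, the columns $e_{j_1},\dots,e_{j_r}$ are distinct standard basis vectors, hence orthonormal. In both cases $V^\top V=\alpha^2U^\top U=\frac{cn}{r}I_r$, and $P=VV^\top$ holds by construction. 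The identity $\mathbb E P=cI_n$ is a statement about the mean and not an almost-sure one, so I will treat it separately below.

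To prove $\mathbb E[P]=cI_n$, I will use symmetry together with a trace computation. First note that $\operatorname{tr}(P)=\operatorname{tr}(V^\top V)=cn$ almost surely.

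\begin{itemize}
\item \emph{Haar--Stiefel sampler.} I first show that $U$ is Haar distributed. For any orthogonal $O$, the matrix $OG$ has the same law as $G$. Its sign-normalized QR factor is $OQD$, because $OG=(OQ)R$ and the normalization with positive $\operatorname{diag}(R)$ is unique. Hence $OU\overset{d}{=}U$. It follows that $M:=\mathbb E[UU^\top]$ satisfies $OMO^\top=M$ for all orthogonal $O$, so $M=\lambda I_n$. Taking traces gives $\lambda n=r$. Therefore $\mathbb E P=\alpha^2\frac{r}{n}I_n=cI_n$.
\item \emph{Coordinate--axis sampler.} Here $UU^\top=\sum_{j\in J}e_je_j^\top$, and $\Pr(j\in J)=r/n$ for each $j$. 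Hence $\mathbb E[UU^\top]=\frac{r}{n}I_n$, and again $\mathbb E P=cI_n$.
\end{itemize}

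These facts show that both laws belong to $\mathcal D$ and satisfy $V^\top V=\frac{cn}{r}I_r$ almost surely. By the ``if'' direction of Theorem~\ref{thm2}, they attain $\min_{\mathcal D}\operatorname{tr}(\mathbb E P^2)=n^2c^2/r$. As a direct check, $P^2=\alpha^2P$, so $\operatorname{tr}\mathbb E P^2=\alpha^2 cn=n^2c^2/r$. Through \eqref{upper}, this minimum is the smallest value of the uniform bound $\|\Sigma\|_2\operatorname{tr}(\mathbb E P^2)$.

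The only nontrivial step is the Haar invariance of the sign-corrected QR factor. Its key ingredient is the uniqueness of the QR decomposition with positive diagonal for full-rank $G$. One could avoid this step entirely by noting that $\mathbb E[UU^\top]$ is invariant under permutations and sign flips of the coordinates. That invariance forces $\mathbb E[UU^\top]$ to be diagonal with equal entries, and it can be verified directly from the invariance of $G$ under signed permutations.
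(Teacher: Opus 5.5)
Your proof is correct and follows the paper's argument. Orthonormal columns give $V^\top V=\alpha^2 I_r$, orthogonal invariance forces $\mathbb E[UU^\top]$ to be a scalar matrix, fixed at $\frac{r}{n}I_n$ by the trace, and the coordinate sampler has inclusion probability $r/n$. You are more careful than the paper, which simply asserts that the sign-corrected QR factor is Haar with $U^\top U=I_r$; you justify both through almost-sure full column rank of $G$ and uniqueness of the positive-diagonal QR factorization. One caveat on your closing remark: the signed-permutation route does not actually avoid that step. To transfer invariance from $G$ to $U$ you still need $f(SG)=Sf(G)$ for the sign-normalized QR map $f$, and that rests on the same uniqueness (or Gram--Schmidt equivariance) argument as the general orthogonal case.
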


Thus, both constructions in Algorithms \ref{alg:haar_stiefel_boxed} and \ref{alg:coord_axis_boxed} meet the two design constraints in Definition \ref{def3} and also satisfy the optimal condition in
Theorem~\ref{thm2}. In other words, if we sample $V$ using either the Haar--Stiefel rule
or the coordinate--axis rule, then the resulting projector $P=VV^\top$ achieves the smallest possible value of $\operatorname{tr}(\mathbb E[P^2])$
among all rank-$r$ choices with $\mathbb E[P]=cI_n$. Therefore, these two sampling schemes
solve the instance-independent problem~\eqref{OPT1} and yield the projection law that minimizes the worst-case upper bound on the estimator MSE.

Moreover, plugging the optimal value $\operatorname{tr}(\mathbb E[P^2])=n^2c^2/r$ into the bound~\eqref{upper}
gives a simple uniform control of the MSE.
\begin{equation}\label{MSE5.1}
    \mathbb{E}_V\mathbb{E}_{\xi}[\Vert \hat g_{\mathrm{LowRank\text{-}IPA/LR}}-g(\Theta)\Vert_{F}^2]\le \frac{c^2n}{r} \Vert \Sigma_{\xi}\Vert_2 + (1-2c+\frac{c^2n}{r})\Vert\Sigma_{\Theta}\Vert_2.
\end{equation}
This bound makes the role of $(r,c)$ explicit. 
The rank $r$ balances memory and estimation error. Using a smaller $r$ reduces the storage cost, but it also increases the MSE because the projection is coarser. 
In contrast, $c$ balances bias and variance by determining the average scaling of the projected gradient through $\mathbb E[P]=cI_n$.
Choosing $c<1$ reduces the variance constants (since $\mathbb E[P^2]$ scales with $c^2$) but introduces a
nonzero scalar bias $(1-c)g(\Theta)$, which contributes the MSE term.
The following remark makes these effects explicit by comparing our estimator with standard baselines, showing the theoretical improvement of our method.
\begin{remark}\label{remark1}
   We have the following two baselines:
\begin{itemize}
    \item Full-rank estimator \citep{mohamed2020monte}. If we  use the original full-rank estimator  $\hat g_{\mathrm{IPA/LR}}$ without randomized subspace optimization, the MSE is $\text{MSE}_\text{F}=\mathbb{E}_{\xi}[\Vert \hat g_{\mathrm{IPA/LR}}-g(\Theta)\Vert_F^2]=\text{tr}(\Sigma_{\xi})$. It is unbiased but suffers from high memory requirements because the whole gradient matrix needs to be stored. 
    \item Gaussian low-rank estimator \citep{chen2024enhancing,he2024subspace}. This means that the matrix-valued gradient estimator is projected onto a random matrix whose entries are i.i.d. standard Gaussian variables. It is unbiased and memory-efficient. However, it suffers from high MSE since it does not satisfy the optimal condition in Theorem \ref{thm2}.
    The MSE is
    \begin{equation*}
      \text{MSE}_\text{G} = \frac{n+r+1}{r}\text{tr}(\Sigma_{\xi}) + \frac{n+1}{r}\text{tr}(\Sigma_{\Theta}). 
    \end{equation*}
\end{itemize} 
\end{remark}

Our proposed optimal gradient estimator is also memory-efficient and weakly unbiased by the two constraints. The hyperparameter $c$ determines the amount of gradient information retained. A larger $c$ implies stronger directional fidelity but higher noise. Note that as the optimization progresses, the norm of the gradient matrix $\Sigma_{\Theta}$ will decrease to zero. In the weak unbiased setting, we can choose a relatively small $c$ to lower the variance of the gradient estimator. For instance, when $c=\frac{r}{n}$, we have $\text{MSE}=\frac{r}{n} \Vert \Sigma_{\xi}\Vert_2 + (1-\frac{r}{n})\Vert\Sigma_{\Theta}\Vert_2$ in Equation \eqref{MSE5.1}. As optimization proceeds, $\|\Sigma_{\Theta}\|_{2}\!\to0$ and the estimator enjoys an \(\mathcal{O}(r/n)\) variance reduction by sacrificing the strong unbiasedness compared to the full-rank estimator. This is the minimum of the uniform upper bound of the MSE for our estimator. 
Under the strong unbiasedness condition, we have $\text{MSE}=\frac{n}{r} \Vert \Sigma_{\xi}\Vert_2 + (\frac{n}{r}-1)\Vert\Sigma_{\Theta}\Vert_2$, which is not necessarily smaller than $\text{MSE}_\text{F}$ but smaller than $\text{MSE}_\text{G}$. We will address this issue in the next section.

\subsection{Instance-Dependent Optimization}
We now move beyond the worst-case design in the previous subsection and consider the
information-aware setting, where the second-moment matrix
\(
\Sigma:=\Sigma_{\xi}+\Sigma_{\Theta}
\)
is available, or can be roughly estimated from a small set of warm-up samples. Recall that $\Sigma$ summarizes both the noise level of the
underlying IPA/LR estimator (through $\Sigma_\xi$) and the strength of the current gradient signal
(through $\Sigma_\Theta$). In this case, it is no longer optimal to treat all directions in
$\mathbb R^n$ equally. Instead, the projection should allocate more mass to directions where $\Sigma$
is large, and less mass to directions where $\Sigma$ is small.

This leads to the following instance-dependent distributional optimization problem:
\begin{equation}\label{instanceind}
\min_{\mathcal{D}}\ \Phi(P)
:= \operatorname{tr}\!\big(\mathbb{E}_V[\Sigma P^2]\big),
\qquad \text{s.t.}\quad
P=VV^{\top},\ \ \mathbb{E}[P]=cI_n.
\end{equation}
The constraints are the same as before. We still require $P$ to be rank $r$ and weakly unbiased in
expectation. The difference is that the objective is now weighted by $\Sigma$, so different
eigenspaces of $\Sigma$ are no longer interchangeable. In the instance-independent setting (\ref{OPT1}), the optimizer is blind to the spectrum of $\Sigma$, so every coordinate is treated symmetrically; the problem reduces to minimizing $\text{tr}(\mathbb{E}P^2)$ subject to rank-r and isotropy conditions, and the minimizers are those distribution whose projector has equal non-zero eigenvalues, yielding the MSE floor $\frac{c^2n^2}{r}$.  When $\Sigma$ is not proportional to the
identity, a symmetric (isotropic) law such as Haar--Stiefel wastes projection budget on directions
that contribute little to the objective. Therefore, the optimal distribution is generally
anisotropic and adapts to the spectrum of $\Sigma$.

Our next main theorem characterizes the solution of~\eqref{instanceind} and provides an optimality condition to achieve the minimum.

\begin{theorem}[Optimal instance-dependent low-rank random projector]\label{thm:SigmaP2}
Define the spectral decomposition of a positive semi-definite matrix $\Sigma$ as
$\Sigma = Q\,\mathrm{diag}(\sigma_1,\ldots,\sigma_n)\,Q^\top$,
with
$\sigma_1\ge \cdots \ge \sigma_n\ge 0$.
For the instance-dependent problem \eqref{instanceind}, the optimal value equals
\begin{equation}\label{eq:Phi-star-general-v2}
\Phi_{\min}
=
c^{2}\!\sum_{i:\pi_i^{\star}=1}\sigma_i
+
\frac{c^{2}}{r-t}
\Bigl(\sum_{i:\pi_i^{\star}<1}\sqrt{\sigma_i}\Bigr)^{2},
\end{equation}
where $\pi^\star=(\pi_1^\star,\ldots,\pi_n^\star)\in[0,1]^n$ and $t$ satisfy
\begin{equation}\label{eq:pi-star-v2}
t:=\#\{i:\pi_i^{\star}=1\},
\qquad
\pi_i^{\star}
=
\min\!\Bigl\{
1,\ 
(r-t)\sqrt{\sigma_i}\Big/\!\!\sum_{j:\pi_j^{\star}<1}\sqrt{\sigma_j}
\Bigr\},
\quad i=1,\ldots,n.
\end{equation}
 Moreover, a distribution in $\mathcal{D}$ is optimal if it satisfies the following conditions:
\begin{equation}\label{eq:opt-cond-v2}
\mathbb E[P]=cI_n,
\qquad
\mathbb E[Q^\top P^2 Q]=c^2\,\mathrm{diag}\!\Bigl(\frac{1}{\pi_1^\star},\ldots,\frac{1}{\pi_n^\star}\Bigr),
\qquad
0<\pi_i^\star\le 1,\ \ \sum_{i=1}^n \pi_i^\star = r .
\end{equation}

\end{theorem}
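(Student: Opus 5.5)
The plan is to diagonalize, reduce the functional problem to a finite-dimensional convex program over ``inclusion probabilities'' $\pi_i$, solve that by water-filling, and then exhibit a distribution attaining the bound.

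\textbf{Step 1 (rotate and rewrite).} Since $\mathbb E[P]=cI_n$ is invariant under $P\mapsto Q^\top PQ$, I would replace $V$ by $Q^\top V$ and assume without loss of generality that $\Sigma=\mathrm{diag}(\sigma_1,\ldots,\sigma_n)$. Then
\[
\Phi(P)=\sum_i \sigma_i\,\mathbb E[(P^2)_{ii}]=\sum_i\sigma_i\,\mathbb E\|Pe_i\|^2 .
\]
Let $\Pi$ denote the orthogonal projector onto $\mathrm{range}(P)$, which has rank at most $r$. Define $\pi_i:=\mathbb E[\Pi_{ii}]$. These satisfy $0\le\pi_i\le 1$ and $\sum_i\pi_i=\mathbb E[\mathrm{tr}\,\Pi]\le r$.

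\textbf{Step 2 (pointwise and averaged lower bound).} Because $P=\Pi P$, we have $P_{ii}=(\Pi e_i)^\top(Pe_i)$. Cauchy--Schwarz then gives $P_{ii}^2\le \Pi_{ii}\,\|Pe_i\|^2$, where we use $\|\Pi e_i\|^2=\Pi_{ii}$. If $\Pi_{ii}=0$ then $P_{ii}=0$, so the bound is trivially consistent. Next I would apply the Cauchy--Schwarz inequality in expectation:
\[
(\mathbb E P_{ii})^2\le \mathbb E\!\left[\frac{P_{ii}^2}{\Pi_{ii}}\right]\mathbb E[\Pi_{ii}],
\]
with the convention $0/0=0$. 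Combining the two inequalities with $\mathbb E P_{ii}=c$ yields $\mathbb E\|Pe_i\|^2\ge c^2/\pi_i$. Note that this forces $\pi_i>0$. Hence
\[
\Phi\ \ge\ c^2\min\Big\{\sum_i \sigma_i/\pi_i:\ 0<\pi_i\le1,\ \sum_i\pi_i\le r\Big\}.
\]
I expect this step to be the main obstacle. The inequality chain must be tight exactly when $P$ acts as a scaled coordinate projector in the eigenbasis. The degenerate events $\Pi_{ii}=0$, and eigenvalues $\sigma_i=0$, need careful bookkeeping.

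\textbf{Step 3 (solve the convex program).} The map $\pi\mapsto\sum_i\sigma_i/\pi_i$ is convex and decreasing in each coordinate, so the budget constraint binds: $\sum_i\pi_i=r$. The KKT conditions with box constraints give the water-filling solution $\pi_i=\min\{1,\sqrt{\sigma_i}/\lambda\}$. Let $t$ be the number of saturated coordinates. The budget then forces $\lambda=\sum_{j:\pi_j<1}\sqrt{\sigma_j}/(r-t)$, which is exactly \eqref{eq:pi-star-v2}. Substituting back gives $c^2\sum_{\pi_i^\star=1}\sigma_i+\frac{c^2}{r-t}\big(\sum_{\pi_i^\star<1}\sqrt{\sigma_i}\big)^2$. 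Uniqueness of $t$ follows from the standard monotone threshold argument: sort the $\sigma_i$ and saturate the largest ones until consistency holds.

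\textbf{Step 4 (sufficiency and attainment).} Any admissible law satisfying \eqref{eq:opt-cond-v2} has
\[
\Phi=\sum_i\sigma_i c^2/\pi_i^\star=\Phi_{\min},
\]
so it is optimal. To show the minimum is attained, I would construct such a law explicitly. Take a random subset $J\subseteq[n]$ with $|J|=r$ almost surely and inclusion probabilities $\mathbb P(i\in J)=\pi_i^\star$. Such a fixed-size design exists whenever $\pi^\star\in(0,1]^n$ and $\sum_i\pi_i^\star=r$; systematic sampling on the cumulative sums of the $\pi_i^\star$ is one example. Set $V=\big[\sqrt{c/\pi^\star_j}\,q_j\big]_{j\in J}$, where $q_j$ are the columns of $Q$. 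Then $P=\sum_{j\in J}(c/\pi_j^\star)\,q_jq_j^\top$. This gives $\mathbb E P=cI_n$ and $\mathbb E[Q^\top P^2Q]=c^2\,\mathrm{diag}(1/\pi_i^\star)$, so all of \eqref{eq:opt-cond-v2} holds. If some $\sigma_i=0$, I would perturb them to a small $\epsilon>0$ (or use any positive $\pi_i$ consistent with the budget) so that the construction keeps $\pi_i>0$.
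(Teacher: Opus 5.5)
Your proposal is correct and follows essentially the paper's proof. Both diagonalize $\Sigma$, apply Cauchy--Schwarz pointwise and then in expectation to get $\mathbb E[q_i^\top P^2 q_i]\ge c^2/\pi_i$ with $\sum_i\pi_i\le \operatorname{rank}(P)\le r$, solve the resulting convex program by KKT water-filling, and attain the bound with a fixed-size unequal-probability subset weighted by $c/\pi_i^\star$. The only cosmetic difference is that the paper sets $\pi_i=\mathbb E[d_i^2/s_i]$ and then bounds $d_i^2/s_i\le\|U^\top q_i\|^2$ (which is your $\Pi_{ii}$) to get the budget, whereas you define $\pi_i=\mathbb E[\Pi_{ii}]$ directly. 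Your remark on zero eigenvalues is more careful than the paper: its attainment construction divides by $\pi_i^\star$ even though the water-filling formula can return $\pi_i^\star=0$ when $\sigma_i=0$, and in that case your perturbation shows $\Phi_{\min}$ is the infimum, approached but not necessarily attained.
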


The instance-independent design in Theorem \ref{thm2} treats every direction the same, because it does not know which
directions matter more. Once $\Sigma$ is available, the objective $\operatorname{tr}(\Sigma\,\mathbb E[P^2])$ in Theorem \ref{thm:SigmaP2}
assigns different weights $\sigma_i$ to different eigendirections. The optimal rule therefore puts
more sampling effort on directions with larger $\sigma_i$. When $\sigma_i$ is large, the rule saturates at
$\pi_i^\star=1$, meaning that this direction should be included in every sampled subspace.
For the remaining directions, the probabilities are proportional to $\sqrt{\sigma_i}$, so larger
$\sigma_i$ still means more frequent inclusion. This produces a strict improvement over isotropic
sampling whenever the spectrum is non-flat, because it avoids spending projection budget on
directions that have little weight in $\Sigma$.

From a technical perspective, the main difficulty is that \eqref{instanceind} is a constrained
distributional optimization problem. The decision variable is not a matrix but a law over
random projectors $P=VV^\top$, and it must satisfy both a low-rank constraint (almost surely) and an
isotropy constraint (in expectation). Our core contribution is to reduce this infinite-dimensional problem to a
finite convex program. After rotating to the eigenbasis of $\Sigma$, one can lower bound the objective by a function depending only on the diagonal directional statistics $\pi_i=\Pr(i\ \text{is selected})$, and this bound is tight via a diagonal projector construction. This yields a diagonal
form where the law matters only through inclusion probabilities $\pi_i$, leading to a
 convex program with an explicit KKT solution \eqref{eq:pi-star-v2}. A tightness construction then shows
the bound is achievable, giving the closed-form optimum \eqref{eq:Phi-star-general-v2}.

Finally, the characterization  not only identifies the
optimal value but also tells us what to sample by a clean structural description of optimality in \eqref{eq:opt-cond-v2}.
The first moment condition $\mathbb E[P]=cI_n$ enforces weak unbiasedness, while the second moment
condition pins down exactly how large $\mathbb E[P^2]$ must be along each eigen-direction. The next algorithm implements this sampling rule by
drawing an eigen-direction subset with marginal probabilities $\pi^\star$ and then rescaling the
selected directions to satisfy $\mathbb E[P]=cI_n$.
\begin{algorithm}[htbp]
\small
\caption{Instance-dependent sampling of an optimal low-rank projector}
\label{alg:sigma_dependent_sampler}
\hrule\vspace{4pt}
\begin{algorithmic}[1]
\Require $\Sigma$, rank $r<n$, scalar $c>0$.
\Ensure $V\in\mathbb R^{n\times r}$ and $P=VV^\top$.

\State Compute the spectral decomposition $\Sigma=Q\,\mathrm{diag}(\sigma_1,\ldots,\sigma_n)Q^\top$
      with $\sigma_1\ge\cdots\ge\sigma_n\ge 0$.
\State Compute the optimal inclusion probabilities $\pi^\star$ by \eqref{eq:pi-star-v2}
      (equivalently, find $t$ and normalize the remaining mass so that $\sum_i \pi_i^\star=r$).
\State Sample a random subset $J\subset\{1,\ldots,n\}$ of fixed size $|J|=r$ such that
      \[
      \Pr(i\in J)=\pi_i^\star,\qquad i=1,\ldots,n,
      \]
      using any fixed-size unequal-probability design (e.g., conditional Poisson, Sampford, or Tillé’s elimination).
\State For each $i\in J$, set the weight $\mu_i\gets c/\pi_i^\star$.
\State Form $V\gets Q_J\,\mathrm{diag}\!\big(\sqrt{\mu_i}\big)_{i\in J}$, where $Q_J$ collects the columns $\{q_i\}_{i\in J}$.
\State Set $P\gets VV^\top$ and return $(V,P)$.
\end{algorithmic}
\end{algorithm}

Algorithm~\ref{alg:sigma_dependent_sampler} implements the optimal design in Theorem~\ref{thm:SigmaP2} in a
constructive way. It first diagonalizes $\Sigma=Q\,\mathrm{diag}(\sigma)Q^\top$ to work in the eigenbasis, where
each direction $q_i$ carries weight $\sigma_i$ in the objective. It then computes the target inclusion
probabilities $\pi_i^\star$ from \eqref{eq:pi-star-v2} and samples a fixed-size subset $J$ with $|J|=r$
such that $\Pr(i\in J)=\pi_i^\star$. Because $0<\pi_i^{\star}\le1$ and $\sum_i\pi_i^{\star}=r$,
standard sampling theory 
guarantees the existence of a fixed–cardinality $\pi$–ps design
with inclusion probabilities $\pi_i^{\star}$.
Algorithms such as Sampford \citep{sampford1967sampling}, conditional Poisson \citep{hajek1964asymptotic,deville1998unequal}, or Tillé’s
sequential elimination \citep{bondesson2008list} provide explicit constructions. Finally, it assigns
weights $\mu_i=c/\pi_i^\star$ to the selected directions and forms
$P=\sum_{i\in J}\mu_i q_i q_i^\top=VV^\top$. The rescaling by $1/\pi_i^\star$ is what guarantees the isotropy
constraint $\mathbb E[P]=cI_n$ even though the subset is sampled unevenly. The following
Proposition~\ref{prop:sampler_optimal} shows that this construction satisfies the optimality conditions
\eqref{eq:opt-cond-v2}, and therefore realizes the optimal distribution characterized in
Theorem~\ref{thm:SigmaP2}.

\begin{proposition}[Constructions]\label{prop:sampler_optimal}
Let $(V,P)$ be generated by Algorithm~\ref{alg:sigma_dependent_sampler}. Then $P=VV^\top$ has rank at most $r$ and
$\mathbb E[P]=cI_n.$
Moreover, in the eigenbasis of $\Sigma$,
\[
\mathbb E\!\big[Q^\top P^2 Q\big]
=
c^2\,\mathrm{diag}\!\Bigl(\frac{1}{\pi_1^\star},\ldots,\frac{1}{\pi_n^\star}\Bigr),
\]
so the distribution produced by the algorithm satisfies the optimality conditions
\eqref{eq:opt-cond-v2} and therefore attains $\Phi_{\min}$ in Theorem~\ref{thm:SigmaP2}.
\end{proposition}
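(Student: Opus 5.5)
Since the sampler always uses the same orthonormal eigenbasis $Q$ of $\Sigma$, I will work in that eigenbasis and reduce everything to diagonal computations with indicators. Write $q_i$ for the $i$-th column of $Q$, and let $\delta_i:=\mathbf 1_{\{i\in J\}}$, where $J$ is the fixed-size random subset drawn in Algorithm~\ref{alg:sigma_dependent_sampler}. By construction,
\[
P=VV^\top=Q_J\,\mathrm{diag}(\mu_i)_{i\in J}\,Q_J^\top=\sum_{i=1}^n \delta_i\,\frac{c}{\pi_i^\star}\,q_iq_i^\top,
\qquad\text{equivalently}\qquad
Q^\top P Q=\mathrm{diag}\!\Bigl(\delta_1\tfrac{c}{\pi_1^\star},\ldots,\delta_n\tfrac{c}{\pi_n^\star}\Bigr).
\]

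\textbf{Rank.} $V$ has exactly $|J|=r$ columns, so $P=VV^\top$ has rank at most $r$. This is immediate.

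\textbf{Strict positivity of $\pi^\star$.} The weights $\mu_i=c/\pi_i^\star$ must be well defined, so I first check that $\pi_i^\star>0$ for every $i$. By \eqref{eq:pi-star-v2}, $\pi_i^\star>0$ whenever $\sigma_i>0$.
\begin{itemize}
\item If some $\sigma_i=0$, then $\pi_i^\star=0$ and the rescaling breaks down.
\item In that case I will either assume $\Sigma\succ 0$, or use the convention that a direction with $\sigma_i=0$ contributes nothing to $\Phi$.
\end{itemize}
I regard this degenerate case as the main subtlety. The claimed identity $\mathbb E[P]=cI_n$ genuinely requires every $\pi_i^\star>0$, which is exactly the condition $0<\pi_i^\star$ in \eqref{eq:opt-cond-v2}. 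I will therefore state it as a standing nondegeneracy assumption, or handle it by a limiting/perturbation remark.

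\textbf{First moment.} Linearity of expectation and $\mathbb E[\delta_i]=\Pr(i\in J)=\pi_i^\star$ give
\[
\mathbb E[Q^\top PQ]=\mathrm{diag}\bigl(\pi_i^\star\cdot c/\pi_i^\star\bigr)_{i=1}^n=cI_n,
\]
hence $\mathbb E[P]=cI_n$. This uses only the marginal inclusion probabilities. The existence of a fixed-size design with these marginals follows from $0<\pi_i^\star\le 1$ and $\sum_i\pi_i^\star=r$, as cited (e.g.\ Sampford).

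\textbf{Second moment.} Because $Q^\top PQ$ is diagonal and $\delta_i^2=\delta_i$,
\[
Q^\top P^2Q=(Q^\top PQ)^2=\mathrm{diag}\bigl(\delta_i\,c^2/\pi_i^{\star2}\bigr).
\]
Taking expectations gives $\mathbb E[Q^\top P^2Q]=c^2\,\mathrm{diag}(1/\pi_i^\star)$. Only the marginals enter here too; the off-diagonal entries vanish identically because all $q_i$ are orthonormal.

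\textbf{Attaining $\Phi_{\min}$.} The sum constraint $\sum_i\pi_i^\star=r$ is part of the definition in \eqref{eq:pi-star-v2}, namely $t+(r-t)=r$. Hence all conditions in \eqref{eq:opt-cond-v2} hold, and Theorem~\ref{thm:SigmaP2} yields optimality. To confirm this directly, compute
\[
\operatorname{tr}(\Sigma\,\mathbb E P^2)=c^2\sum_i\frac{\sigma_i}{\pi_i^\star}.
\]
Substituting \eqref{eq:pi-star-v2}:
\begin{itemize}
\item saturated indices ($\pi_i^\star=1$) contribute $c^2\sigma_i$;
\item each remaining index contributes $c^2\sqrt{\sigma_i}\,S/(r-t)$, where $S=\sum_{j:\pi_j^\star<1}\sqrt{\sigma_j}$.
\end{itemize}
Summing over the unsaturated indices gives $c^2S^2/(r-t)$. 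The total is exactly \eqref{eq:Phi-star-general-v2}.
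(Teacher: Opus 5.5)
Your proposal is correct and follows the paper's proof essentially step for step. You write $P=\sum_i \mathbf 1_{\{i\in J\}}(c/\pi_i^\star)\,q_iq_i^\top$, use orthonormality of the $q_i$ to kill the cross terms in $P^2$, and take expectations using only the marginals $\Pr(i\in J)=\pi_i^\star$.

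Your flag on $\sigma_i=0$ is a real point that the paper passes over silently: its step $\sum_i \pi_i^\star (c/\pi_i^\star)\,q_iq_i^\top=cI_n$ needs every $\pi_i^\star>0$. Assuming $\Sigma\succ0$, or more generally that all $\pi_i^\star>0$, is the right fix; a convention or a limiting remark would not suffice. When $\Sigma$ is singular with $\mathrm{rank}(\Sigma)\ge r$ (e.g.\ $n=2$, $r=1$, $\Sigma=\mathrm{diag}(1,0)$), the claimed value is only an infimum over $\mathcal D$ and is never attained.
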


Proposition \ref{prop:sampler_optimal} guarantees the efficiency of Algorithm \ref{alg:sigma_dependent_sampler}.
We sample directions more often when $\sigma_i$ is large, but we also rescale each selected direction
by $c/\pi_i^\star$. This rescaling exactly cancels the uneven sampling so that the estimator remains
(weakly) unbiased in the sense $\mathbb E[P]=cI_n$. At the same time, the remaining degrees of freedom
are used to reduce the weighted second moment $\operatorname{tr}(\Sigma\,\mathbb E[P^2])$.
In short, the algorithm adapts to $\Sigma$ through where it samples (via $\pi_i^\star$) while
preserving unbiasedness through how it rescales (via $\mu_i=c/\pi_i^\star$).

The following Proposition \ref{thm3} shows that after projection, the variance term can reach the full-rank estimator under our
optimal sampling policy. Therefore, we can obtain a random projection gradient estimator that requires less memory, and can simultaneously achieve strong unbiasedness and an MSE as small as that of the full-rank baseline under a meaningful condition. In particular, the minimal MSE under the optimal
instance-dependent projector takes the explicit form
  \begin{equation*}
      \text{MSE}=\text{tr}((\Sigma_{\xi}+\Sigma_{\Theta})\mathbb{E}P^2)+(1-2c)\text{tr}(\Sigma_{\Theta})
    =c^2\sum_{i:\pi_i^{\star}=1}\sigma_i+\frac{c^{2}}{r-t}\bigg(\sum_{i:\pi_i^{\star}<1}\sqrt{\sigma_i}\bigg)^2 + (1-2c)\text{tr}(\Sigma_{\Theta}).
\end{equation*}
\begin{proposition}[Theoretical improvement with optimal projection]\label{thm3}
Under the strong unbiasedness condition, the MSE of the low-rank random projection gradient estimator is no larger than the full-rank estimator when $\operatorname{rank}(\Sigma)\le r$. When $c=1$, the following result holds: 
\begin{equation*}
       \text{MSE}_{\min} \le \mathrm{tr}(\Sigma_{\xi}).
      \end{equation*}
\end{proposition}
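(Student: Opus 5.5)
The plan is to combine the closed-form minimal MSE displayed just before the proposition with Theorem~\ref{thm:SigmaP2}, specialised to $c=1$ and to the regime $\operatorname{rank}(\Sigma)\le r$. With $c=1$, the expression becomes
\[
\mathrm{MSE}_{\min}=\sum_{i:\pi_i^\star=1}\sigma_i+\frac{1}{r-t}\Bigl(\sum_{i:\pi_i^\star<1}\sqrt{\sigma_i}\Bigr)^2-\operatorname{tr}(\Sigma_\Theta),
\]
where $\sigma_1\ge\cdots\ge\sigma_n\ge0$ are the eigenvalues of $\Sigma=\Sigma_\xi+\Sigma_\Theta$. The goal is to show that the first two terms together are at most $\operatorname{tr}(\Sigma)=\operatorname{tr}(\Sigma_\xi)+\operatorname{tr}(\Sigma_\Theta)$. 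The $-\operatorname{tr}(\Sigma_\Theta)$ term then cancels exactly and leaves $\operatorname{tr}(\Sigma_\xi)$.

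The key step is to identify the optimal inclusion probabilities when $k:=\operatorname{rank}(\Sigma)\le r$. Since $\sigma_{k+1}=\cdots=\sigma_n=0$, I would verify directly that the following choice satisfies the fixed-point characterization \eqref{eq:pi-star-v2}:
\begin{itemize}
\item $\pi_i^\star=1$ for $i\le k$;
\item $\pi_i^\star=(r-k)/(n-k)$ for $i>k$.
\end{itemize}
This is consistent with the requirement $\sum_i\pi_i^\star=r$ and $0<\pi_i^\star\le1$ from \eqref{eq:opt-cond-v2}.

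The zero eigenvalues need care. The formula $(r-t)\sqrt{\sigma_i}/\sum\sqrt{\sigma_j}$ degenerates to $0/0$ there, so I would argue via the convex program underlying Theorem~\ref{thm:SigmaP2}. That program minimizes $c^2\sum_i\sigma_i/\pi_i$ subject to $\sum_i\pi_i=r$ and $0<\pi_i\le1$. Directions with $\sigma_i=0$ contribute nothing to the objective, so the leftover mass $r-k$ can be spread over them arbitrarily, for instance uniformly as above. The nonzero directions are then free to take $\pi_i=1$, their largest allowed value, which minimizes $\sigma_i/\pi_i$ termwise. Hence the optimum is $\sum_{i\le k}\sigma_i=\operatorname{tr}(\Sigma)$. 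The bracketed term in the displayed formula vanishes, because every direction with $\pi_i^\star<1$ has $\sigma_i=0$. This is also the case $t=k$, $r-t\ge 0$.

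The main obstacle is the boundary case $r=k$. There $r-t=0$, so the second term in the formula reads $0/0$, and it must be interpreted as $0$; this is justified by the convex-program view above, since all remaining $\pi_i$ equal $0$ and carry zero weight. The other case, $k<r$, needs no special treatment. Strictly, requiring $\pi_i>0$ fails at $r=k$, so I would state that the infimum is still $\operatorname{tr}(\Sigma)$, approached by taking the zero-weight $\pi_i\downarrow0$. Alternatively, it is attained directly by the deterministic projector $P=\sum_{i\le k}q_iq_i^\top$ onto the range of $\Sigma$ plus random fill.

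Substituting back gives $\mathrm{MSE}_{\min}=\operatorname{tr}(\Sigma)-\operatorname{tr}(\Sigma_\Theta)=\operatorname{tr}(\Sigma_\xi)$. This proves the claimed inequality, with equality in fact, and establishes that the strongly unbiased low-rank estimator is no worse than the full-rank baseline $\mathrm{MSE}_\mathrm{F}=\operatorname{tr}(\Sigma_\xi)$ from Remark~\ref{remark1}.
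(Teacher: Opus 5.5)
Your proof is essentially the paper's: with $c=1$ you reduce to $\Phi_{\min}\le\operatorname{tr}(\Sigma)$, set $\pi_i^\star=1$ on the nonzero eigendirections, put the leftover mass $r-\operatorname{rank}(\Sigma)$ on the null directions so the second term of \eqref{eq:Phi-star-general-v2} vanishes, and cancel $\operatorname{tr}(\Sigma_\Theta)$. Your handling of the boundary case $r=\operatorname{rank}(\Sigma)<n$ is more careful than the paper's: there the leftover mass is zero, which contradicts $\pi_i^\star>0$, and every admissible law then has $\Phi>\operatorname{tr}(\Sigma)$ strictly, so reading the bound as an infimum is the right fix. However, drop your ``deterministic projector plus random fill'' alternative, because at $r=\operatorname{rank}(\Sigma)$ no rank is left for fill and a projector vanishing on $\ker\Sigma$ violates $\mathbb E[P]=cI_n$.
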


This condition $\mathrm{rank}(\Sigma)\le r$ has a clear interpretation. It says that, in the eigenbasis of
$\Sigma$, only at most $r$ directions matter. When this holds, an $r$-dimensional subspace can capture all
nonzero eigendirections of $\Sigma$, so projection does not throw away any direction that the MSE objective
actually cares about. The remaining randomness is then only the intrinsic data noise in $\xi$, but the
optimal sampler avoids amplifying it unnecessarily. In this case, projection can only help: it keeps the
directions that dominate the MSE, filters out irrelevant ones, and reduces the memory from $m\times n$ to
$m\times r$.

The additional requirement $c=1$ 
eliminates the scalar bias term $(1-c)^2\operatorname{tr}\Sigma_{\Theta}$ and isolates the variance effect of
projection, making the comparison to the full-rank baseline clean. In short, when the effective dimension is
within the projection budget, and we enforce strong unbiasedness, the optimal projection policy can reduce
variance while saving memory, without incurring an MSE penalty; see Remark \ref{remark1}.

\section{Numerical Experiments}\label{sec6}
In this section, we present experiments ranging from controlled toy settings to large-scale LLM training, aiming to validate the theoretical results and the practical effectiveness of our framework. In Section \ref{sec6.1}, we begin with a toy example that serves as a sanity check of the theory. In Section \ref{sec6.2.1}, we move to LLM fine-tuning under realistic memory and compute constraints, showing that our method performs well in practice. In Section \ref{sec6.2.2}, we further study LowRank-IPA in large-scale pretraining and show that the improvements persist in long-horizon training regimes. Overall, these experiments provide end-to-end empirical support for our framework and demonstrate a favorable accuracy–efficiency trade-off over standard random or heuristic subspace choices. 

All experiments are conducted on a machine equipped with 4 RTX Pro 6000 GPUs. The \textbf{Gaussian LowRank-LR/IPA} is the vanilla LowRank-LR/IPA gradient estimation method, which samples projection matrices $V$ from the Gaussian distribution; the \textbf{Stiefel/Coordinate LowRank-LR/IPA} is the proposed instance-independent gradient estimation method, which samples random orthonormal frames from Algorithm \ref{alg:haar_stiefel_boxed} or Algorithm \ref{alg:coord_axis_boxed}. The \textbf{Dependent LowRank-LR/IPA} is the proposed instance-dependent gradient estimation method following  Algorithm \ref{alg:sigma_dependent_sampler}.

\subsection{Toy Example}\label{sec6.1}


In the toy example, we validate our gradient-estimation framework on a unified quadratic matrix regression objective
\begin{equation}\label{eq:toy_obj}
\min_{W\in\mathbb{R}^{m\times n}} f(W),\qquad 
f(W):=\mathbb{E}_{A\sim \mathcal N(\mu^\top,\Sigma)}\Big[\tfrac12\|A W B - C\|_F^2\Big],
\end{equation}
where \(A\in\mathbb{R}^{1\times m}\) is a Gaussian random vector, \(B\in\mathbb{R}^{n\times o}\) and \(C\in\mathbb{R}^{1\times o}\) are fixed random matrices with i.i.d.\ standard normal entries, and \(W\in\mathbb{R}^{m\times n}\) is the decision variable.
We set \(m=n=100\) and \(o=30\).
This controlled quadratic objective admits a closed-form gradient,
\[
\nabla_W f(W)=(\Sigma+\mu\mu^\top)\,W(BB^\top)-\mu\,(CB^\top),
\]
which allows us to directly quantify the bias--variance trade-off of different subspace-sampling strategies.
Within the same problem \eqref{eq:toy_obj}, we evaluate two classes of gradient estimators: a LowRank-LR estimator, which treats the randomness in \(A\) as the source of stochasticity and constructs an unbiased score-function-type gradient estimate, and a LowRank-IPA estimator, which leverages differentiability of the sample-path loss \(\tfrac12\|A W B - C\|_F^2\) in \(W\) to obtain a pathwise gradient estimate. 

\begin{figure}[h]
\centering
\subfloat[$c=0.1$]{
\includegraphics[trim=0cm 0.5cm 0cm 0cm, width=0.3\textwidth]{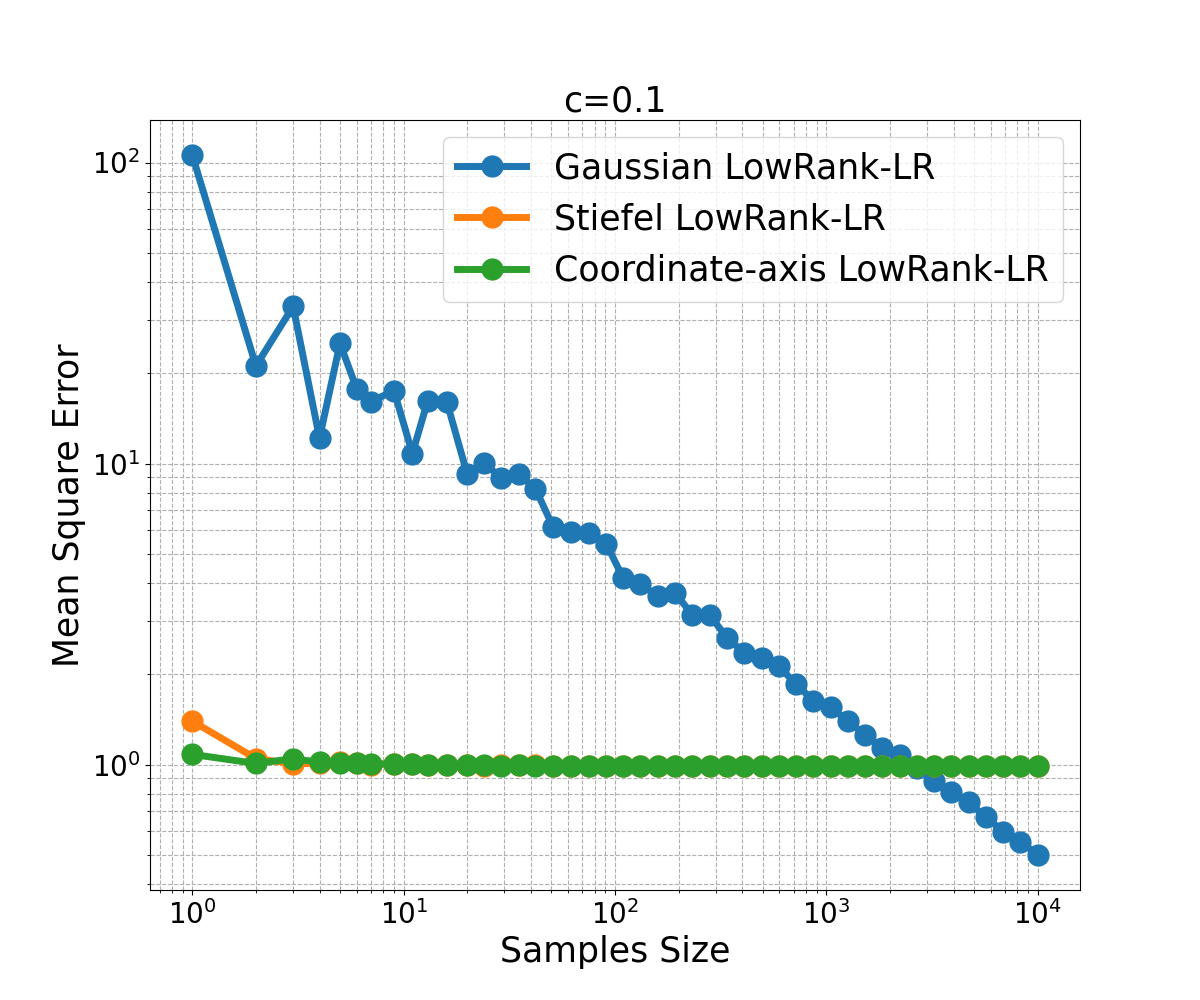}}
\subfloat[$c=0.3$]{
\includegraphics[trim=0cm 0.5cm 0cm 0cm, width=0.3\textwidth]{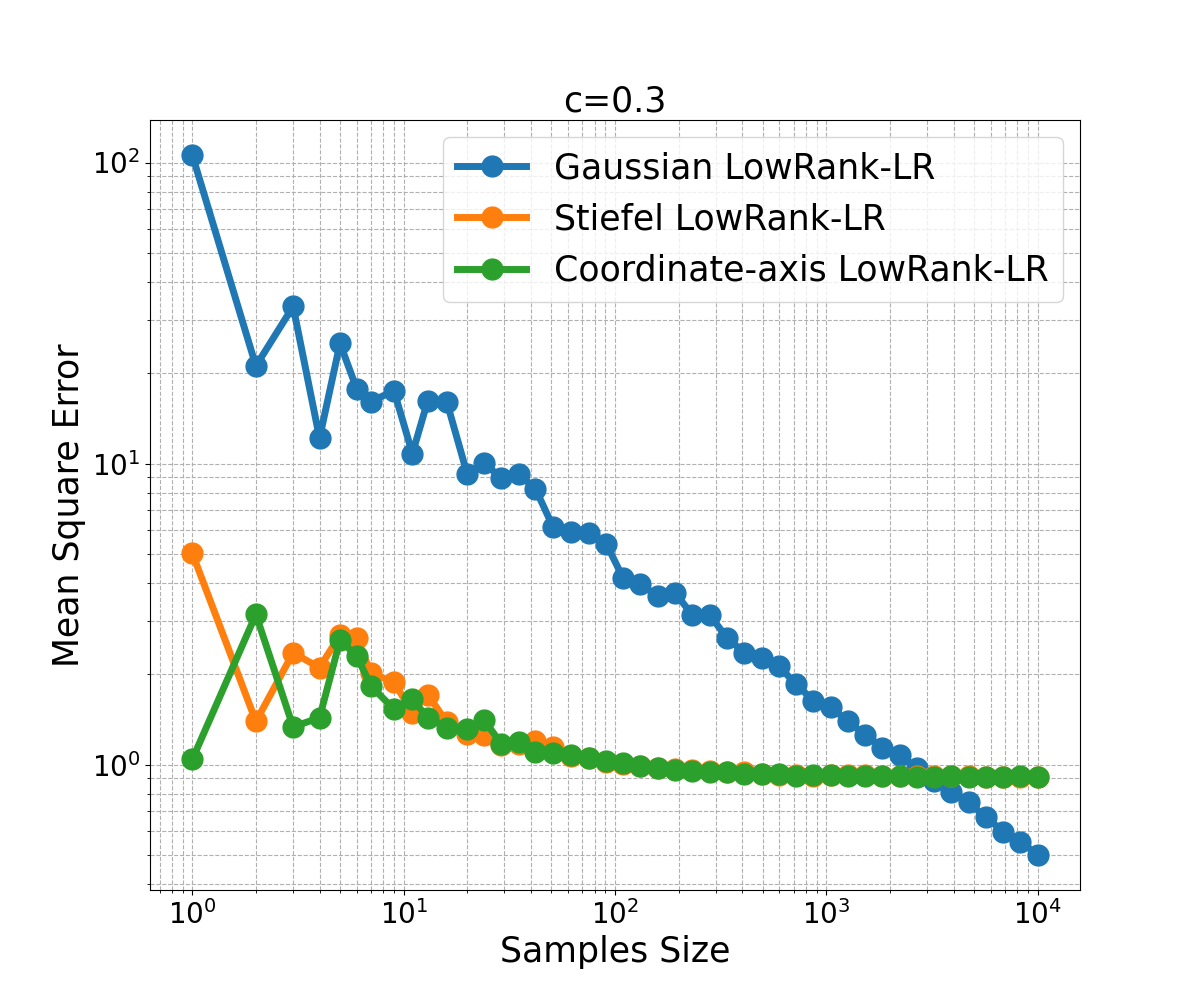}}
\subfloat[$c=0.5$]{
\includegraphics[trim=0cm 0.5cm 0cm 0cm, width=0.3\textwidth]{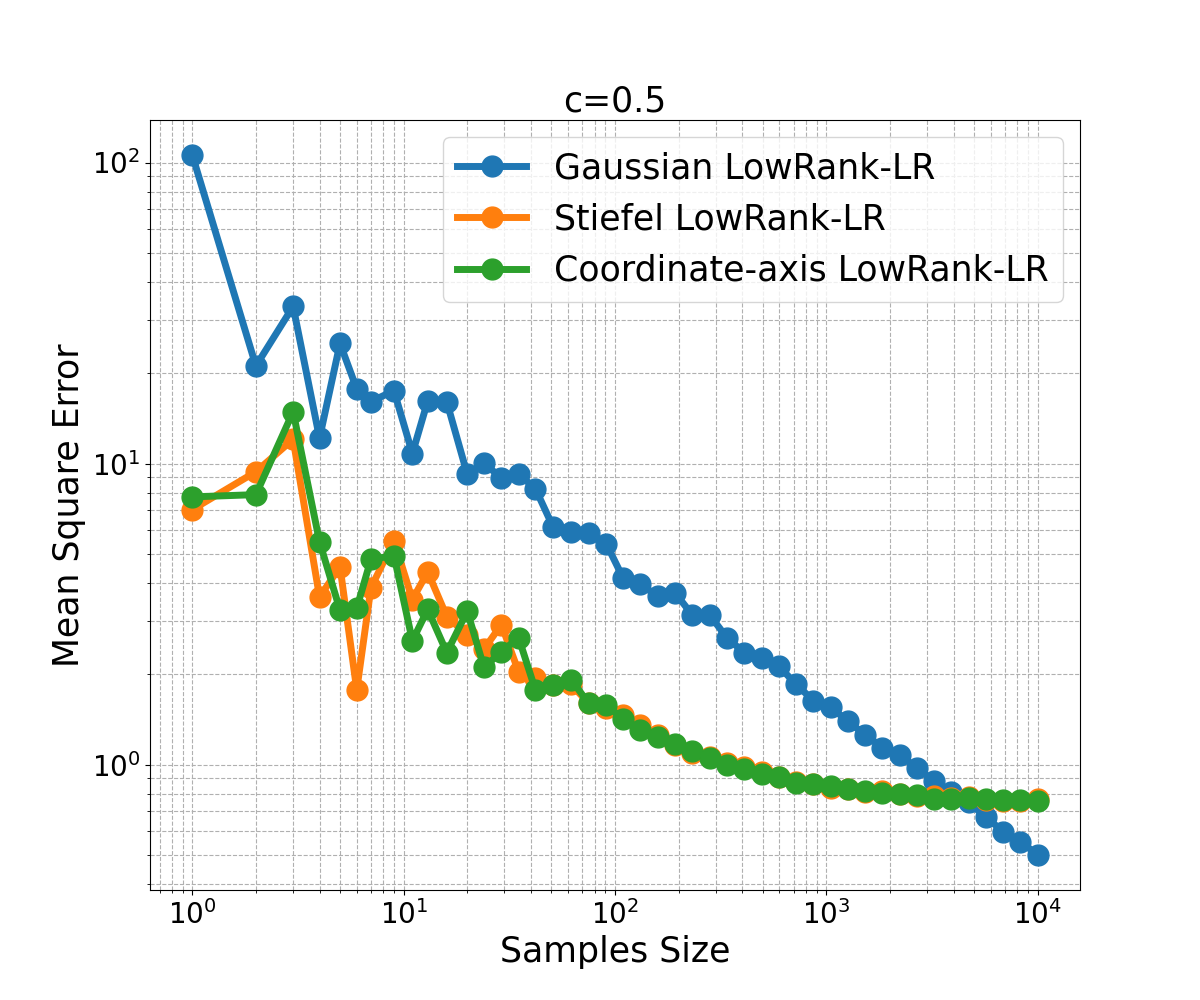}}

\subfloat[$c=0.6$]{
\includegraphics[trim=0cm 0.5cm 0cm 0cm, width=0.3\textwidth]{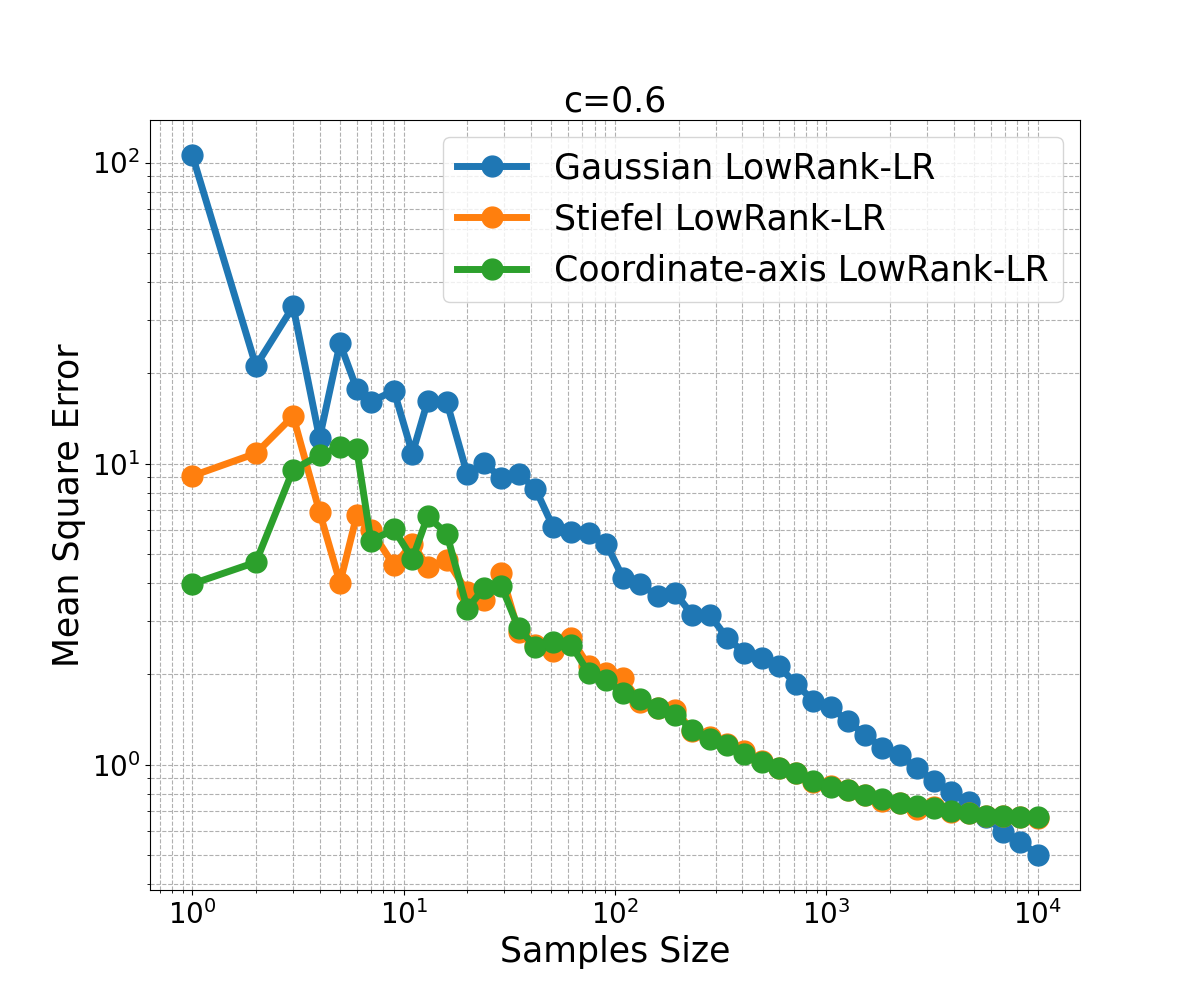}}
\subfloat[$c=0.8$]{
\includegraphics[trim=0cm 0.5cm 0cm 0cm, width=0.3\textwidth]{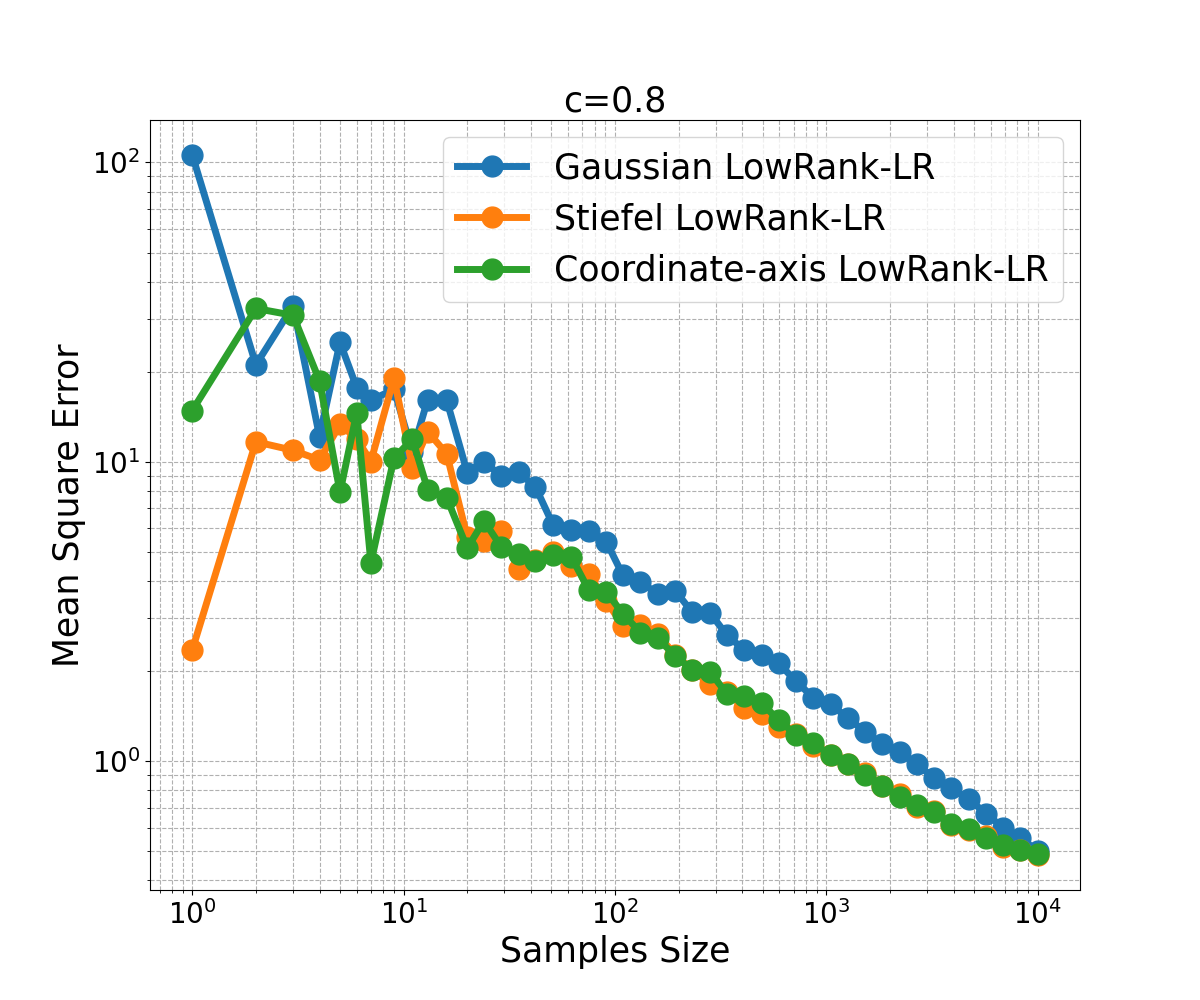}}
\subfloat[$c=1.0$]{
\includegraphics[trim=0cm 0.5cm 0cm 0cm, width=0.3\textwidth]{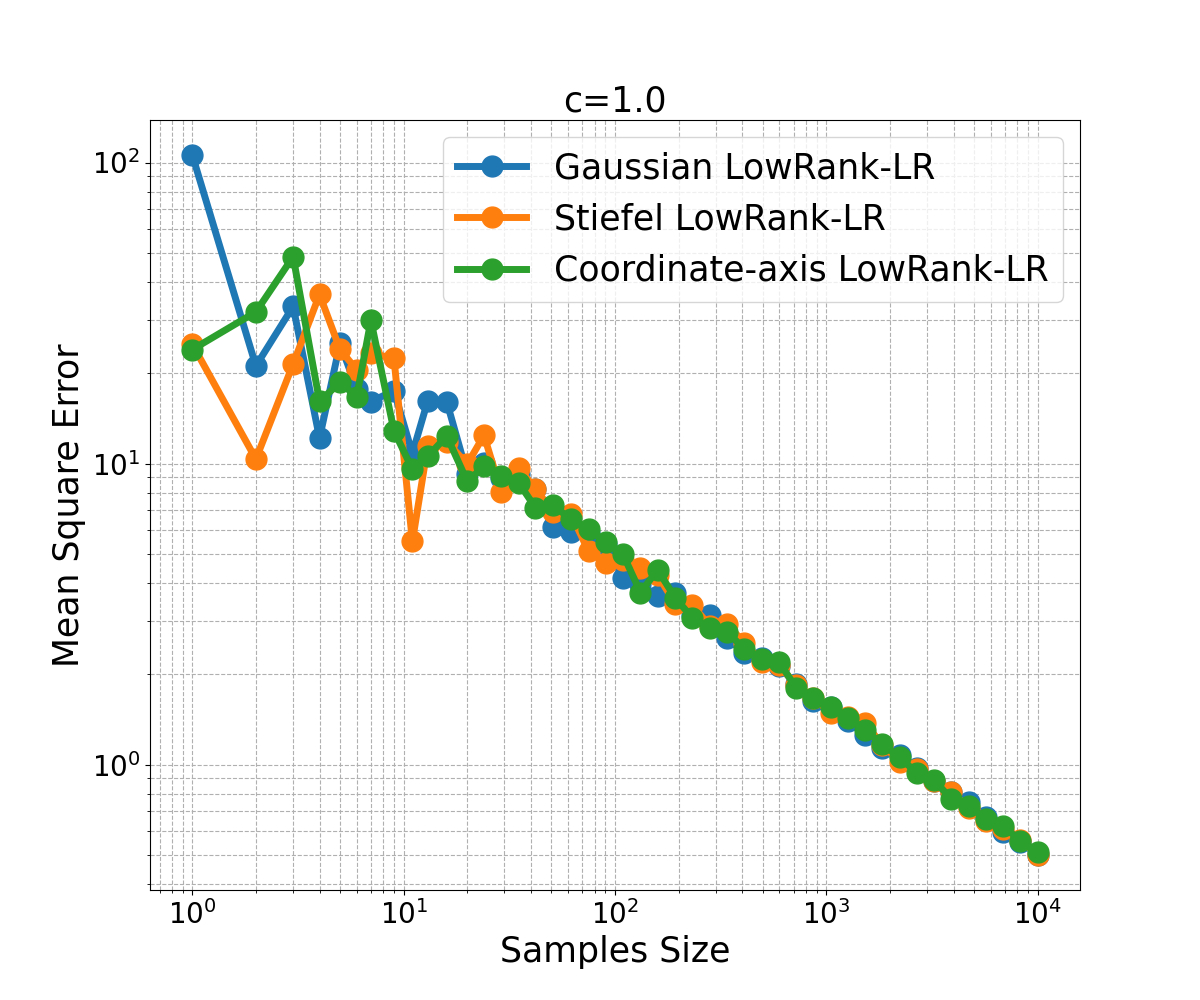}}
\vspace{-0.2cm}
\caption{\textcolor{black}{MSE versus samples plot of independent Likelihood Ratio estimator}}
\vspace{-0.3cm}
\label{fig:indpt_LR}
\end{figure}

\begin{figure}[h]
\centering
\subfloat[$c=0.1$]{
\includegraphics[trim=0cm 0.5cm 0cm 0cm, width=0.3\textwidth]{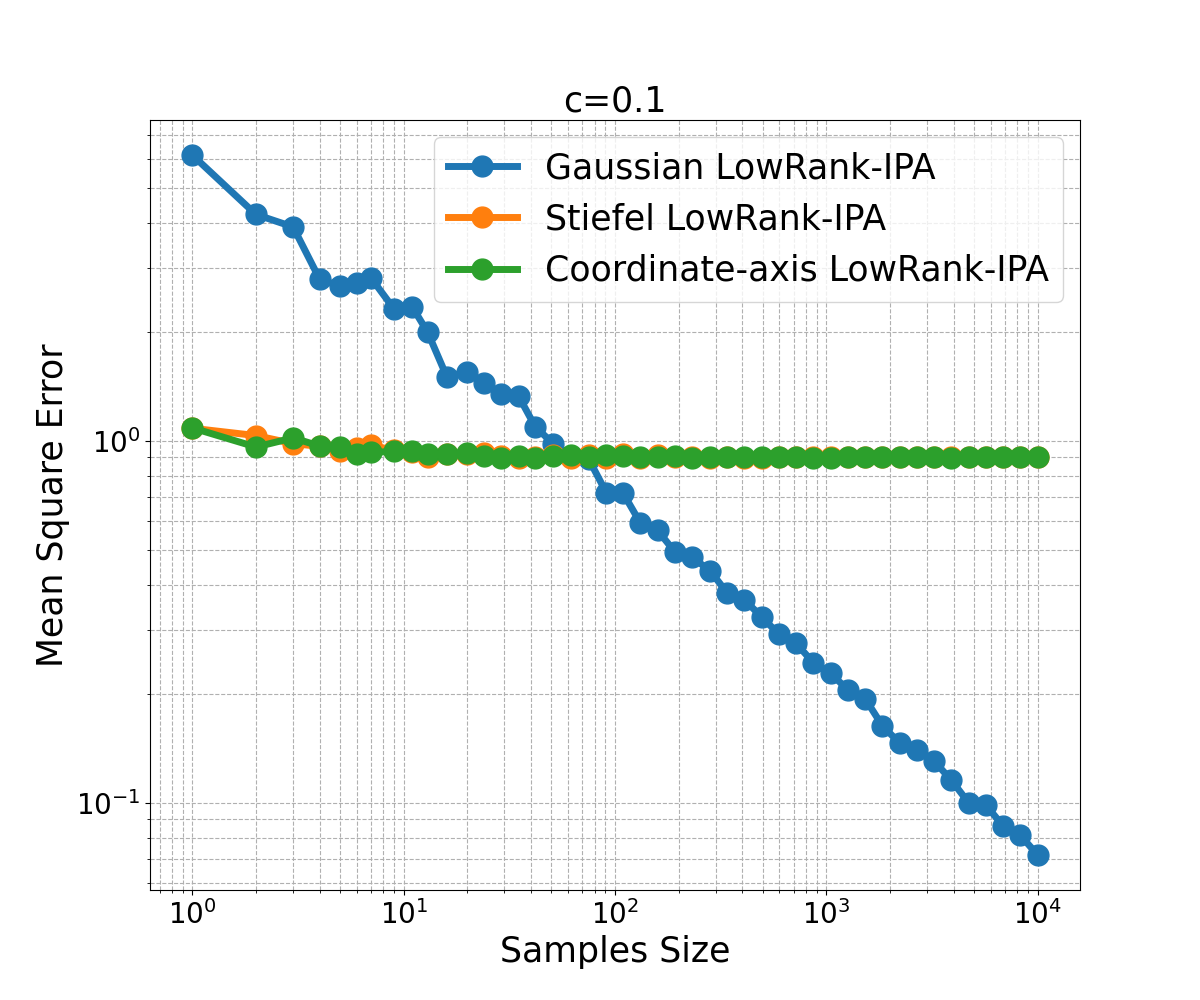}}
\subfloat[$c=0.3$]{
\includegraphics[trim=0cm 0.5cm 0cm 0cm, width=0.3\textwidth]{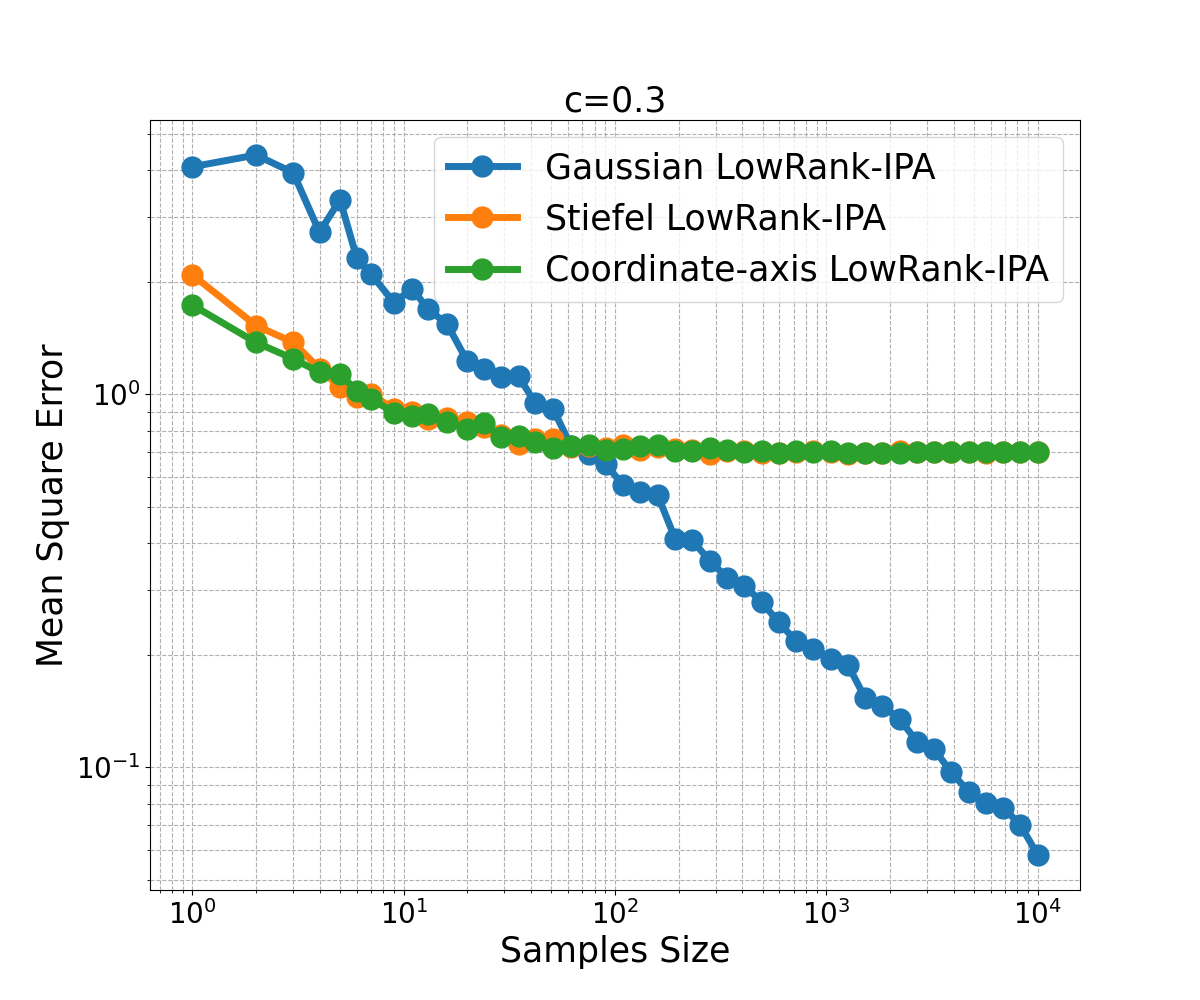}}
\subfloat[$c=0.5$]{
\includegraphics[trim=0cm 0.5cm 0cm 0cm, width=0.3\textwidth]{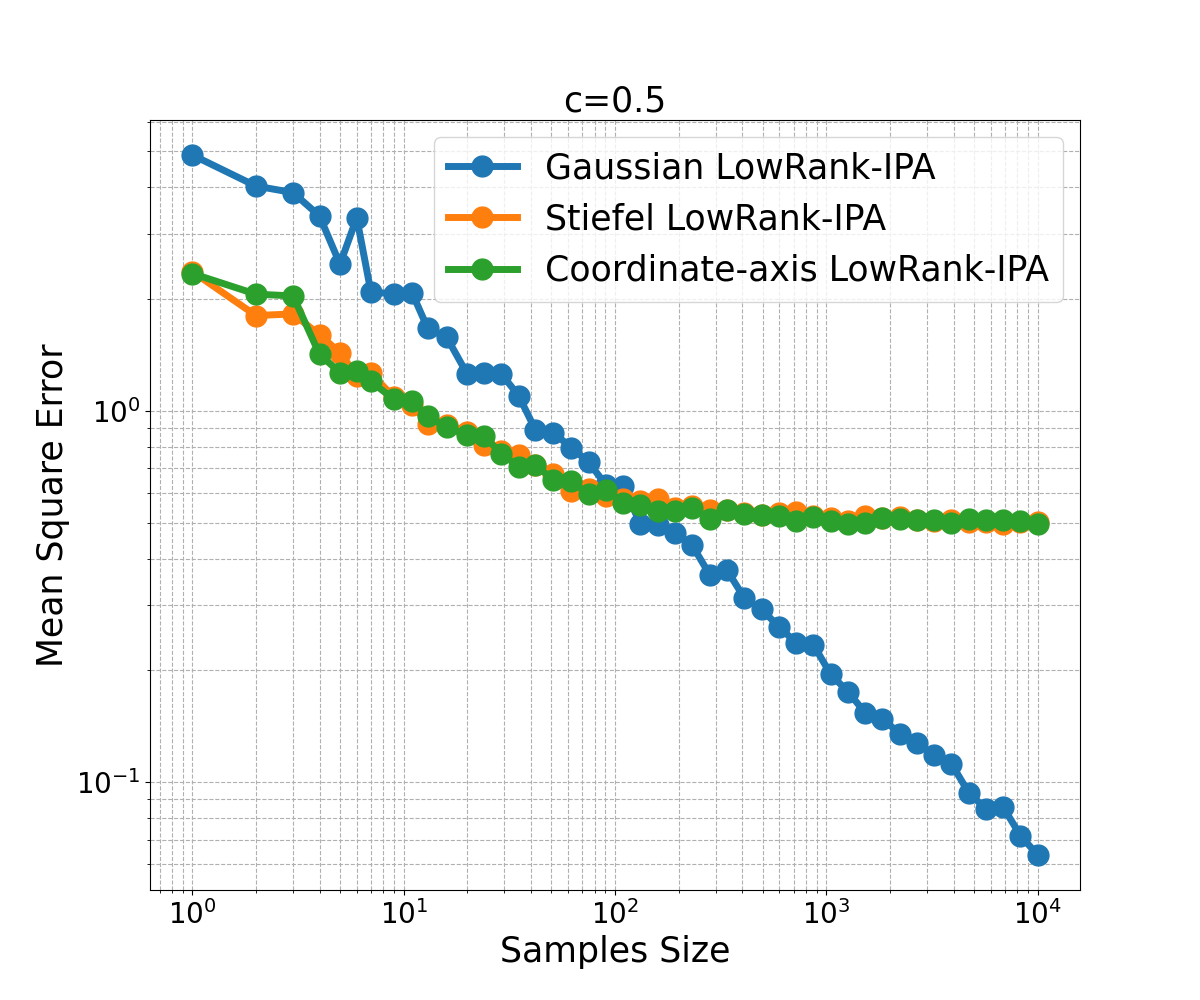}}

\subfloat[$c=0.6$]{
\includegraphics[trim=0cm 0.5cm 0cm 0cm, width=0.3\textwidth]{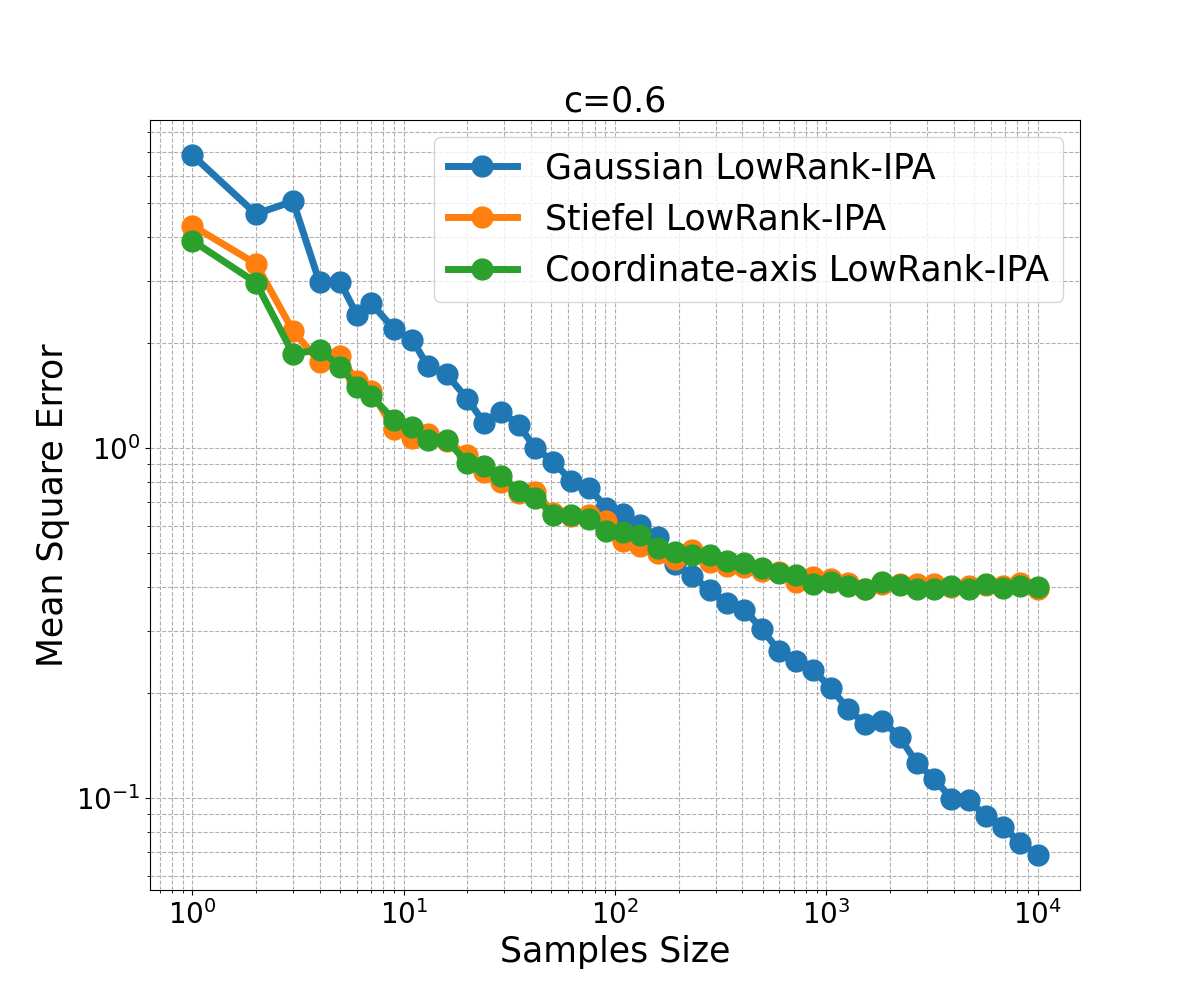}}
\subfloat[$c=0.8$]{
\includegraphics[trim=0cm 0.5cm 0cm 0cm, width=0.3\textwidth]{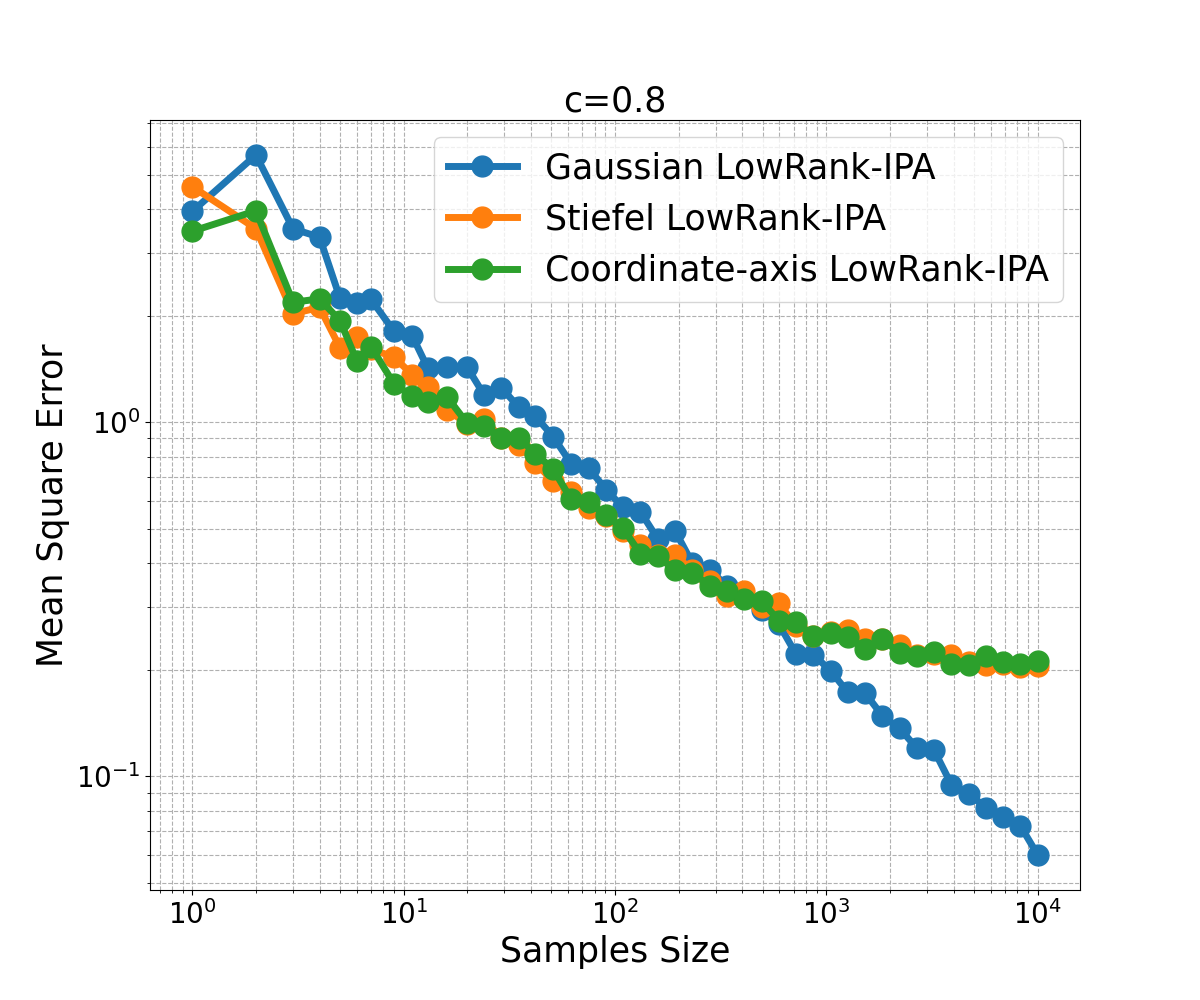}}
\subfloat[$c=1.0$]{
\includegraphics[trim=0cm 0.5cm 0cm 0cm, width=0.3\textwidth]{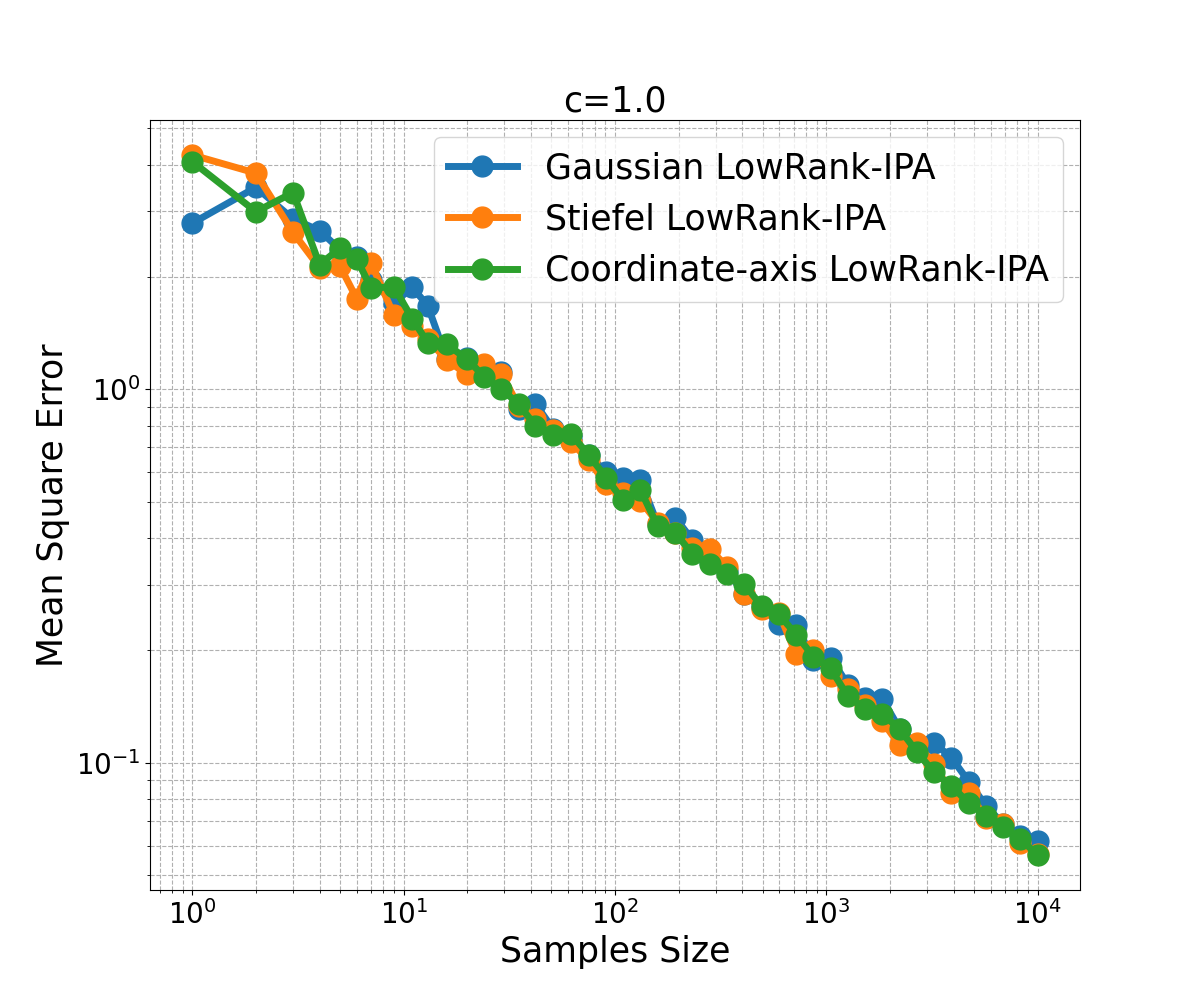}}
\vspace{-0.2cm}
\caption{\textcolor{black}{MSE versus samples plot of independent Infinitesimal Perturbation Analysis estimator}}
\vspace{-0.3cm}
\label{fig:indpt_IPA}
\end{figure}

We first investigate the independent setting to validate the bias-variance trade-off associated with the LowRank-LR and LowRank-IPA gradient estimators. Corresponding results are presented in Figure \ref{fig:indpt_LR} and Figure \ref{fig:indpt_IPA}, where we report the MSE across varying sample sizes and values of the scaling parameter $c$.

A clear bias-variance trade-off emerges as a function of $c$. When $c$ is set near $1.0$, the overall MSE is variance-dominated, meaning the estimator is unbiased and variance contributes to the MSE. Ideally, the MSE should decay toward zero as the sample size increases. In contrast, when $c$ approaches $0.1$, the MSE becomes bias-dominated, with bias being the primary source of estimation error. The independent low-rank estimator has a smaller MSE than the Gaussian one when the sample sizes are small. When the sample size increases, the independent estimator fails to decrease its MSE because of the bias. Specifically, increasing $c$ reduces estimation bias but amplifies variance, while decreasing $c$ suppresses variance at the cost of increased bias. The results validate Theorem \ref{thm2} and Remark \ref{remark1}, showing that our methods decrease the variance of the one-shot gradient estimator.

\begin{figure}[h]
\centering
\subfloat[$c=0.1$]{
\includegraphics[trim=0cm 0.5cm 0cm 0cm, width=0.3\textwidth]{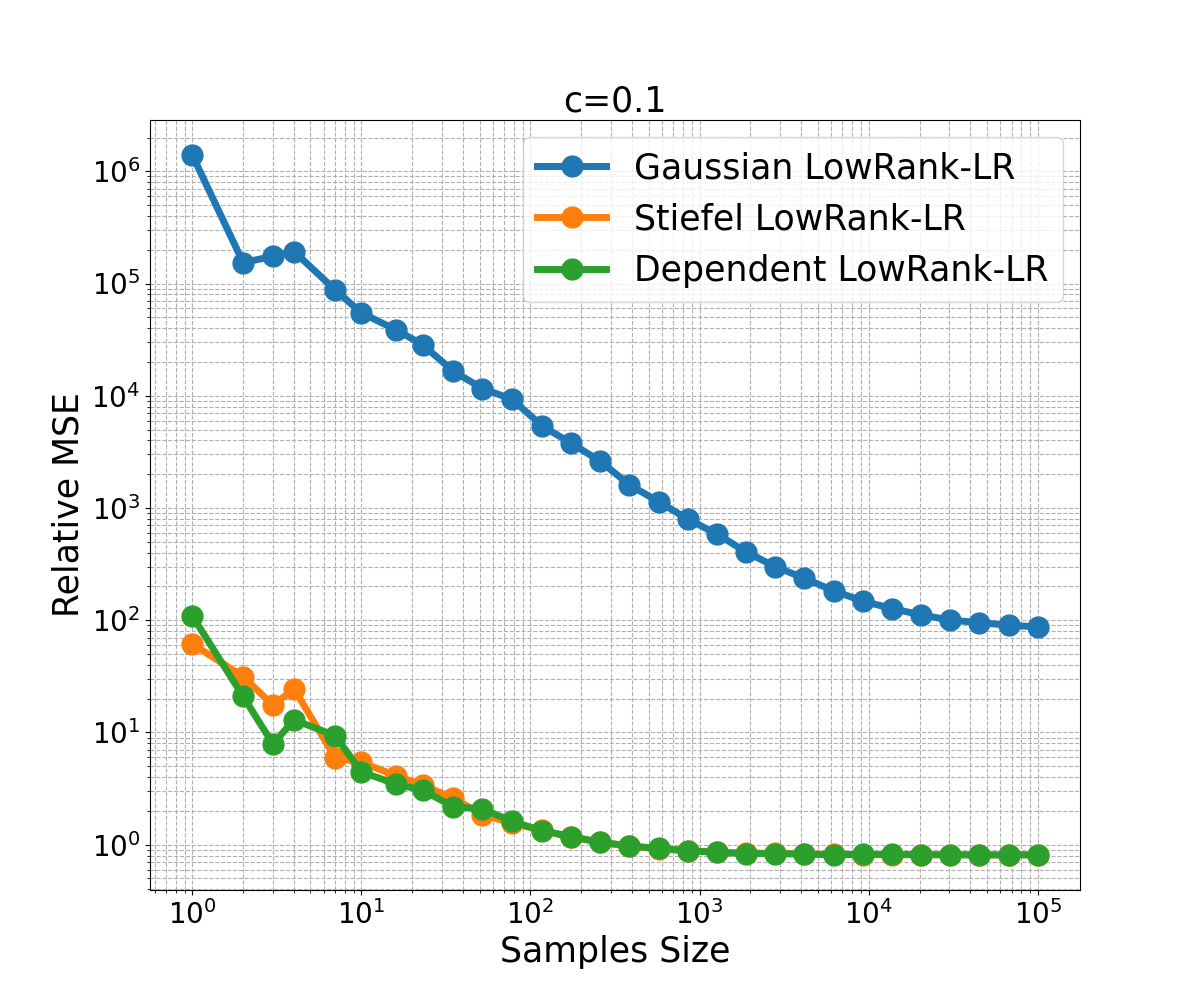}}
\subfloat[$c=0.3$]{
\includegraphics[trim=0cm 0.5cm 0cm 0cm, width=0.3\textwidth]{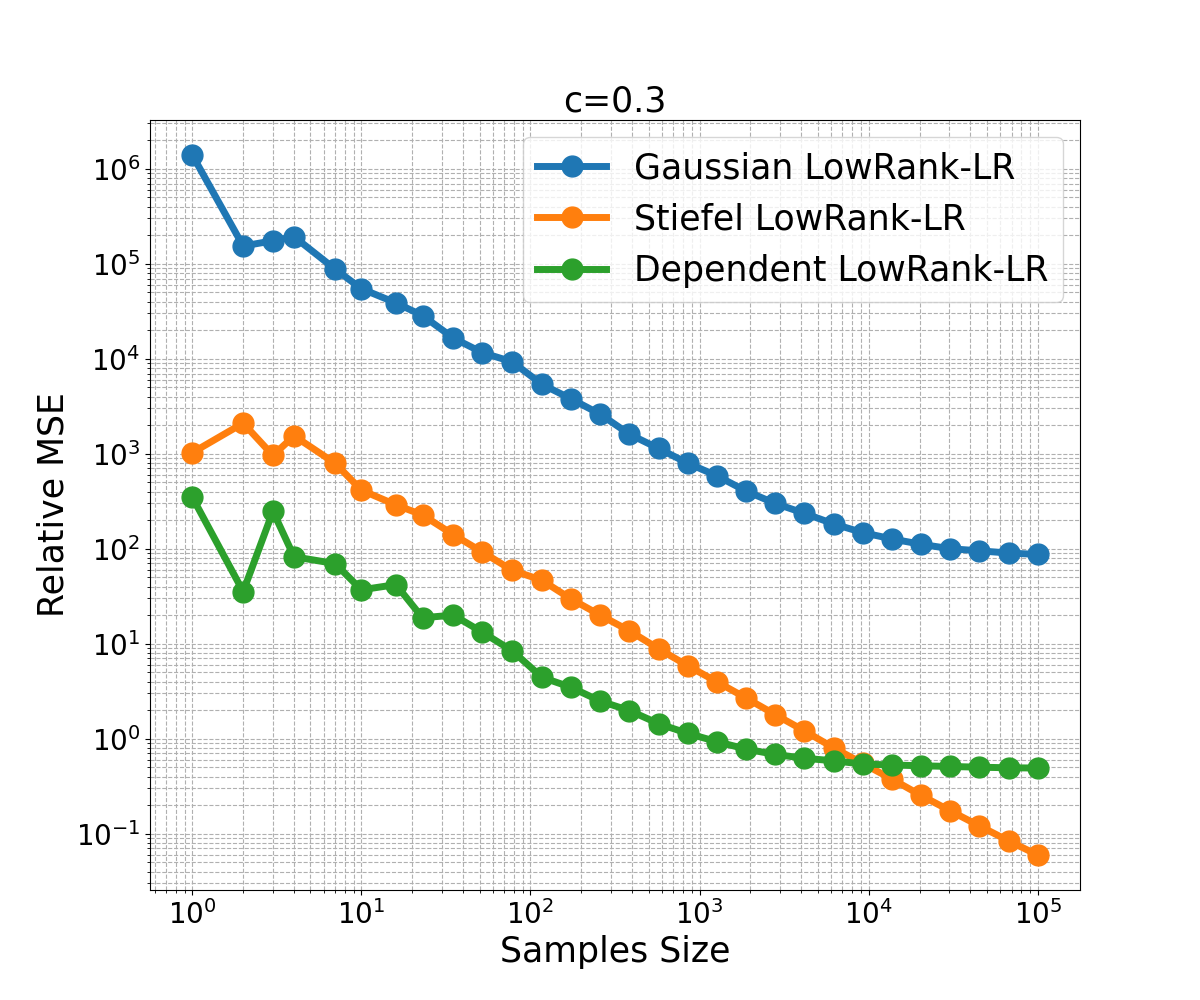}}
\subfloat[$c=0.5$]{
\includegraphics[trim=0cm 0.5cm 0cm 0cm, width=0.3\textwidth]{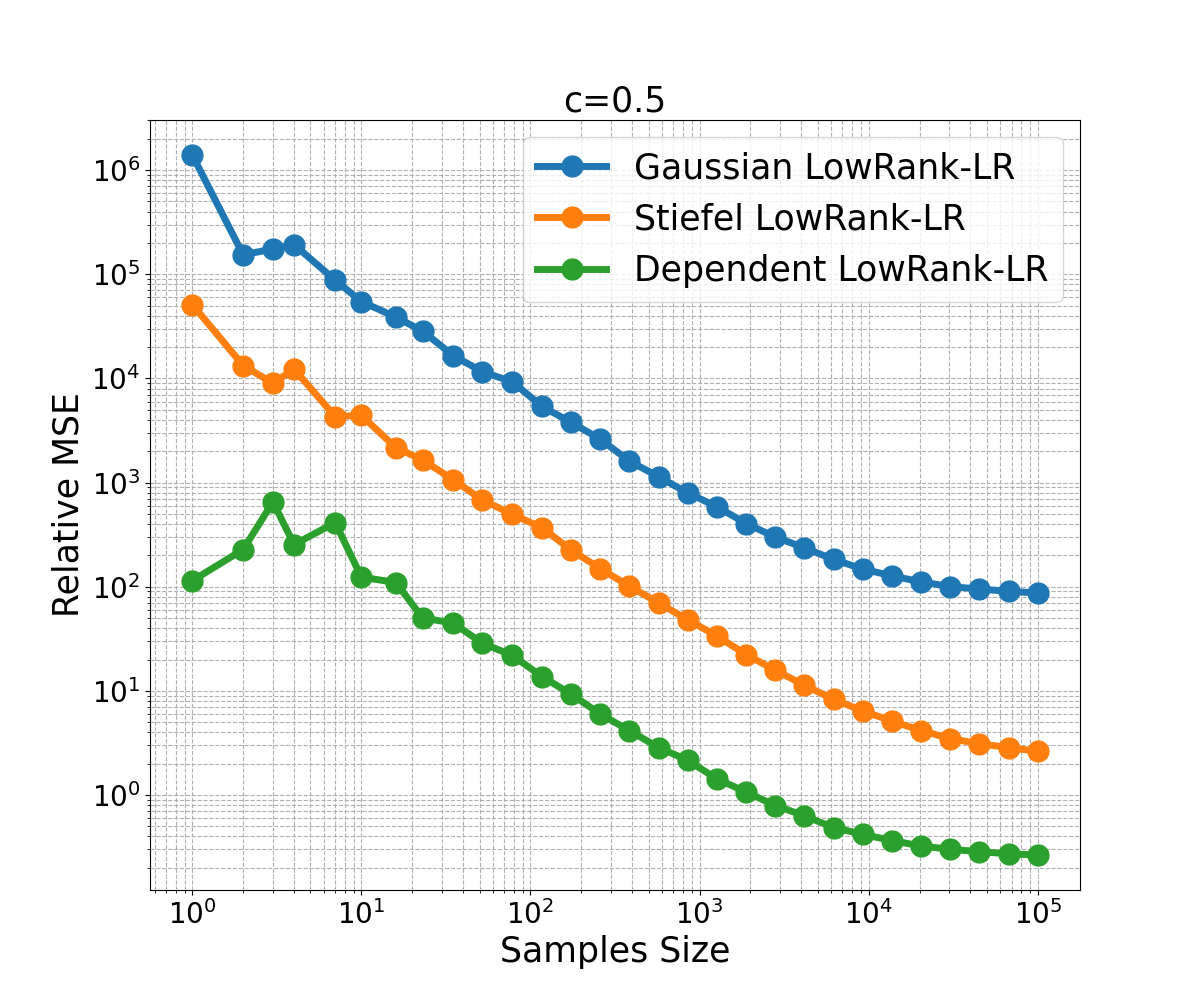}}

\subfloat[$c=0.6$]{
\includegraphics[trim=0cm 0.5cm 0cm 0cm, width=0.3\textwidth]{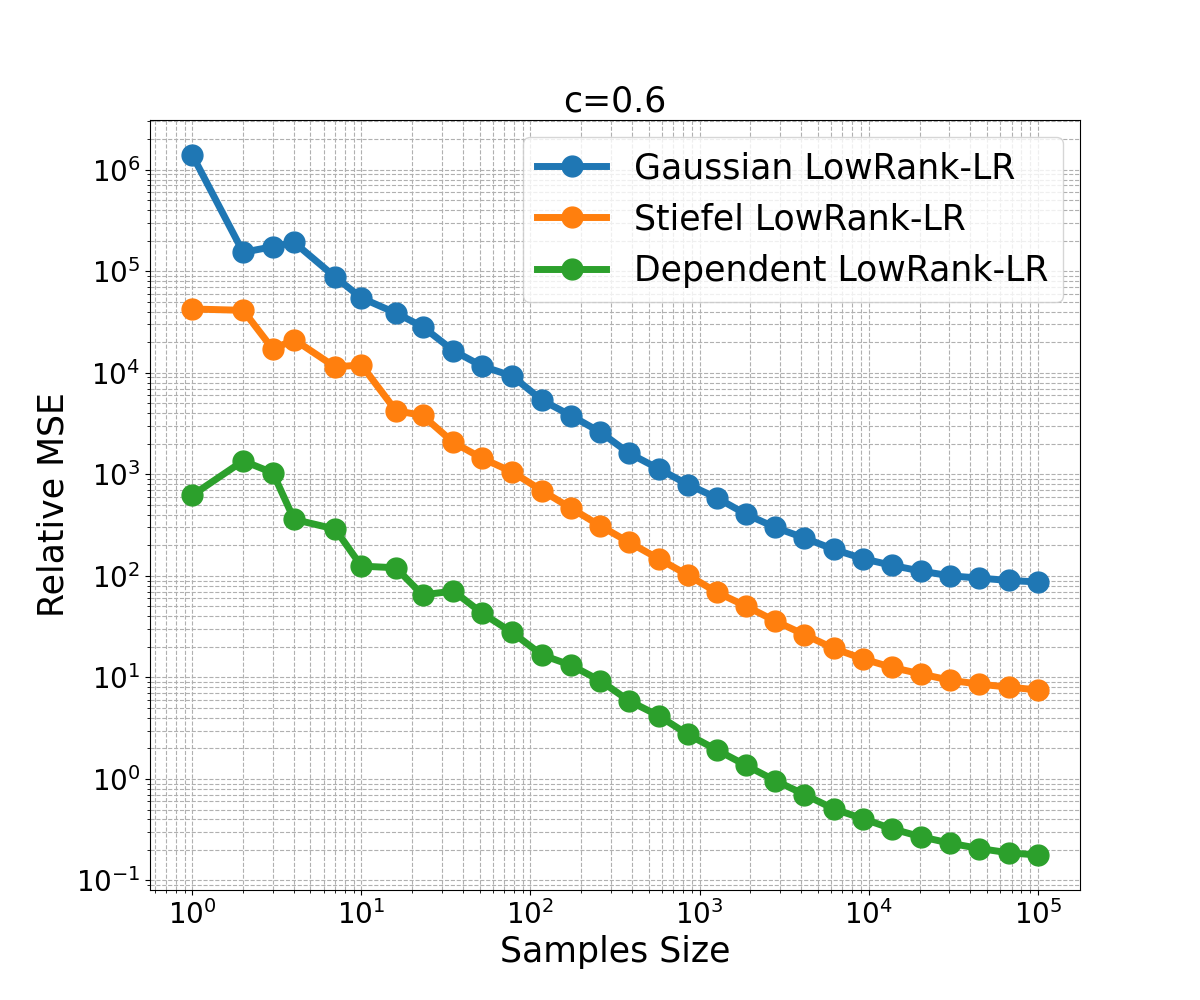}}
\subfloat[$c=0.8$]{
\includegraphics[trim=0cm 0.5cm 0cm 0cm, width=0.3\textwidth]{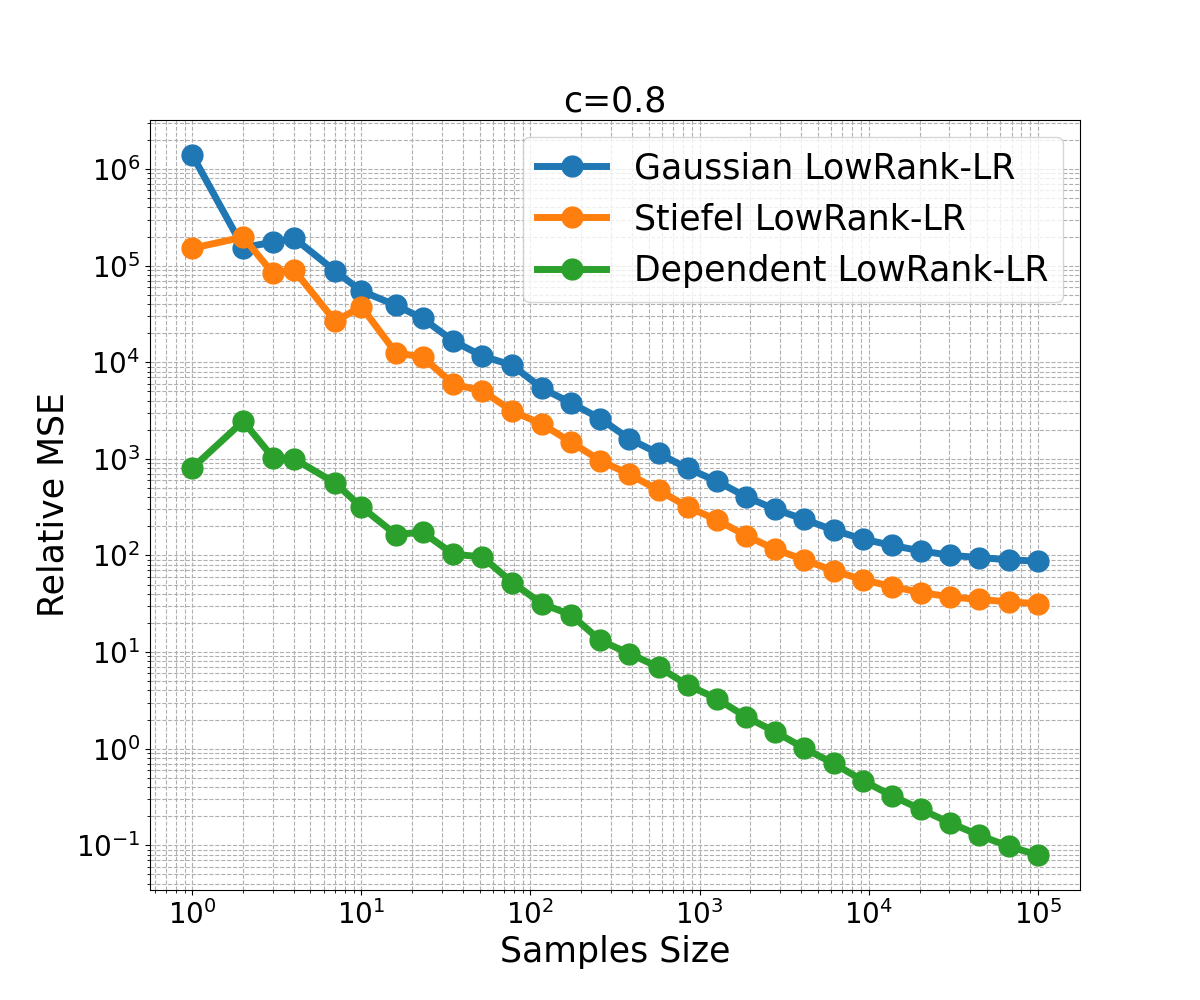}}
\subfloat[$c=1.0$]{
\includegraphics[trim=0cm 0.5cm 0cm 0cm, width=0.3\textwidth]{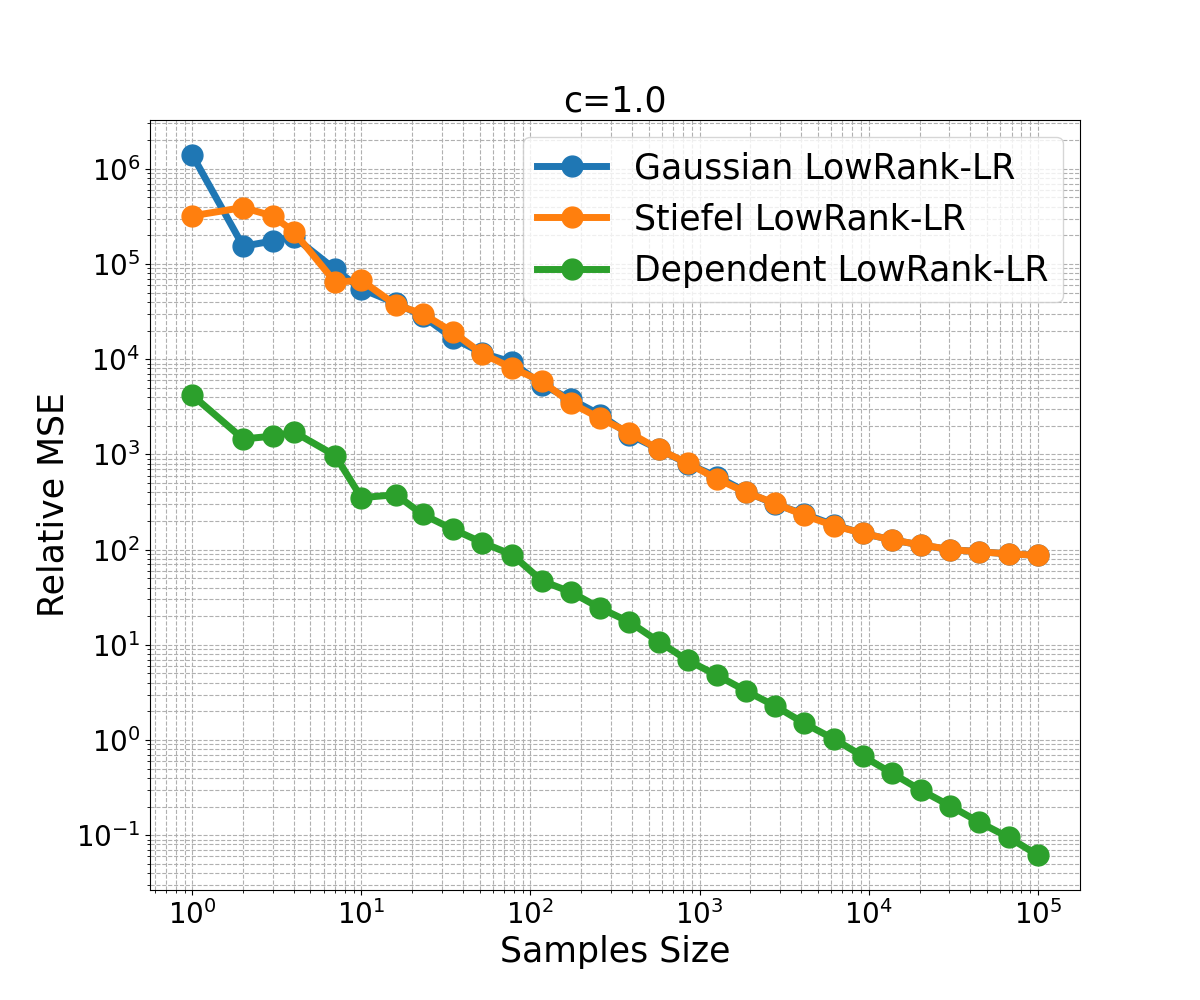}}
\vspace{-0.2cm}
\caption{\textcolor{black}{MSE versus samples plot of dependent Likelihood Ratio (LR) estimator}}
\vspace{-0.3cm}
\label{fig:dpt_LR}
\end{figure}

\begin{figure}[h]
\centering
\subfloat[$c=0.1$]{
\includegraphics[trim=0cm 0.5cm 0cm 0cm, width=0.3\textwidth]{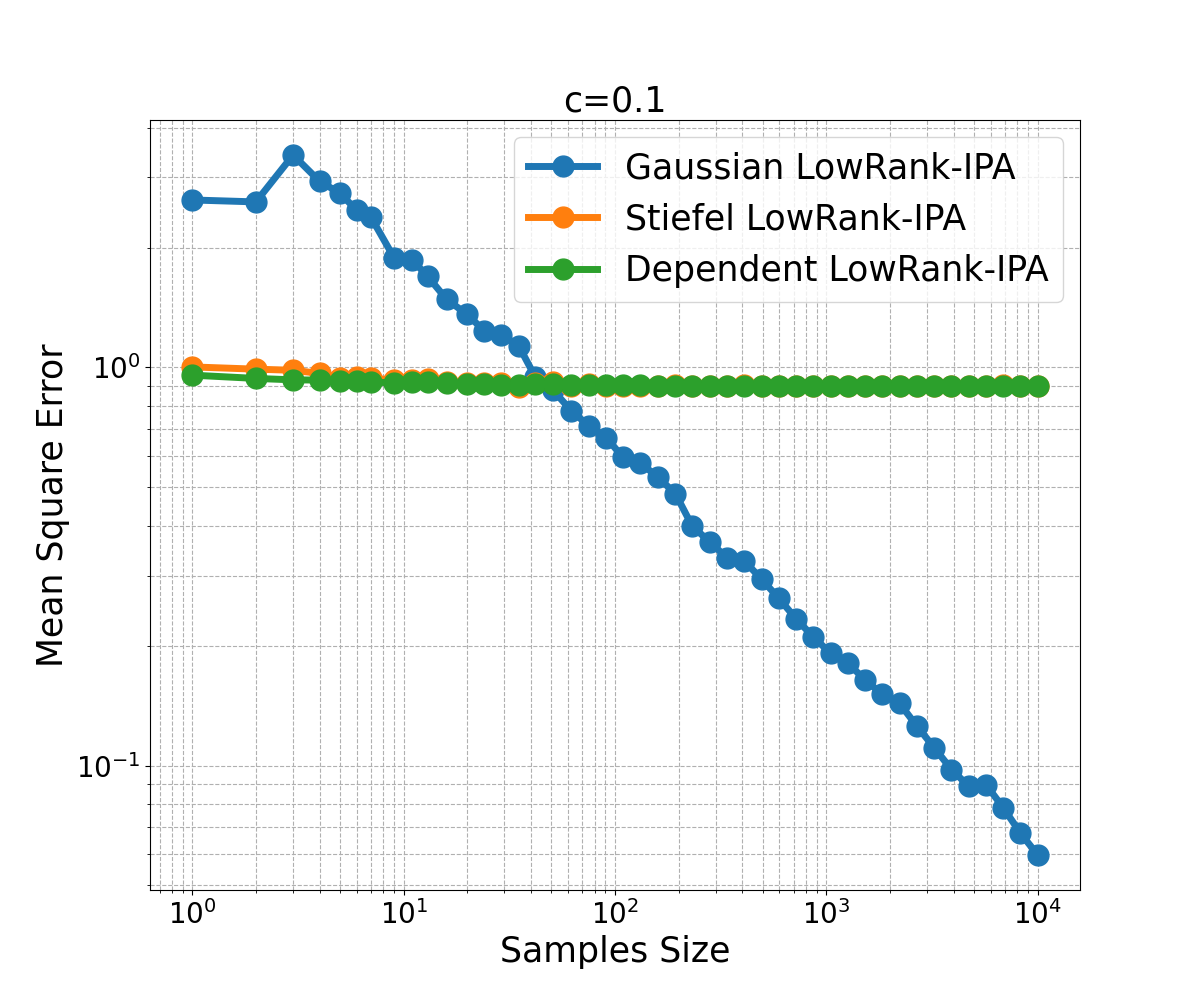}}
\subfloat[$c=0.3$]{
\includegraphics[trim=0cm 0.5cm 0cm 0cm, width=0.3\textwidth]{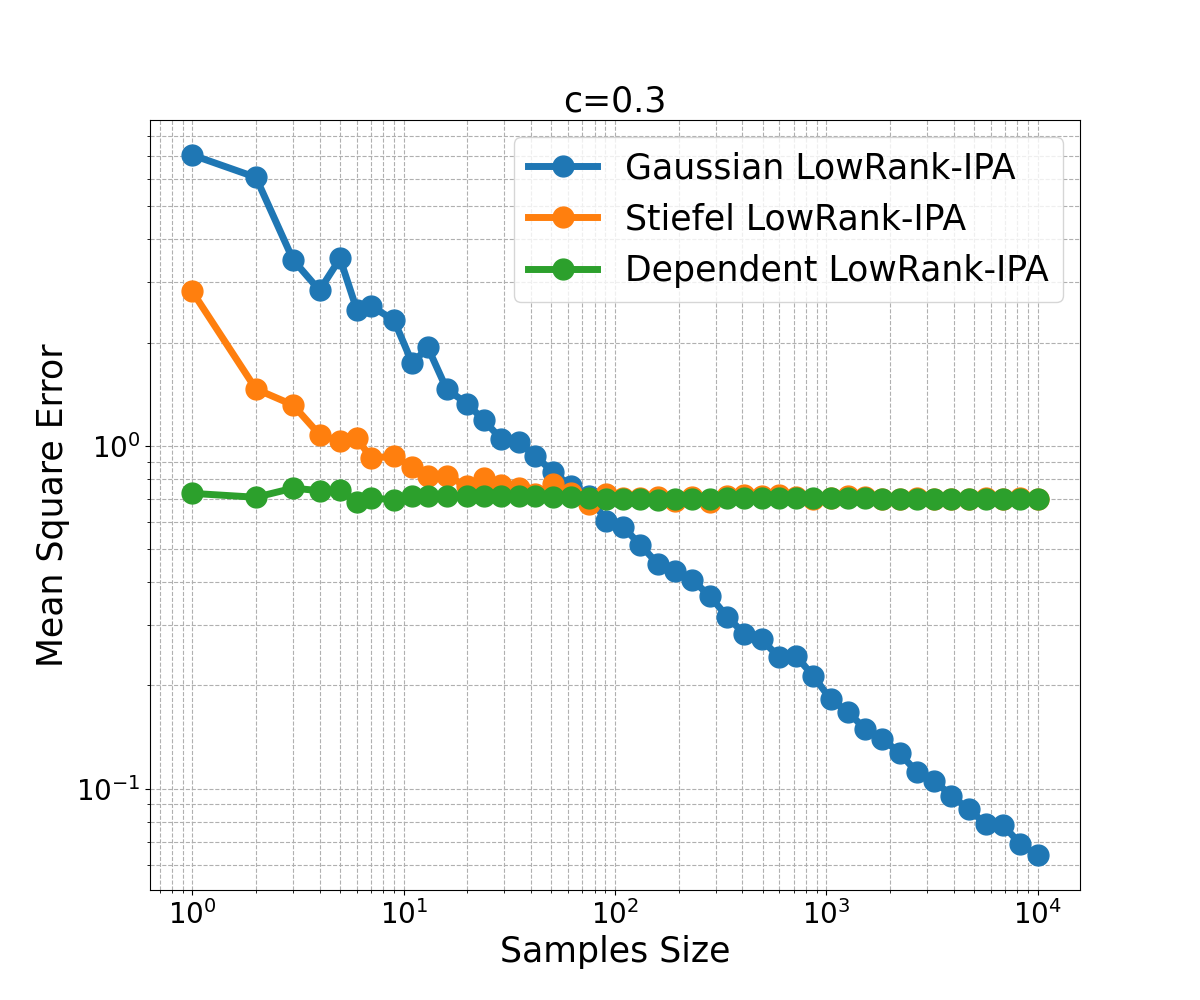}}
\subfloat[$c=0.5$]{
\includegraphics[trim=0cm 0.5cm 0cm 0cm, width=0.3\textwidth]{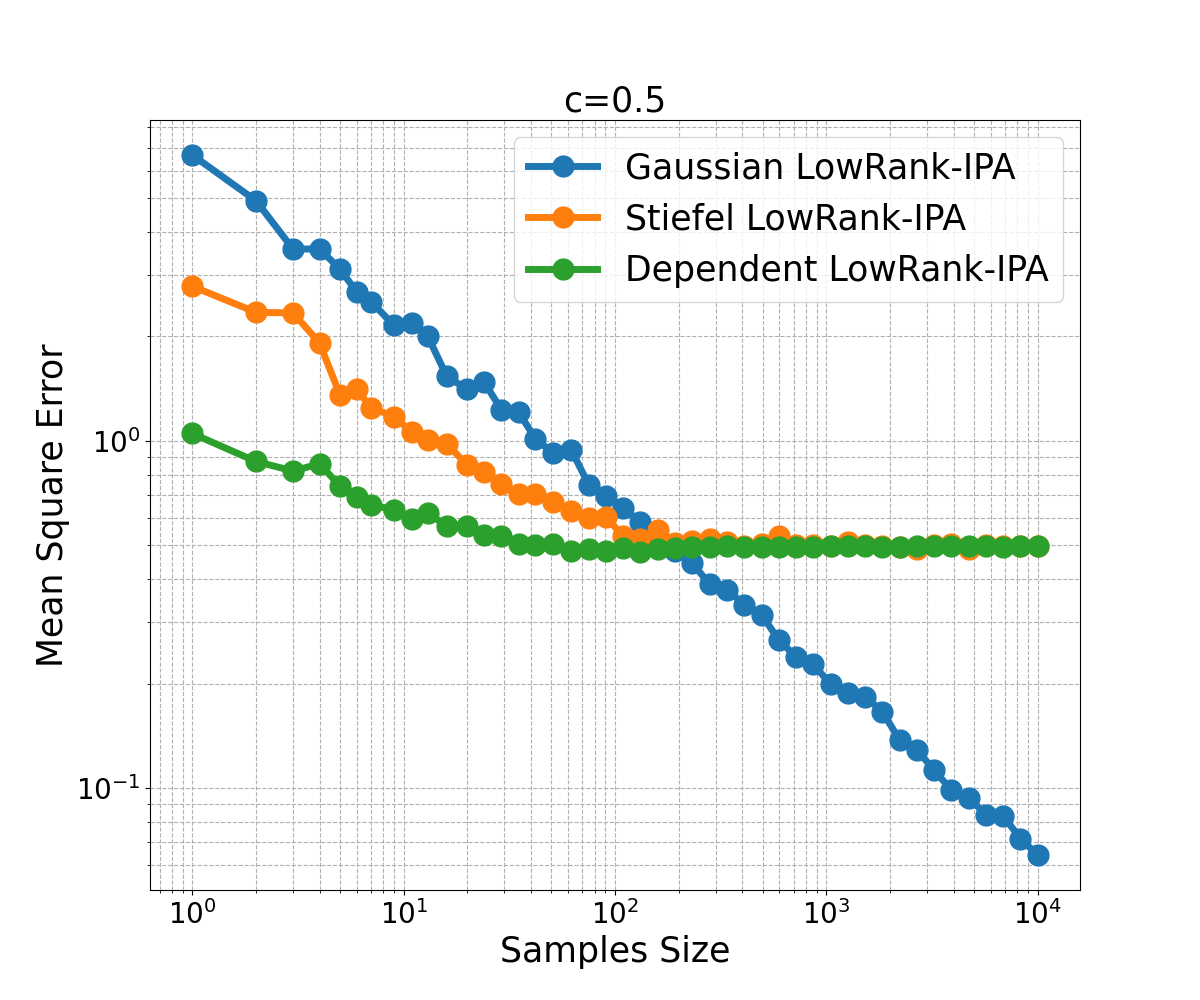}}

\subfloat[$c=0.6$]{
\includegraphics[trim=0cm 0.5cm 0cm 0cm, width=0.3\textwidth]{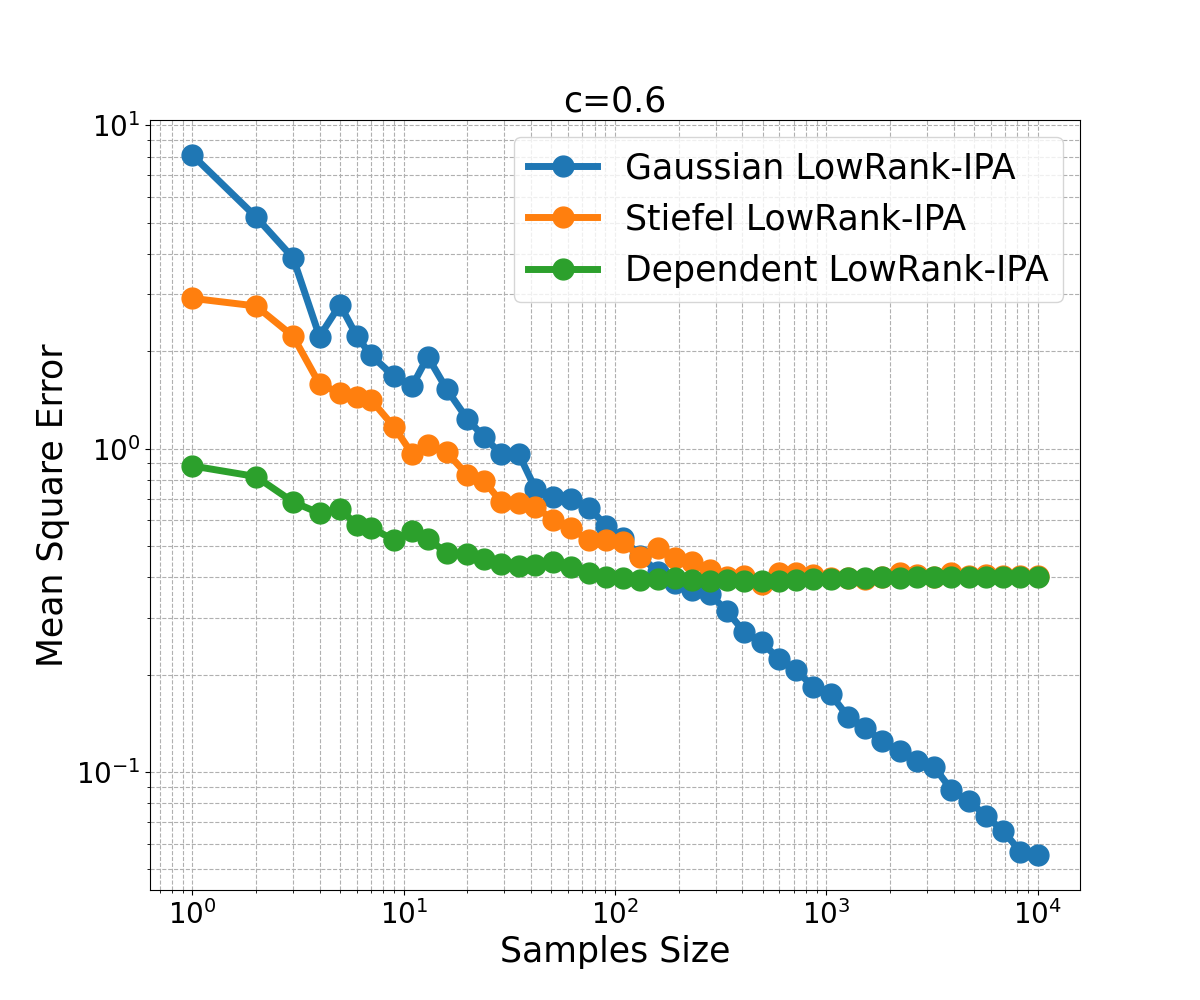}}
\subfloat[$c=0.8$]{
\includegraphics[trim=0cm 0.5cm 0cm 0cm, width=0.3\textwidth]{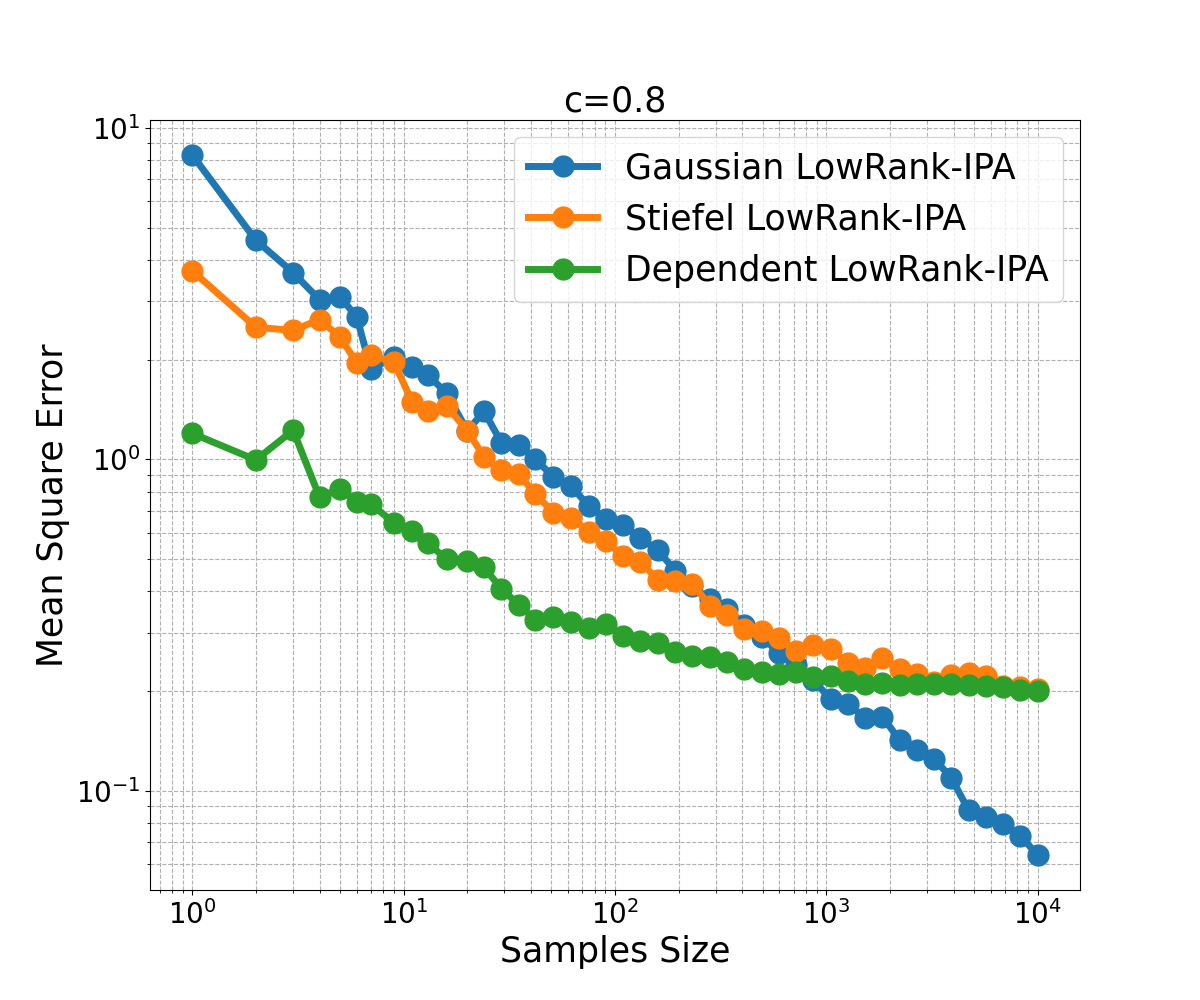}}
\subfloat[$c=1.0$]{
\includegraphics[trim=0cm 0.5cm 0cm 0cm, width=0.3\textwidth]{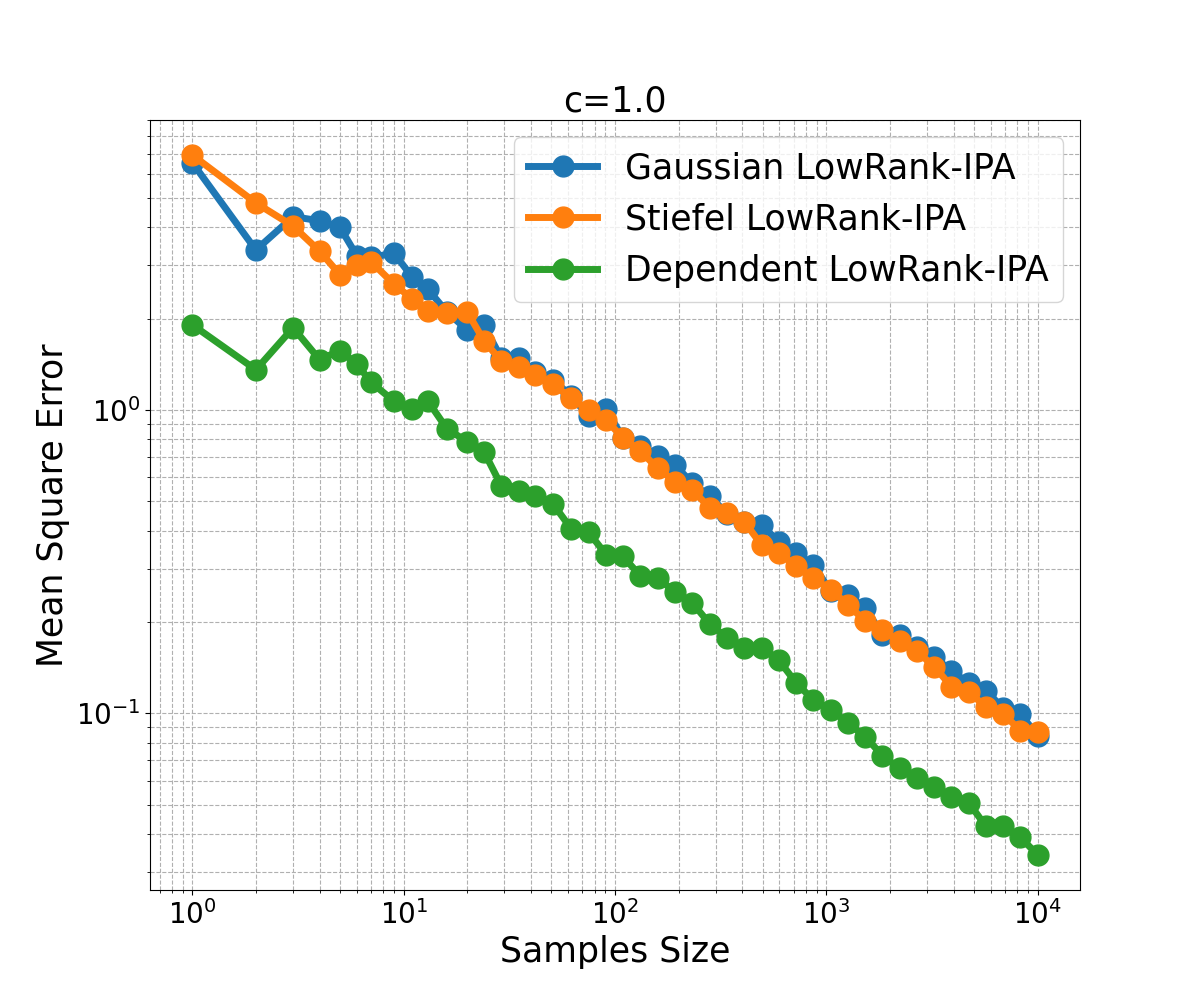}}
\vspace{-0.2cm}
\caption{\textcolor{black}{MSE versus samples plot of dependent Infinitesimal Perturbation Analysis (IPA) estimator}}
\vspace{-0.3cm}
\label{fig:dpt_IPA}
\end{figure}

We next study the dependent gradient estimator setting. This example allows MSE of the gradient estimator to be characterized explicitly, since the true gradient is available in closed form. We therefore implement the dependent subspace sampling rule in Proposition \ref{prop:sampler_optimal} for the low-rank gradient estimator and use it to provide a clear empirical validation of the theory.

In Figures \ref{fig:dpt_LR} and \ref{fig:dpt_IPA}, we report the MSE as the sample size increases. Across both the LR and IPA settings, the dependent low-rank estimator consistently achieves lower MSE than its instance-independent counterpart. In turn, both structured low-rank designs substantially outperform the traditional Gaussian baseline. This advantage is particularly clear in Figure \ref{fig:dpt_LR}, where the dependent LowRank-LR estimator remains uniformly below both the independent structured samplers and the vanilla Gaussian LowRank-LR across all values of $c$. Overall, these results show that incorporating instance-dependent information into the subspace design leads to a clear additional gain beyond isotropic sampling, while both improve markedly over standard Gaussian subspace sampling. This empirical pattern is fully consistent with Theorem \ref{thm:SigmaP2}, which predicts that the dependent low-rank estimator attains the minimum MSE within the admissible class.

\subsection{LLM Training}\label{sec6.2}
\subsubsection{Fine-tuning LLM  with likelihood ratio estimator.}\label{sec6.2.1}
We conduct fine-tuning on \textbf{RoBERTa-large} (355M parameters) and evaluate its performance across six standard classification datasets: \textbf{SST-2} (2 classes), \textbf{SST-5} (5 classes), \textbf{SNLI} (3 classes), \textbf{MNLI} (3 classes), \textbf{RTE} (2 classes), and \textbf{TREC} (6 classes). Our baselines include zero-shot in-context learning, full fine-tuning with the Adam optimizer (essentially IPA gradient estimation, we denote this baseline as Vanilla IPA), LowRank-IPA, and several LowRank-LR variants. All experiments use consistent hyperparameters: batch size of $64$, learning rate of $1 \times 10^{-6}$, lazy update interval of $50$ steps, and rank $4$ for low-rank perturbations. 

As shown in Table \ref{tab:zo_acc}, Vanilla IPA achieves the strongest performance overall, which is expected since it updates the full parameter space and therefore has access to the most complete gradient information. This accuracy advantage, however, comes at a much higher memory cost, as also reflected in Table \ref{tab:mem}. Among the low-rank LR methods, our proposed Stiefel LowRank-LR delivers the strongest overall performance, attaining the highest accuracy on SST-5, SNLI, RTE, and TREC. This suggests that our orthogonally structured subspace design can identify informative descent directions more effectively, leading to better optimization and stronger downstream accuracy under the same low-rank query budget. This advantage is also reflected in Figure \ref{fig:convergence_zo}, where Stiefel sampling generally exhibits more favorable training dynamics than Gaussian sampling.

Compared with the existing Gaussian LowRank-LR baseline, both of our structured subspace designs show clear advantages. Gaussian LowRank-LR consistently improves over Vanilla LR, confirming the benefit of low-rank subspace perturbation itself, but it is generally less competitive than our Stiefel design, with the gap being most visible on SST-5, SNLI, RTE, and TREC. Our Coordinate LowRank-LR achieves the best result on SST-2 and remains competitive on several other tasks, although its performance is less uniform than that of Stiefel across the full benchmark suite. Overall, these results show that while low-rank subspace optimization already improves over the vanilla LR baseline, our proposed structured subspace designs further strengthen its effectiveness, with Stiefel providing the most robust gains over the Gaussian alternative.

\begin{figure}[h]
\centering
\subfloat[SST-2]{
\includegraphics[trim=0cm 0.5cm 0cm 0cm, width=0.3\textwidth]{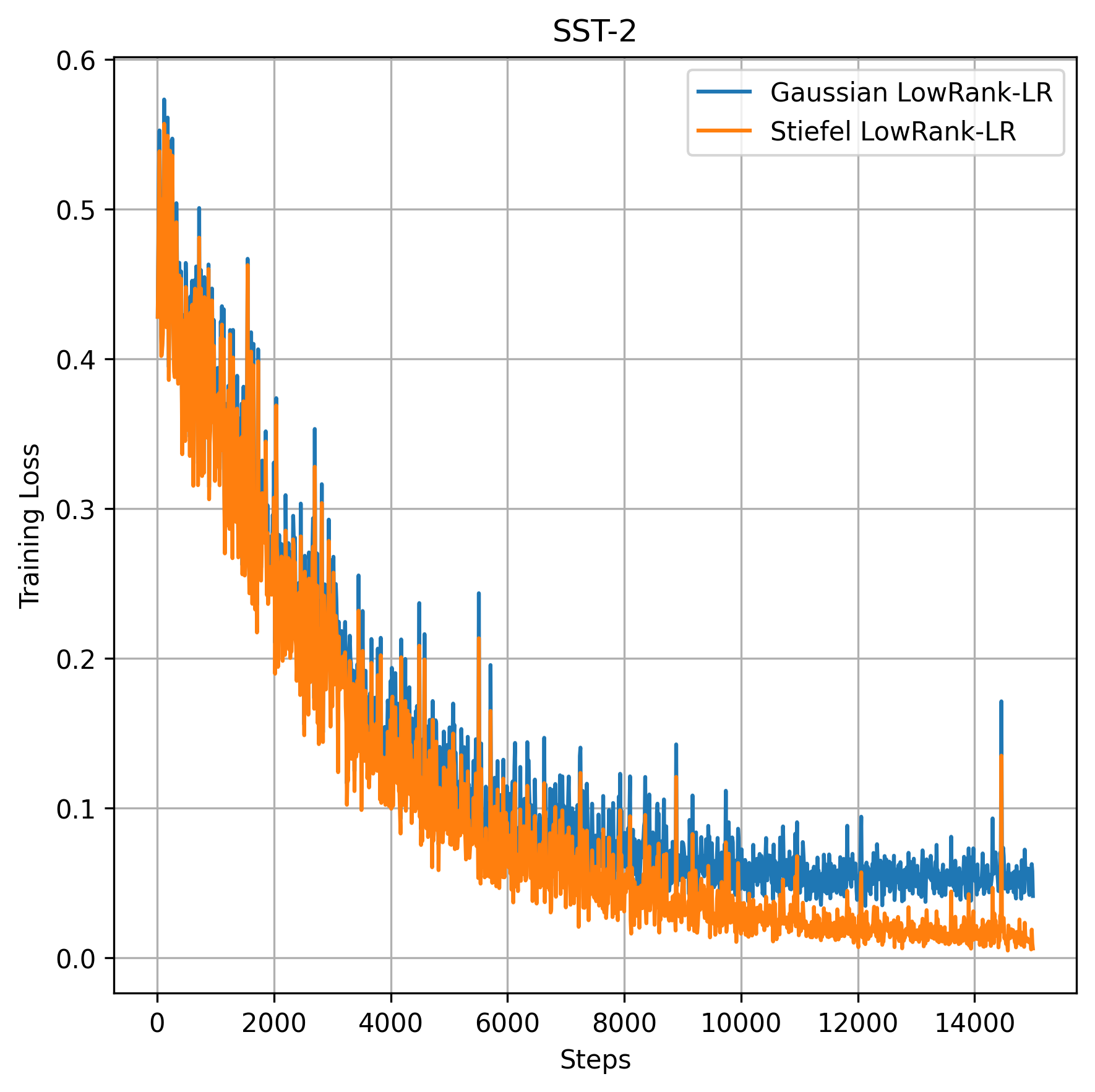}}
\subfloat[SST-5]{
\includegraphics[trim=0cm 0.5cm 0cm 0cm, width=0.3\textwidth]{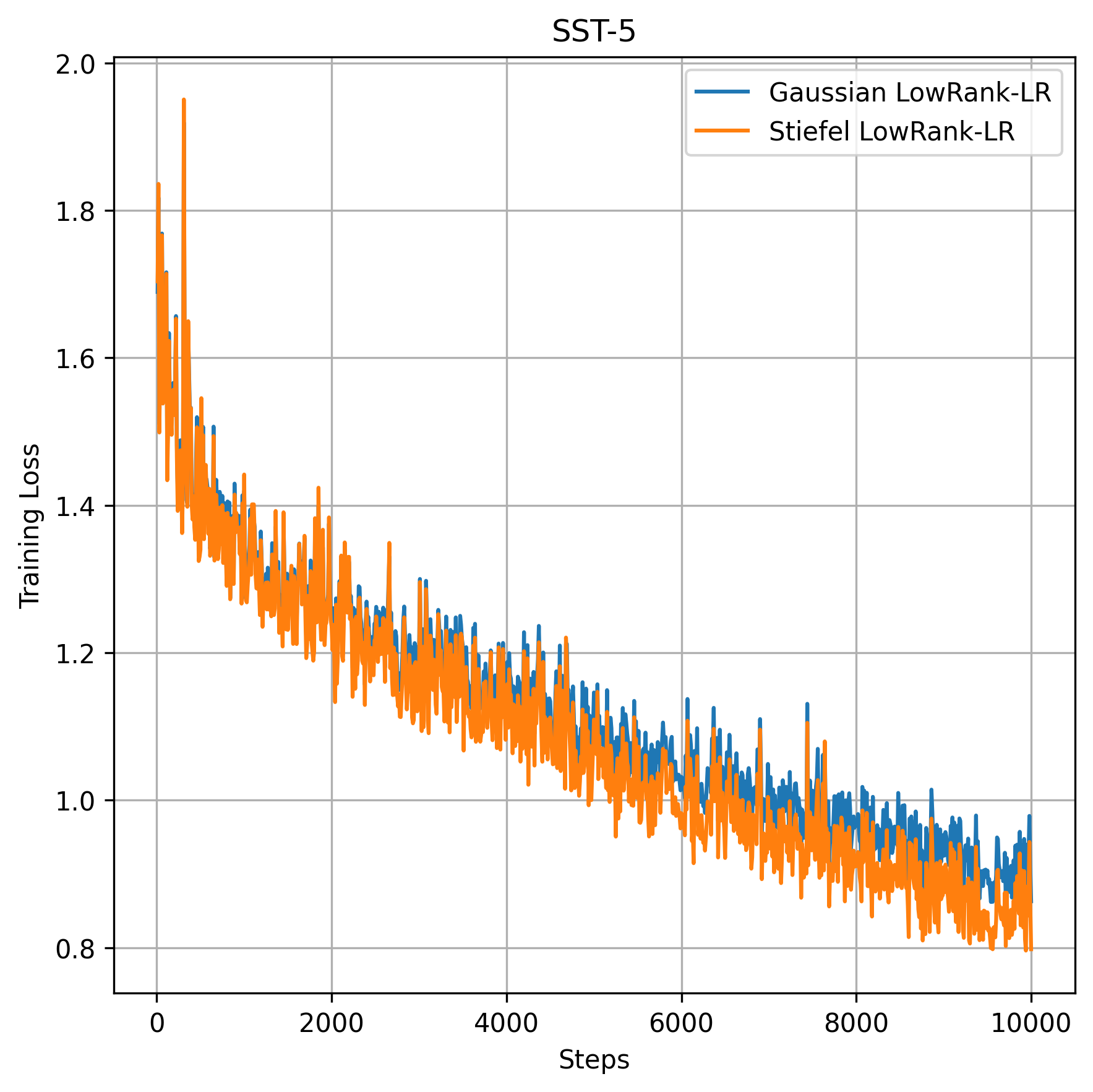}}
\subfloat[SNLI]{
\includegraphics[trim=0cm 0.5cm 0cm 0cm, width=0.3\textwidth]{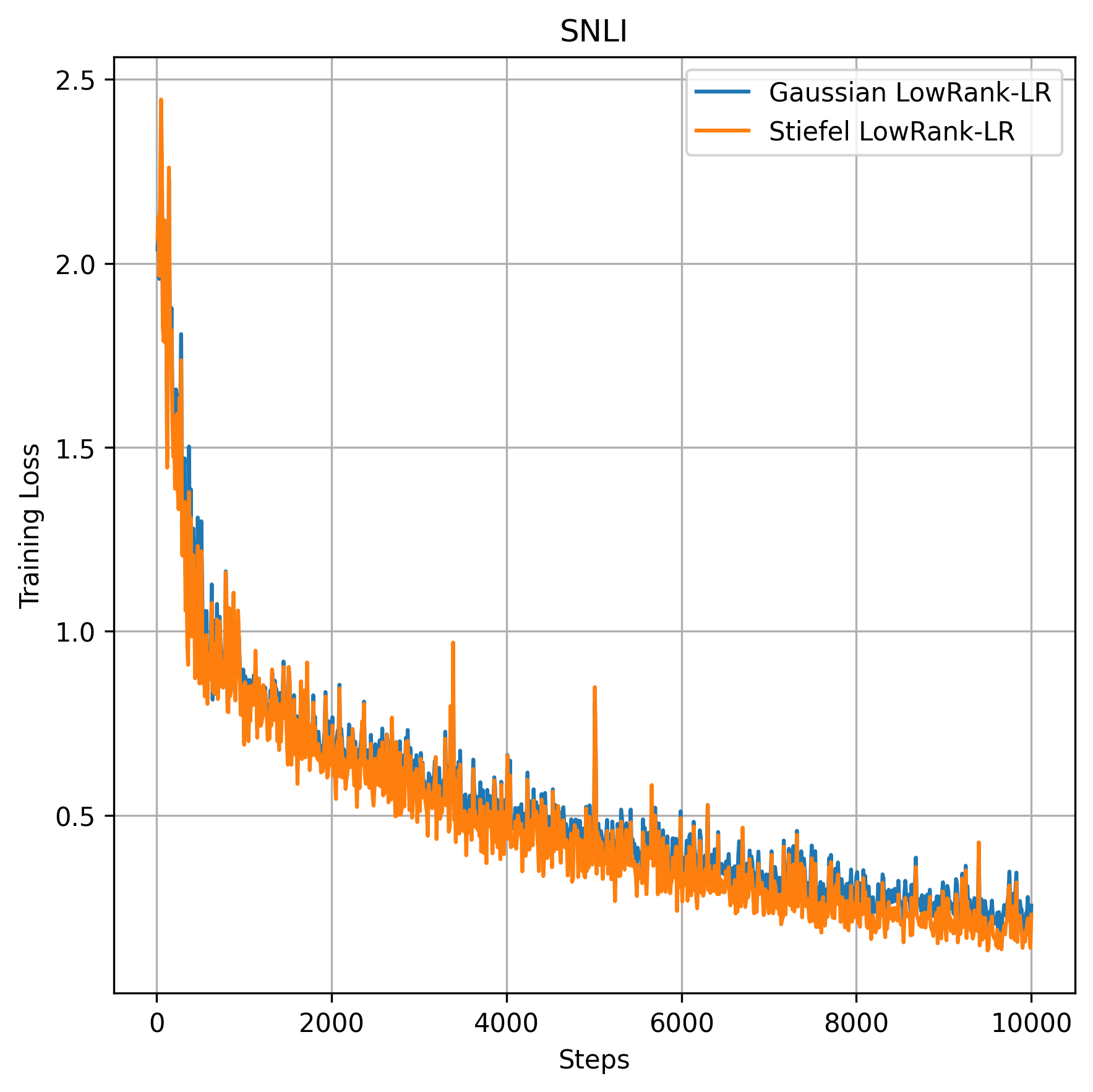}}

\subfloat[MNLI]{
\includegraphics[trim=0cm 0.5cm 0cm 0cm, width=0.3\textwidth]{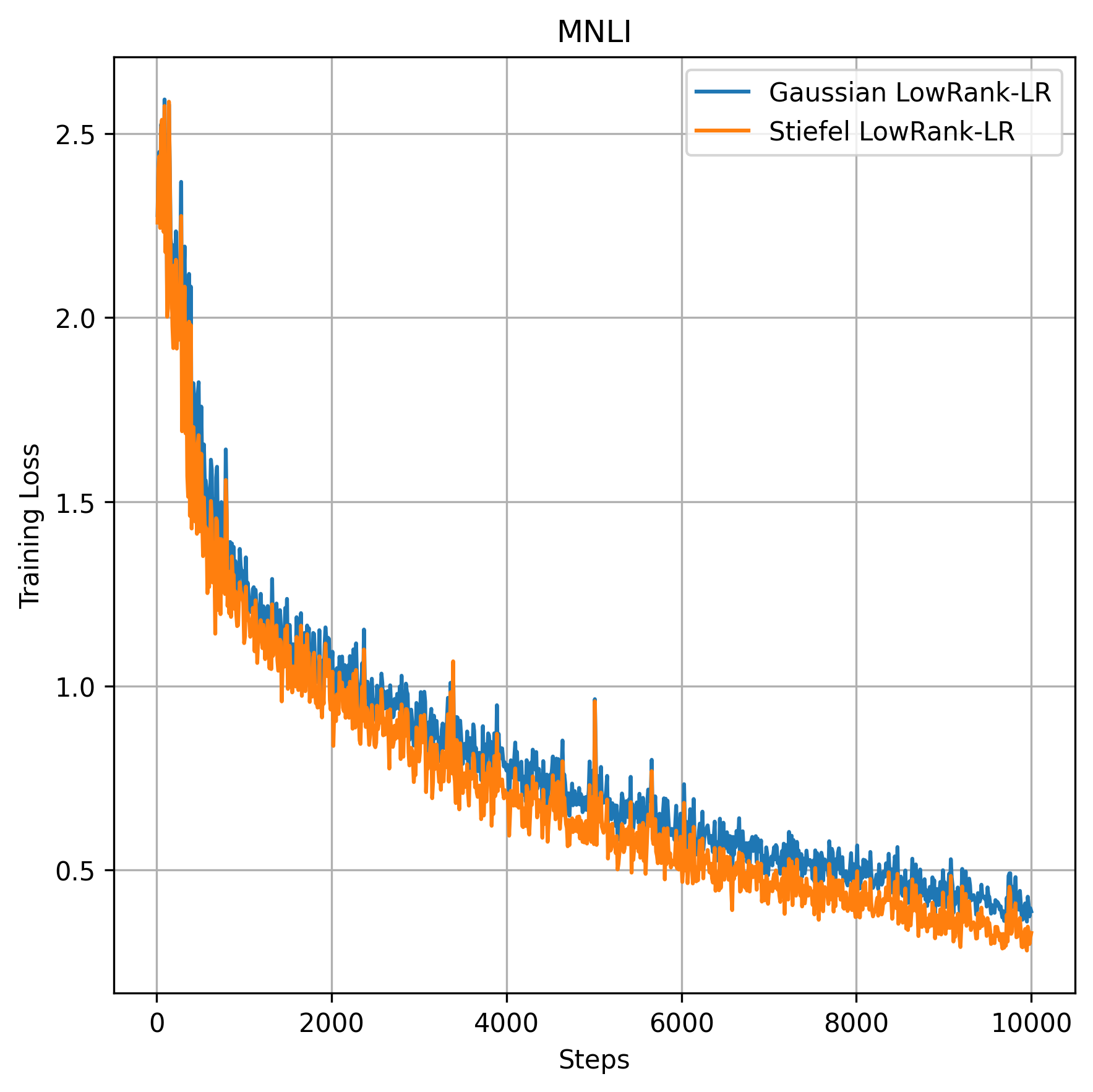}}
\subfloat[RTE]{
\includegraphics[trim=0cm 0.5cm 0cm 0cm, width=0.3\textwidth]{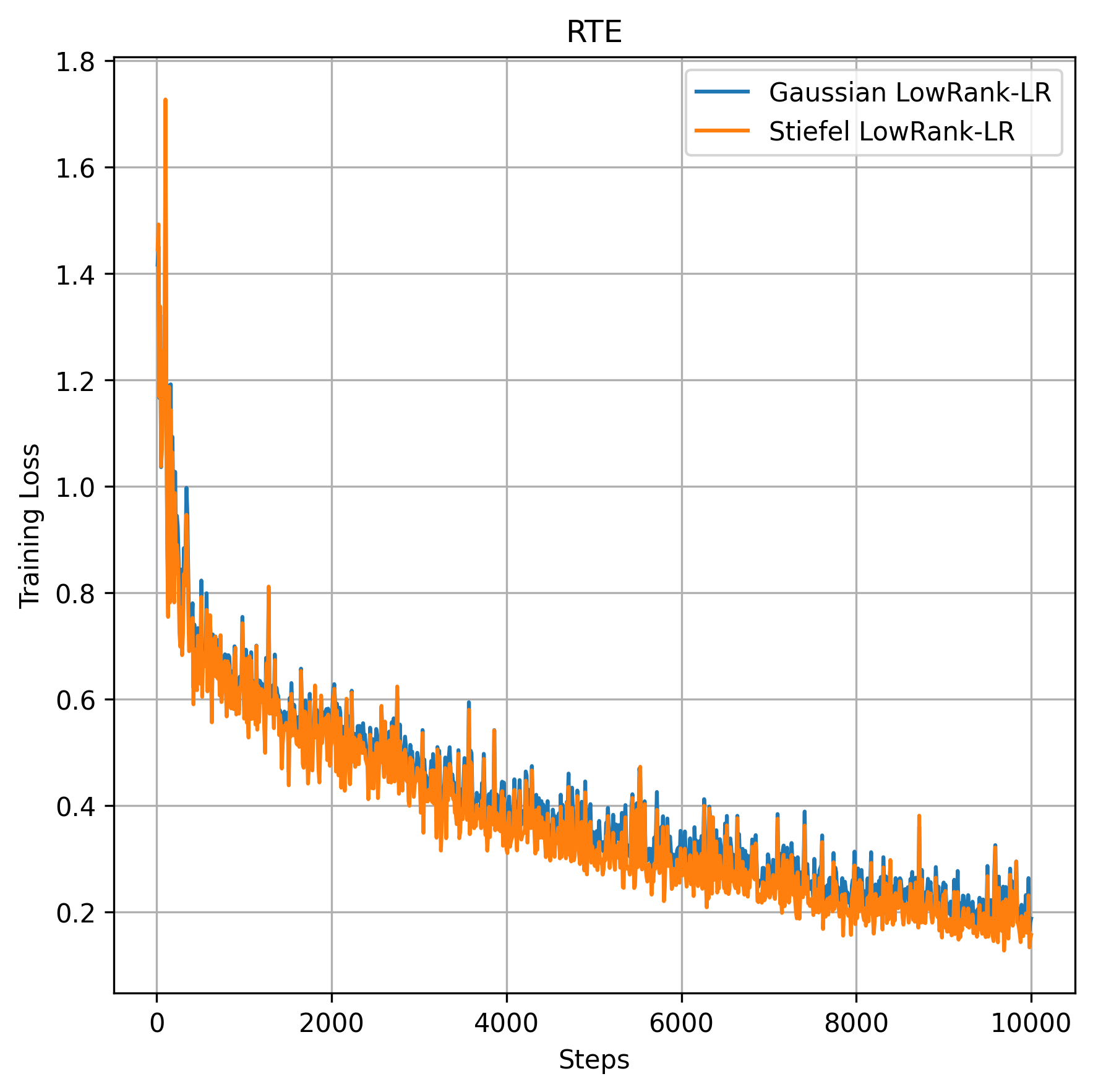}}
\subfloat[TREC]{
\includegraphics[trim=0cm 0.5cm 0cm 0cm, width=0.3\textwidth]{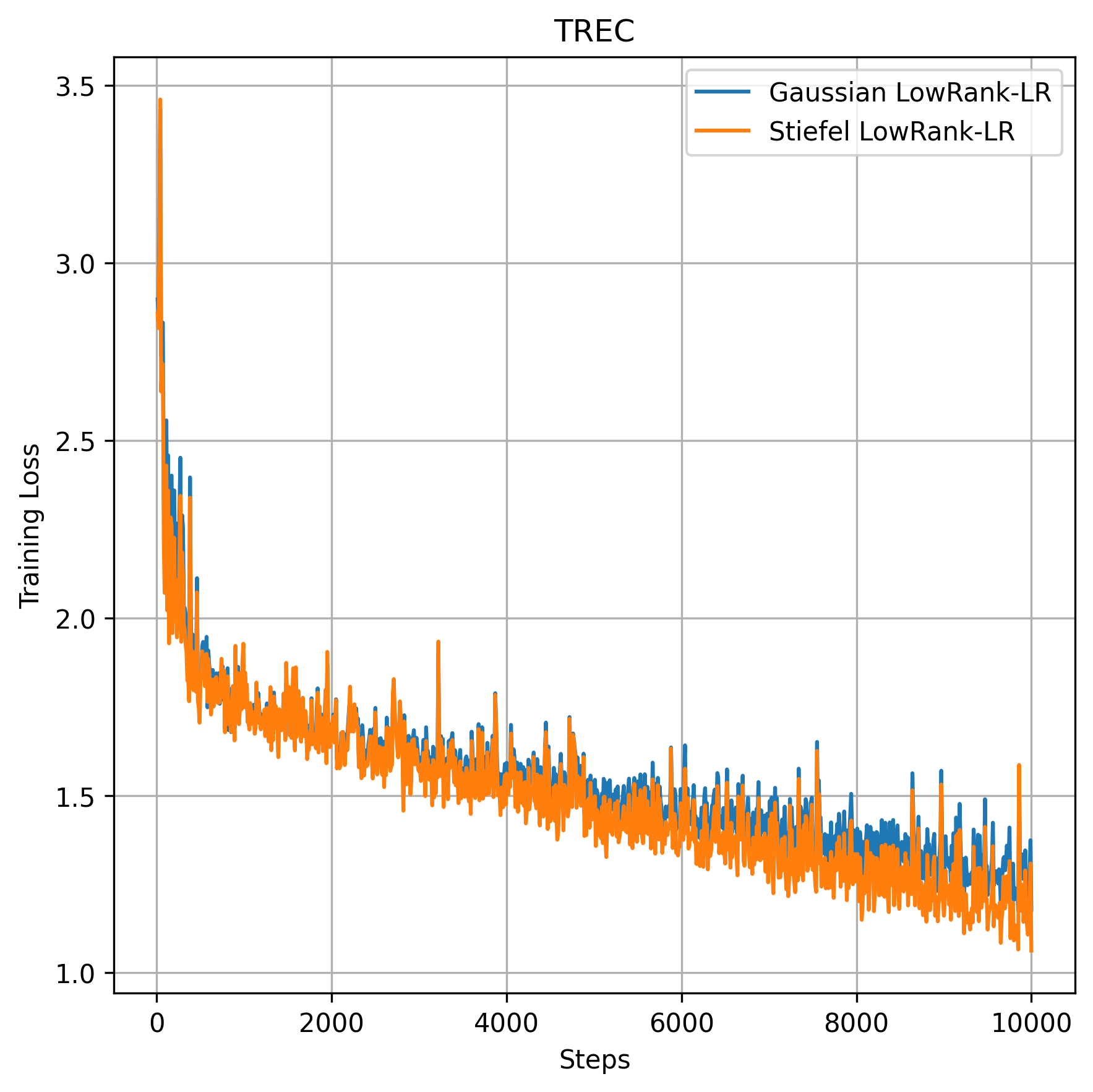}}
\vspace{-0.2cm}
\caption{\textcolor{black}{Training loss of Stiefel and Gaussian subspace sampling across six datasets}}
\vspace{-0.3cm}
\label{fig:convergence_zo}
\end{figure}

\begin{table}[h]
\centering
\small
\begin{tabular}{lcccccc}
\toprule
\textbf{Method} & \textbf{SST-2} & \textbf{SST-5} & \textbf{SNLI} & \textbf{MNLI} & \textbf{RTE} & \textbf{TREC} \\
\midrule
Zero-shot      & 79.0 & 35.5 & 50.2 & 48.8 & 51.4 & 32.0 \\
\midrule
Vanilla LR     & 86.3 & 40.8 & 68.5 & 56.7 & 58.6 & 62.4 \\
Gaussian LowRank-LR     & 88.0 & 41.1 & 73.4 & \textbf{61.6} & 61.2 & 77.9 \\
Stiefel LowRank-LR      & 90.2 & \textbf{43.2} & \textbf{74.3} & 59.4 & \textbf{63.7} & \textbf{80.9} \\
Coordinate LowRank-LR   & \textbf{91.3} & 42.6 & 71.0 & 57.2 & 62.2 & 79.4 \\
\midrule
Vanilla IPA   & 91.9 & 47.5 & 77.5 & 70.0 & 66.4 & 85.0 \\
\bottomrule
\end{tabular}
\caption{RoBERTa-large fine-tuning results on six classification datasets.}
\label{tab:zo_acc}
\end{table}

In addition to accuracy, we evaluate the memory consumption of different fine-tuning approaches to assess their scalability on resource-constrained devices. Table \ref{tab:mem} reports the peak GPU memory usage during RoBERTa-large fine-tuning under four different settings: Vanilla IPA, LowRank-IPA, Vanilla LR, and our LowRank-LR.
As expected, the Vanilla IPA, i.e., full backpropagation, consumes the most memory, due to the need to store gradients and intermediate activations. LowRank-IPA reduces this memory footprint moderately, confirming its benefit in parameter-efficient fine-tuning.

Most notably, LowRank-LR achieves the lowest memory footprint of only 3.83 GB, highlighting its strong memory efficiency. By performing optimization in a low-dimensional subspace and avoiding dense gradient estimation, the method greatly alleviates memory overhead. This makes it particularly suitable for deployment in memory-limited or edge environments where large-scale models like RoBERTa-large are otherwise infeasible to fine-tune.

Overall, these results show that LowRank-LR offers an attractive trade-off between effectiveness and efficiency: it remains competitive in predictive performance while delivering substantial memory savings, making it a practical alternative to conventional fine-tuning methods.
\begin{table}[h]
\centering
\small
\begin{tabular}{lcccc}
\toprule
 & \textbf{Vanilla IPA} & \textbf{LowRank-IPA} & \textbf{Vanilla LR} & \textbf{LowRank-LR} \\
\midrule
Memory (GB) & 16.7 & 14.3 & 5.49 & \textbf{3.83} \\
\bottomrule
\end{tabular}
\caption{Memory profile of different methods for fine-tuning the RoBERTa-large.}
\label{tab:mem}
\end{table}

We further report the per-step wall-clock time of different fine-tuning strategies in Table~\ref{tab:time} to assess their computational efficiency. As expected, backpropagation-based methods (Vanilla IPA and LowRank-IPA) incur the highest per-step costs due to gradient computation and backward pass overhead in large transformer models.

By contrast, LR-based methods substantially reduce runtime per step by eliminating gradient backpropagation. Our LowRank-LR method introduces only a slight increase in per-step time, primarily due to sampling from the optimized low-dimensional distribution and performing projection operations. Despite this modest overhead, the runtime remains significantly faster than backpropagation-based methods and well within acceptable bounds for practical deployment.

\begin{table}[h]
\centering
\small
\begin{tabular}{lccccc}
\toprule
 & \textbf{Vanilla IPA} & \textbf{LowRank-IPA} & \textbf{Vanilla LR} & \textbf{LowRank-LR} \\
\midrule
Time (second)   & 0.784 & 0.787 &  0.468 & 0.493 \\
\bottomrule
\end{tabular}
\caption{Per-step wall clock time.}
\label{tab:time}
\end{table}

\subsubsection{Pretraining LLM with infinitesimal perturbation analysis estimator.}\label{sec6.2.2}
We present the pretraining experiments on autoregressive language models with different sizes. This empirical study employs the IPA gradient estimator for the efficient pretraining of LLMs. Experiments are conducted on the \textbf{LLaMA model family}. Training is implemented via Distributed Data Parallel for multi-GPU acceleration. The \textbf{OpenWebText corpus} is used for training, and the T5-base tokenizer is employed with a fixed sequence length of 256.

\begin{figure}[h]
\centering
\subfloat[evaluation loss]{
\includegraphics[trim=0cm 0.5cm 0cm 0cm, width=0.3\textwidth]{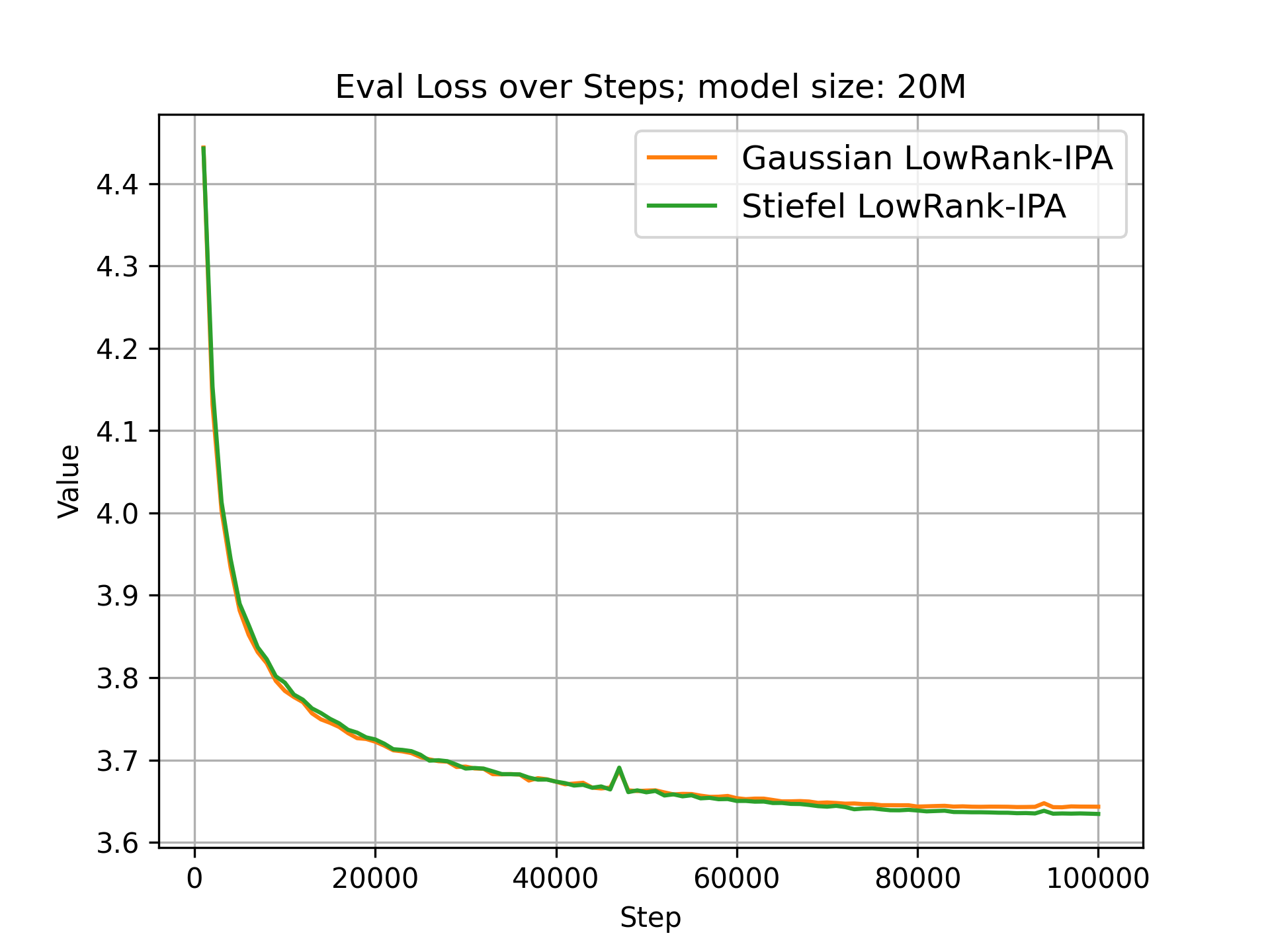}}
\subfloat[training loss]{
\includegraphics[trim=0cm 0.5cm 0cm 0cm, width=0.3\textwidth]{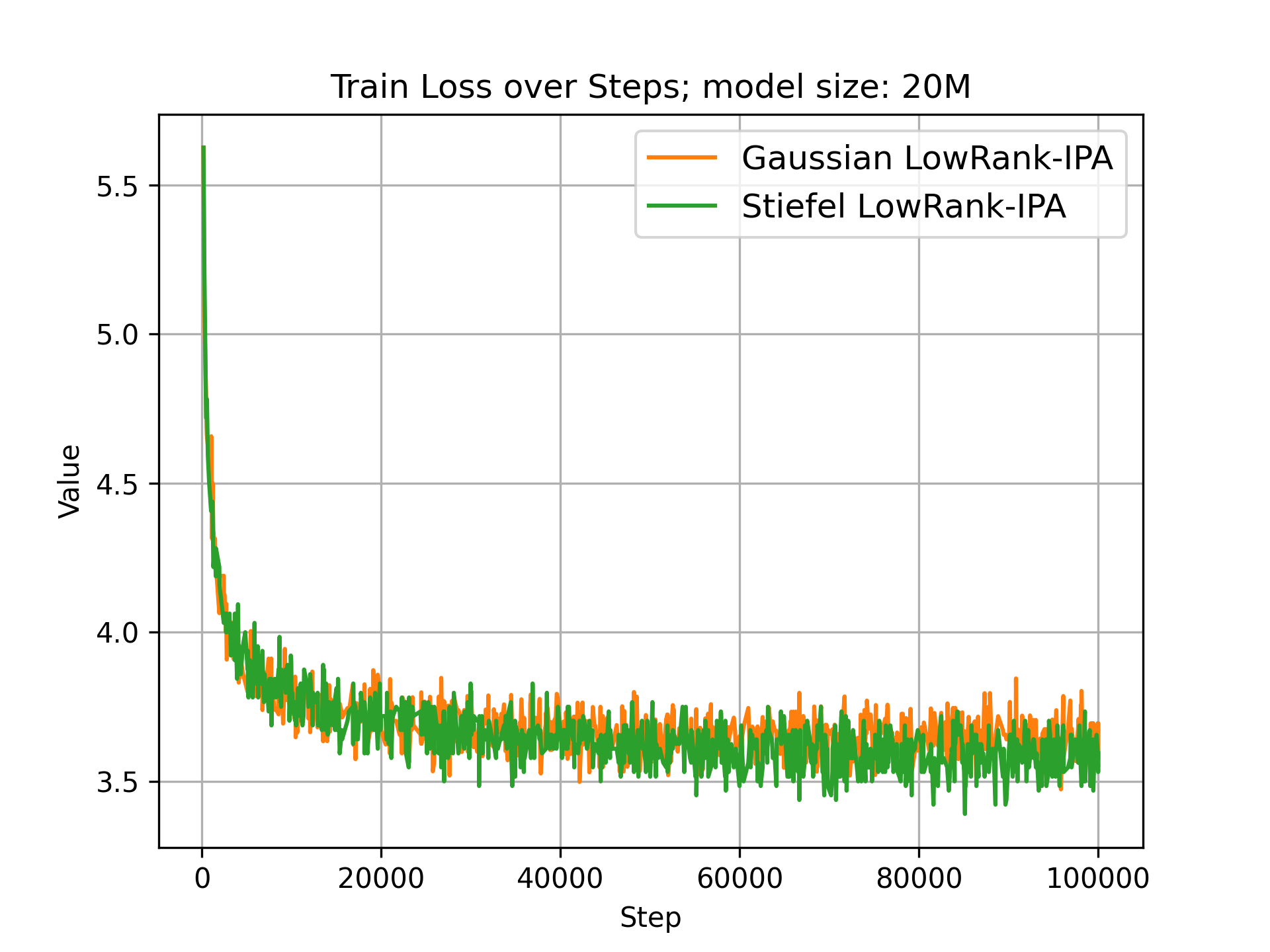}}
\vspace{-0.2cm}
\caption{Pretraining 20M language model using IPA gradient estimator}
\vspace{-0.3cm}
\label{fig:IPA_20M}
\end{figure}

\begin{figure}[h]
\centering
\subfloat[evaluation loss]{
\includegraphics[trim=0cm 0.5cm 0cm 0cm, width=0.3\textwidth]{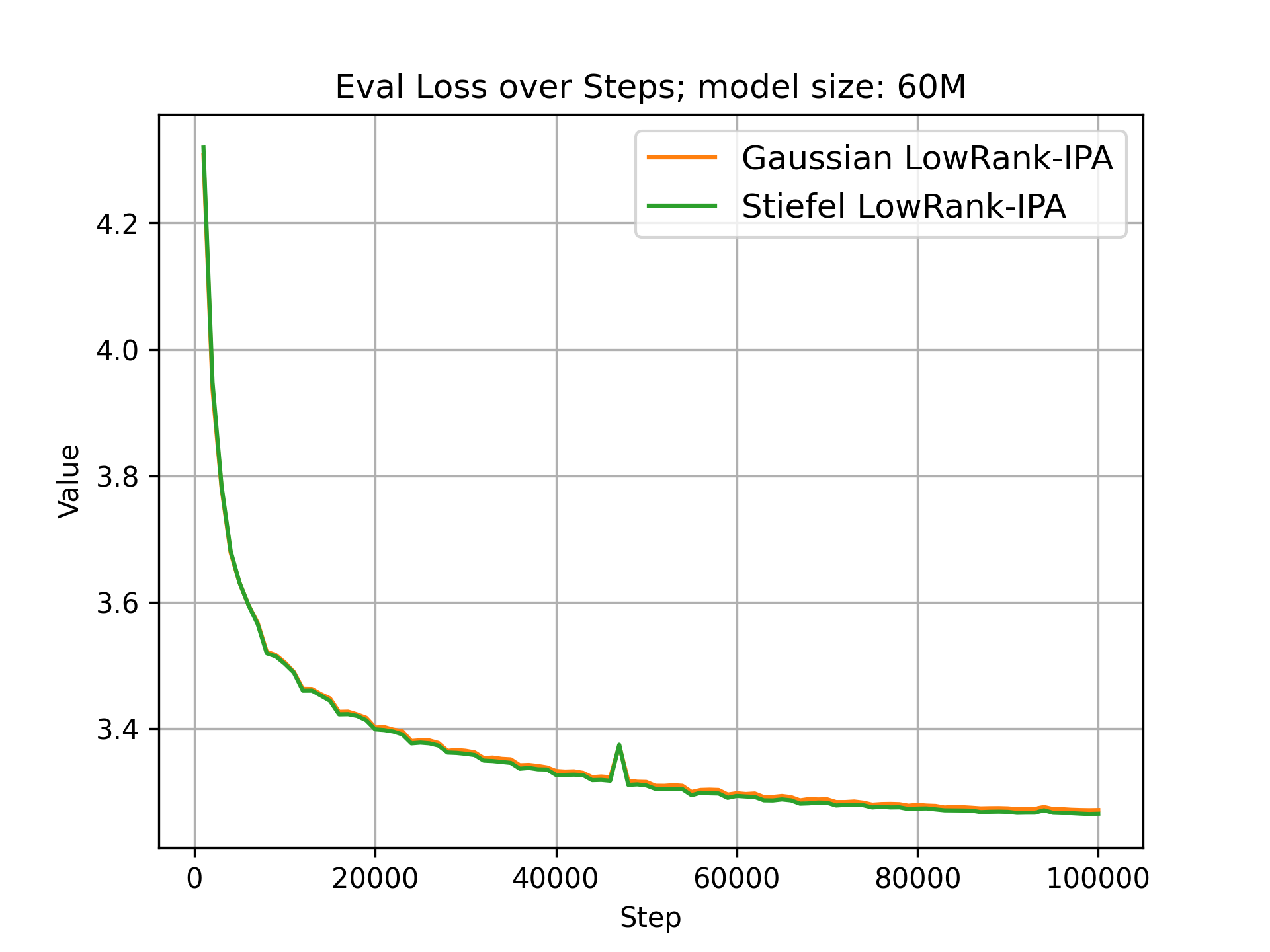}}
\subfloat[training loss]{
\includegraphics[trim=0cm 0.5cm 0cm 0cm, width=0.3\textwidth]{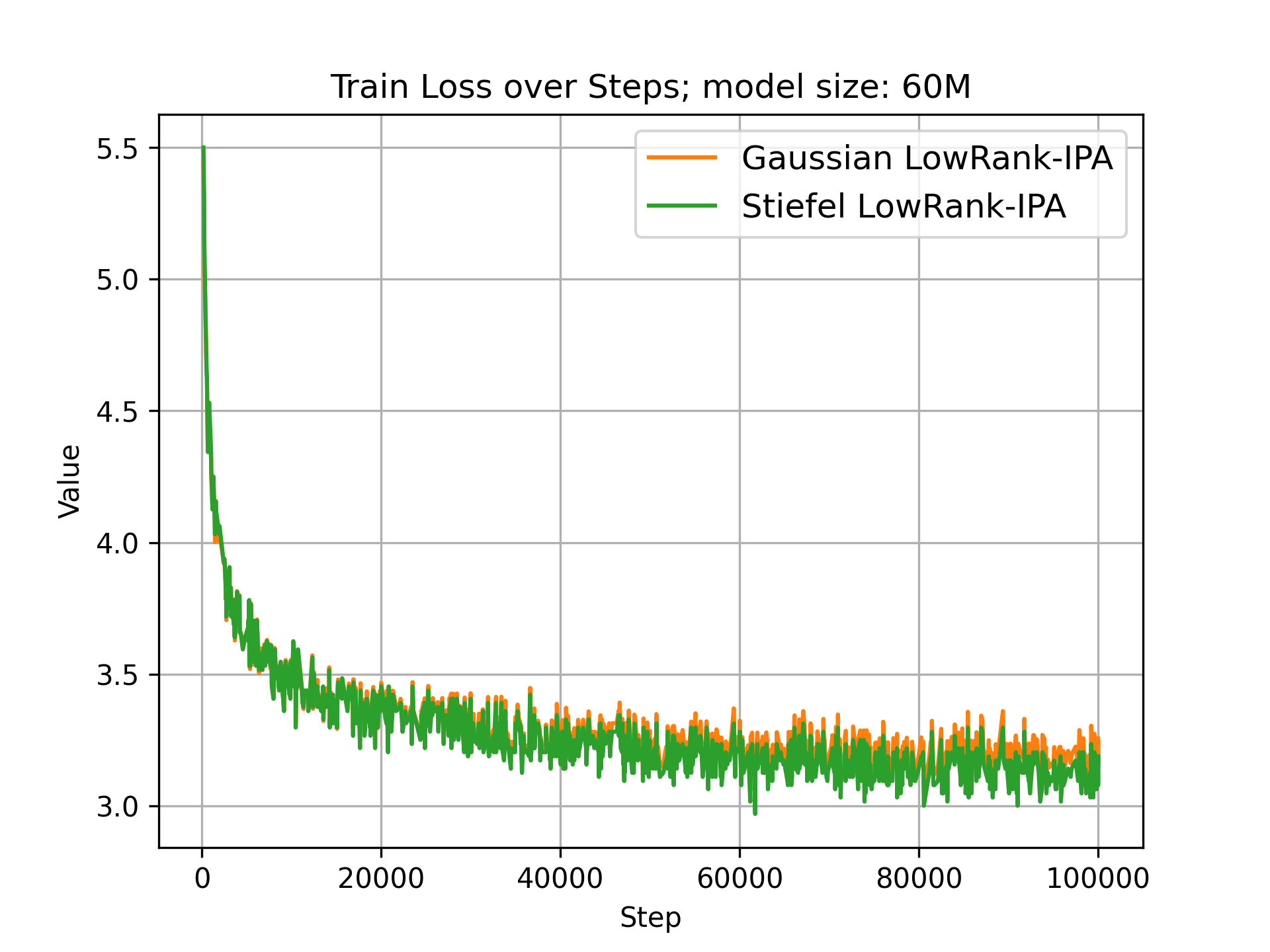}}
\vspace{-0.2cm}
\caption{Pretraining 60M language model using IPA gradient estimator}
\vspace{-0.3cm}
\label{fig:IPA_60M}
\end{figure}

\begin{figure}[h]
\centering
\subfloat[evaluation loss]{
\includegraphics[trim=0cm 0.5cm 0cm 0cm, width=0.3\textwidth]{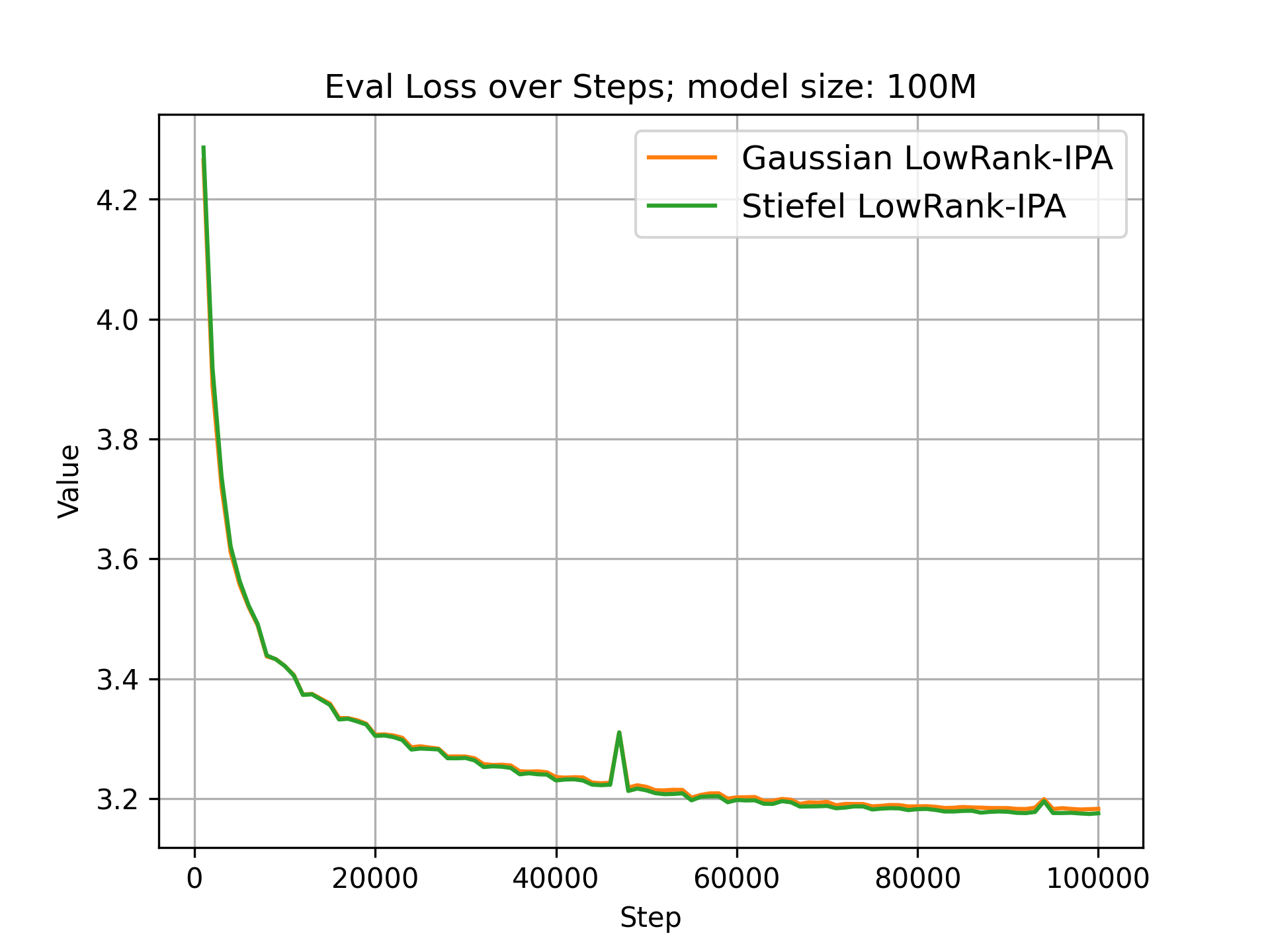}}
\subfloat[training loss]{
\includegraphics[trim=0cm 0.5cm 0cm 0cm, width=0.3\textwidth]{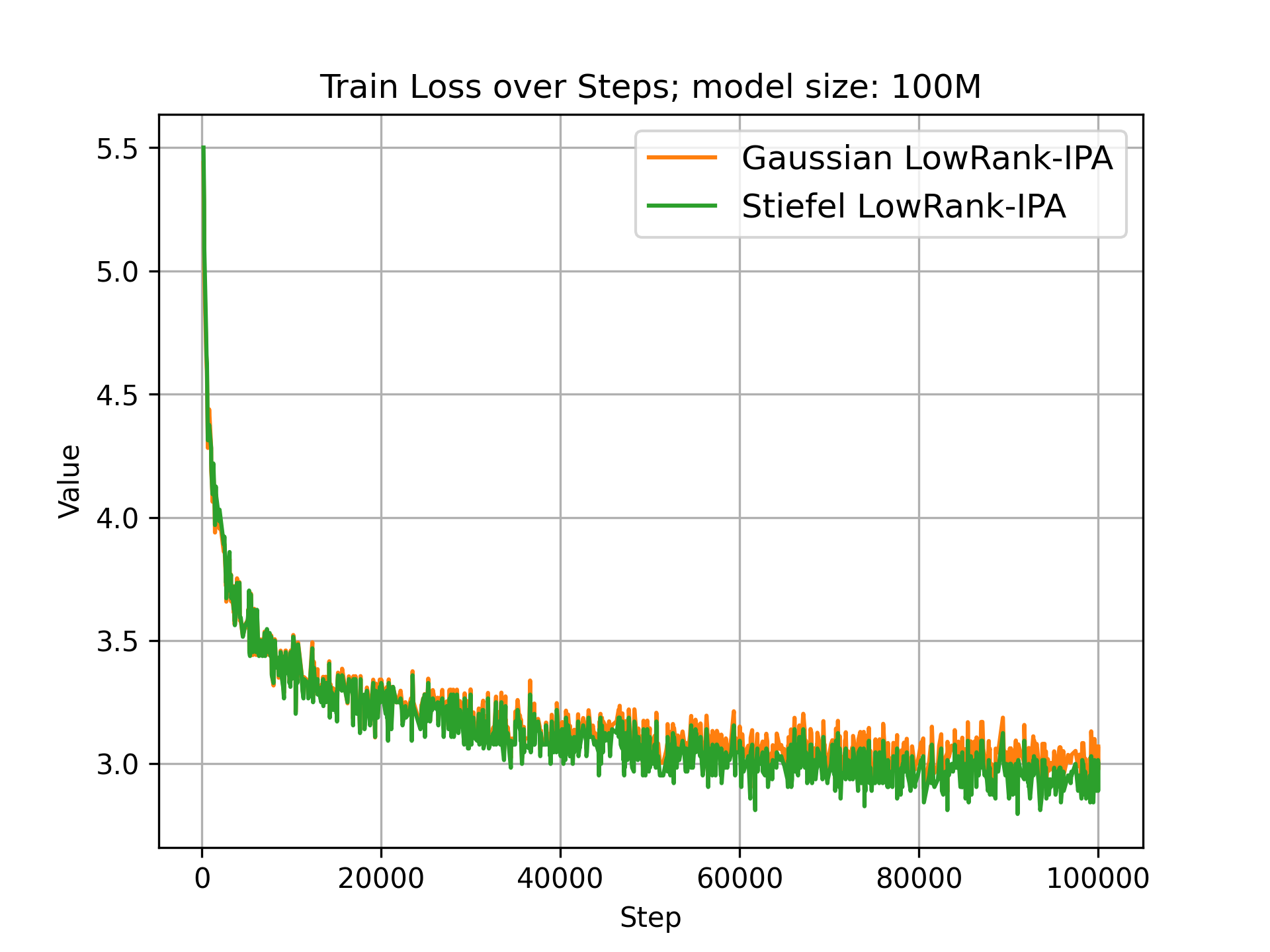}}
\vspace{-0.2cm}
\caption{Pretraining 100M language model using IPA gradient estimator}
\vspace{-0.3cm}
\label{fig:IPA_100M}
\end{figure}

All experiments are performed using bfloat16 precision. The optimizer is Adam, specifically adapted for subspace training, with Adam parameters $\beta_1=0.9$ and $\beta_2=0.999$. Gradient clipping is applied at a norm of 1.0, and the learning rate schedule follows a cosine annealing strategy with a cycle length of $100,000$ steps. Key hyperparameters are scaled across different model sizes as follows: For LLaMA-20M/60M/100M, the global batch size is uniformly set to $512$, and weight decay is fixed at $0.05$. The total training steps are $100,000$ with warmup steps adjusted to $1000$. The subproblem reset interval is set to $200$ steps. The subspace rank is set to $128$.

We compare the proposed Stiefel LowRank-IPA to the Gaussian LowRank-IPA method. The results are presented in Figure \ref{fig:IPA_20M}, \ref{fig:IPA_60M}, and \ref{fig:IPA_100M}. Across all model scales (20M, 60M, and 100M), the Stiefel LowRank-IPA consistently outperforms Gaussian LowRank-IPA in both training and evaluation loss reduction throughout the entire pretraining process, demonstrating the superiority of the proposed optimal projection strategy for the IPA gradient estimator. In Figure \ref{fig:IPA_20M}, Stiefel LowRank-IPA achieves a markedly lower evaluation loss and training loss from the early training stages, with the loss gap widening as training proceeds; the Gaussian  LowRank-IPA method exhibits slower loss decay and higher stable loss values, indicating less efficient gradient estimation and parameter update. For the 60M and 100M models (Figures \ref{fig:IPA_60M} and \ref{fig:IPA_100M}), Stiefel LowRank-IPA still outperforms Gaussian LowRank-IPA with faster training loss decay, a lower convergent loss and consistently lower evaluation loss, while Gaussian LowRank-IPA shows slow convergence and high residual loss.

\section{Conclusion}\label{sec7}

Training LLMs is fundamentally constrained by the tension between memory usage and the noise of stochastic gradients in extremely high-dimensional parameter spaces. Motivated by the empirically observed low-rank structure of neural gradient matrices, we proposed weakly unbiased low-rank matrix gradient estimators that apply to both IPA- and LR-family gradient estimation paradigms, together with a memory-efficient randomized subspace projection algorithm and a lazy-update mechanism. We analyzed the MSE of the projected estimators and showed how it decomposes into intrinsic IPA/LR variance, random-projection-induced variance, and a scalar bias term, which makes the role of projection design explicit. To optimally control this error, we formulated a constrained functional optimization problem over admissible random projectors and derived optimal projection distributions in both information-agnostic and information-aware settings, including Haar--Stiefel type constructions and instance-dependent sampling rules. Empirically, our method achieves substantial peak-memory reductions while remaining competitive in downstream performance, and it consistently improves training and evaluation losses in autoregressive pretraining across multiple model scales. Overall, these results suggest that principled randomized low-rank projections provide a unifying and practical route to scalable, stable, and memory-efficient gradient estimation for modern LLM training.

\bibliography{ref}

@article{mohamed2020monte,
  title={Monte carlo gradient estimation in machine learning},
  author={Mohamed, Shakir and Rosca, Mihaela and Figurnov, Michael and Mnih, Andriy},
  journal={Journal of Machine Learning Research},
  volume={21},
  number={132},
  pages={1--62},
  year={2020}
}

@book{fu2015stochastic,
  title={Stochastic gradient estimation},
  author={Fu, Michael C},
  year={2015},
  publisher={Springer}
}

@incollection{FU2006575,
title = {Chapter 19 Gradient Estimation},
editor = {Shane G. Henderson and Barry L. Nelson},
series = {Handbooks in Operations Research and Management Science},
publisher = {Elsevier},
volume = {13},
pages = {575-616},
year = {2006},
booktitle = {Simulation},
issn = {0927-0507},
author = {Michael C. Fu}
}

@article{ren2025zeroth,
  title={Zeroth-order Informed Fine-Tuning for Diffusion Model: A Recursive Likelihood Ratio Optimizer},
  author={Ren, Tao and Zhang, Zishi and Li, Zehao and Jiang, Jingyang and Qin, Shentao and Li, Guanghao and Li, Yan and Zheng, Yi and Li, Xinping and Zhan, Min and others},
  journal={arXiv preprint arXiv:2502.00639},
  year={2025}
}

@article{chen2024enhancing,
  title={Enhancing zeroth-order fine-tuning for language models with low-rank structures},
  author={Chen, Yiming and Zhang, Yuan and Cao, Liyuan and Yuan, Kun and Wen, Zaiwen},
  journal={arXiv preprint arXiv:2410.07698},
  year={2024}
}

@article{zhao2024galore,
  title={Galore: Memory-efficient llm training by gradient low-rank projection},
  author={Zhao, Jiawei and Zhang, Zhenyu and Chen, Beidi and Wang, Zhangyang and Anandkumar, Anima and Tian, Yuandong},
  journal={arXiv preprint arXiv:2403.03507},
  year={2024}
}

@article{hu2022lora,
  title={Lora: Low-rank adaptation of large language models.},
  author={Hu, Edward J and Shen, Yelong and Wallis, Phillip and Allen-Zhu, Zeyuan and Li, Yuanzhi and Wang, Shean and Wang, Lu and Chen, Weizhu and others},
  journal={ICLR},
  volume={1},
  number={2},
  pages={3},
  year={2022}
}

@article{he2024subspace,
  title={Subspace optimization for large language models with convergence guarantees},
  author={He, Yutong and Li, Pengrui and Hu, Yipeng and Chen, Chuyan and Yuan, Kun},
  journal={arXiv preprint arXiv:2410.11289},
  year={2024}
}

@article{chen2025memory,
  title={A Memory Efficient Randomized Subspace Optimization Method for Training Large Language Models},
  author={Chen, Yiming and Zhang, Yuan and Liu, Yin and Yuan, Kun and Wen, Zaiwen},
  journal={arXiv preprint arXiv:2502.07222},
  year={2025}
}

@article{peng2022new,
  title={A new likelihood ratio method for training artificial neural networks},
  author={Peng, Yijie and Xiao, Li and Heidergott, Bernd and Hong, L Jeff and Lam, Henry},
  journal={INFORMS Journal on Computing},
  volume={34},
  number={1},
  pages={638--655},
  year={2022},
  publisher={INFORMS}
}

@article{hajek1964asymptotic,
  title={Asymptotic theory of rejective sampling with varying probabilities from a finite population},
  author={H{\'a}jek, Jaroslav},
  journal={The Annals of Mathematical Statistics},
  volume={35},
  number={4},
  pages={1491--1523},
  year={1964},
  publisher={Institute of Mathematical Statistics}
}

@article{sampford1967sampling,
  title={On sampling without replacement with unequal probabilities of selection},
  author={Sampford, Michael R},
  journal={Biometrika},
  volume={54},
  number={3-4},
  pages={499--513},
  year={1967},
  publisher={Oxford University Press}
}

@article{deville1998unequal,
  title={Unequal probability sampling without replacement through a splitting method},
  author={Deville, Jean-Claude and Tille, Yves},
  journal={Biometrika},
  volume={85},
  number={1},
  pages={89--101},
  year={1998},
  publisher={Oxford University Press}
}

@article{bondesson2008list,
  title={A list sequential sampling method suitable for real-time sampling},
  author={Bondesson, Lennart and Thorburn, Daniel},
  journal={Scandinavian Journal of Statistics},
  volume={35},
  number={3},
  pages={466--483},
  year={2008},
  publisher={Wiley Online Library}
}

@article{kozak2023zeroth,
  title={Zeroth-order optimization with orthogonal random directions},
  author={Kozak, David and Molinari, Cesare and Rosasco, Lorenzo and Tenorio, Luis and Villa, Silvia},
  journal={Mathematical Programming},
  volume={199},
  number={1},
  pages={1179--1219},
  year={2023},
  publisher={Springer}
}

@article{fang1994inequalities,
  title={Inequalities for the trace of matrix product},
  author={Fang, Yuguang and Loparo, Kenneth A and Feng, Xiangbo},
  journal={IEEE Transactions on Automatic Control},
  volume={39},
  number={12},
  pages={2489--2490},
  year={1994},
  publisher={IEEE}
}

@article{heidelberger1988convergence,
  title={Convergence properties of infinitesimal perturbation analysis estimates},
  author={Heidelberger, Philip and Cao, Xi-Ren and Zazanis, Michael A and Suri, Rajan},
  journal={Management Science},
  volume={34},
  number={11},
  pages={1281--1302},
  year={1988},
  publisher={INFORMS}
}

@article{l1990unified,
  title={A unified view of the IPA, SF, and LR gradient estimation techniques},
  author={L'ecuyer, Pierre},
  journal={Management Science},
  volume={36},
  number={11},
  pages={1364--1383},
  year={1990},
  publisher={INFORMS}
}

@article{ho1983infinitesimal,
  title={Infinitesimal and finite perturbation analysis for queueing networks},
  author={Ho, Yu-Chi and Cao, Xiren and Cassandras, Christos},
  journal={Automatica},
  volume={19},
  number={4},
  pages={439--445},
  year={1983},
  publisher={Elsevier}
}

@article{glynn1990likelihood,
  title={Likelihood ratio gradient estimation for stochastic systems},
  author={Glynn, Peter W},
  journal={Communications of the ACM},
  volume={33},
  number={10},
  pages={75--84},
  year={1990},
  publisher={ACM New York, NY, USA}
}

@article{reiman1989sensitivity,
  title={Sensitivity Analysis for Simulations via Likelihood Ratios},
  author={Reiman, Martin I and Weiss, Alan},
  journal={Operations Research},
  volume={37},
  number={5},
  pages={830--844},
  year={1989},
  publisher={INFORMS}
}

@article{peng2018new,
  title={A new unbiased stochastic derivative estimator for discontinuous sample performances with structural parameters},
  author={Peng, Yijie and Fu, Michael C and Hu, Jian-Qiang and Heidergott, Bernd},
  journal={Operations Research},
  volume={66},
  number={2},
  pages={487--499},
  year={2018},
  publisher={INFORMS}
}

@article{hong2009estimating,
  title={Estimating quantile sensitivities},
  author={Hong, L Jeff},
  journal={Operations research},
  volume={57},
  number={1},
  pages={118--130},
  year={2009},
  publisher={INFORMS}
}

@article{kozak2021stochastic,
  title={A stochastic subspace approach to gradient-free optimization in high dimensions},
  author={Kozak, David and Becker, Stephen and Doostan, Alireza and Tenorio, Luis},
  journal={Computational Optimization and Applications},
  volume={79},
  number={2},
  pages={339--368},
  year={2021},
  publisher={Springer}
}

@article{nesterov2017random,
  title={Random gradient-free minimization of convex functions},
  author={Nesterov, Yurii and Spokoiny, Vladimir},
  journal={Foundations of Computational Mathematics},
  volume={17},
  number={2},
  pages={527--566},
  year={2017},
  publisher={Springer}
}

@article{duchi2015optimal,
  title={Optimal rates for zero-order convex optimization: The power of two function evaluations},
  author={Duchi, John C and Jordan, Michael I and Wainwright, Martin J and Wibisono, Andre},
  journal={IEEE Transactions on Information Theory},
  volume={61},
  number={5},
  pages={2788--2806},
  year={2015},
  publisher={IEEE}
}

@inproceedings{ji2019improved,
  title={Improved zeroth-order variance reduced algorithms and analysis for nonconvex optimization},
  author={Ji, Kaiyi and Wang, Zhe and Zhou, Yi and Liang, Yingbin},
  booktitle={International conference on machine learning},
  pages={3100--3109},
  year={2019},
  organization={PMLR}
}

@article{lam2024distributionally,
  title={Distributionally constrained black-box stochastic gradient estimation and optimization},
  author={Lam, Henry and Zhang, Junhui},
  journal={Operations Research},
  year={2024},
  publisher={INFORMS}
}

@article{xu2023gradient,
  title={Gradient-based simulation optimization algorithms via multi-resolution system approximations},
  author={Xu, Jingxu and Zheng, Zeyu},
  journal={INFORMS Journal on Computing},
  volume={35},
  number={3},
  pages={633--651},
  year={2023},
  publisher={INFORMS}
}

@article{peng2020maximum,
  title={Maximum likelihood estimation by Monte Carlo simulation: Toward data-driven stochastic modeling},
  author={Peng, Yijie and Fu, Michael C and Heidergott, Bernd and Lam, Henry},
  journal={Operations Research},
  volume={68},
  number={6},
  pages={1896--1912},
  year={2020},
  publisher={INFORMS}
}

@article{li2024beyond,
  title={Beyond likelihood ratio bias: Nested multi-time-scale stochastic approximation for likelihood-free parameter estimation},
  author={Li, Zehao and Lin, Zhouchen and Peng, Yijie},
  journal={arXiv preprint arXiv:2411.12995},
  year={2024}
}

@inproceedings{karimi2019non,
  title={Non-asymptotic analysis of biased stochastic approximation scheme},
  author={Karimi, Belhal and Miasojedow, Blazej and Moulines, Eric and Wai, Hoi-To},
  booktitle={Conference on Learning Theory},
  pages={1944--1974},
  year={2019},
  organization={PMLR}
}

@article{cai2022zeroth,
  title={Zeroth-order regularized optimization (zoro): Approximately sparse gradients and adaptive sampling},
  author={Cai, HanQin and McKenzie, Daniel and Yin, Wotao and Zhang, Zhenliang},
  journal={SIAM Journal on Optimization},
  volume={32},
  number={2},
  pages={687--714},
  year={2022},
  publisher={SIAM}
}

@article{hu2024quantile,
  title={Quantile optimization via multiple-timescale local search for black-box functions},
  author={Hu, Jiaqiao and Song, Meichen and Fu, Michael C},
  journal={Operations Research},
  year={2024},
  publisher={INFORMS}
}

@article{hu2022stochastic,
  title={A stochastic approximation method for simulation-based quantile optimization},
  author={Hu, Jiaqiao and Peng, Yijie and Zhang, Gongbo and Zhang, Qi},
  journal={INFORMS Journal on Computing},
  volume={34},
  number={6},
  pages={2889--2907},
  year={2022},
  publisher={INFORMS}
}

@article{kingma2014adam,
  title={Adam: A method for stochastic optimization},
  author={Kingma, Diederik P and Ba, Jimmy},
  journal={arXiv preprint arXiv:1412.6980},
  year={2014}
}

@article{zhang2023gradient,
  title={Gradient-based algorithms for convex discrete optimization via simulation},
  author={Zhang, Haixiang and Zheng, Zeyu and Lavaei, Javad},
  journal={Operations research},
  volume={71},
  number={5},
  pages={1815--1834},
  year={2023},
  publisher={INFORMS}
}

@article{wang2023large,
  title={Large-scale inventory optimization: A recurrent neural networks--inspired simulation approach},
  author={Wang, Tan and Hong, L Jeff},
  journal={INFORMS Journal on Computing},
  volume={35},
  number={1},
  pages={196--215},
  year={2023},
  publisher={INFORMS}
}

@article{heidergott2010gradient,
  title={Gradient estimation for discrete-event systems by measure-valued differentiation},
  author={Heidergott, Bernd and V{\'a}zquez--Abad, Felisa J and Pflug, Georg and Farenhorst-Yuan, Taoying},
  journal={ACM Transactions on Modeling and Computer Simulation (TOMACS)},
  volume={20},
  number={1},
  pages={1--28},
  year={2010},
  publisher={ACM New York, NY, USA}
}

@article{cui2020variance,
  title={On the variance of single-run unbiased stochastic derivative estimators},
  author={Cui, Zhenyu and Fu, Michael C and Hu, Jian-Qiang and Liu, Yanchu and Peng, Yijie and Zhu, Lingjiong},
  journal={INFORMS Journal on Computing},
  volume={32},
  number={2},
  pages={390--407},
  year={2020},
  publisher={INFORMS}
}

@article{rhee2015unbiased,
  title={Unbiased estimation with square root convergence for SDE models},
  author={Rhee, Chang-han and Glynn, Peter W},
  journal={Operations Research},
  volume={63},
  number={5},
  pages={1026--1043},
  year={2015},
  publisher={INFORMS}
}

@article{he2024adaptive,
  title={Adaptive importance sampling for efficient stochastic root finding and quantile estimation},
  author={He, Shengyi and Jiang, Guangxin and Lam, Henry and Fu, Michael C},
  journal={Operations Research},
  volume={72},
  number={6},
  pages={2612--2630},
  year={2024},
  publisher={INFORMS}
}

@article{aolaritei2025stochastic,
  title={Stochastic Optimization with Optimal Importance Sampling},
  author={Aolaritei, Liviu and Van Parys, Bart PG and Lam, Henry and Jordan, Michael I},
  journal={arXiv preprint arXiv:2504.03560},
  year={2025}
}

@article{pan2020adaptive,
  title={Adaptive importance sampling for extreme quantile estimation with stochastic black box computer models},
  author={Pan, Qiyun and Byon, Eunshin and Ko, Young Myoung and Lam, Henry},
  journal={Naval Research Logistics (NRL)},
  volume={67},
  number={7},
  pages={524--547},
  year={2020},
  publisher={Wiley Online Library}
}

@article{vihola2018unbiased,
  title={Unbiased estimators and multilevel Monte Carlo},
  author={Vihola, Matti},
  journal={Operations Research},
  volume={66},
  number={2},
  pages={448--462},
  year={2018},
  publisher={INFORMS}
}

@article{rosenbaum2017multilevel,
  title={Multilevel monte carlo metamodeling},
  author={Rosenbaum, Imry and Staum, Jeremy},
  journal={Operations Research},
  volume={65},
  number={4},
  pages={1062--1077},
  year={2017},
  publisher={INFORMS}
}

@article{giles2008multilevel,
  title={Multilevel monte carlo path simulation},
  author={Giles, Michael B},
  journal={Operations research},
  volume={56},
  number={3},
  pages={607--617},
  year={2008},
  publisher={INFORMS}
}

@article{vu2018random,
  title={Random projections for linear programming},
  author={Vu, Ky and Poirion, Pierre-Louis and Liberti, Leo},
  journal={Mathematics of Operations Research},
  volume={43},
  number={4},
  pages={1051--1071},
  year={2018},
  publisher={INFORMS}
}

@article{grishchenko2021proximal,
  title={Proximal gradient methods with adaptive subspace sampling},
  author={Grishchenko, Dmitry and Iutzeler, Franck and Malick, J{\'e}r{\^o}me},
  journal={Mathematics of Operations Research},
  volume={46},
  number={4},
  pages={1303--1323},
  year={2021},
  publisher={INFORMS}
}

@article{gutman2023coordinate,
  title={Coordinate descent without coordinates: Tangent subspace descent on Riemannian manifolds},
  author={Gutman, David H and Ho-Nguyen, Nam},
  journal={Mathematics of Operations Research},
  volume={48},
  number={1},
  pages={127--159},
  year={2023},
  publisher={INFORMS}
}

@article{harold1997stochastic,
  title={Stochastic approximation and recursive algorithm and applications},
  author={Kushner, Harold J and Yin, George G},
  journal={Application of Mathematics},
  volume={35},
  number={10},
  year={1997},
  publisher={Springer New York}
}

@article{ghadimi2013stochastic,
  title={Stochastic first-and zeroth-order methods for nonconvex stochastic programming},
  author={Ghadimi, Saeed and Lan, Guanghui},
  journal={SIAM journal on optimization},
  volume={23},
  number={4},
  pages={2341--2368},
  year={2013},
  publisher={SIAM}
}

@article{spall1997one,
  title={A one-measurement form of simultaneous perturbation stochastic approximation},
  author={Spall, James C},
  journal={Automatica},
  volume={33},
  number={1},
  pages={109--112},
  year={1997},
  publisher={Elsevier}
}

@article{spall1992multivariate,
  title={Multivariate stochastic approximation using a simultaneous perturbation gradient approximation},
  author={Spall, James C},
  journal={IEEE transactions on automatic control},
  volume={37},
  number={3},
  pages={332--341},
  year={1992},
  publisher={IEEE}
}

@article{hu2025convergence,
  title={On the convergence rate of stochastic approximation for gradient-based stochastic optimization},
  author={Hu, Jiaqiao and Fu, Michael C},
  journal={Operations research},
  volume={73},
  number={2},
  pages={1143--1150},
  year={2025},
  publisher={INFORMS}
}

@article{huang2025orlm,
  title={Orlm: A customizable framework in training large models for automated optimization modeling},
  author={Huang, Chenyu and Tang, Zhengyang and Hu, Shixi and Jiang, Ruoqing and Zheng, Xin and Ge, Dongdong and Wang, Benyou and Wang, Zizhuo},
  journal={Operations Research},
  year={2025},
  publisher={INFORMS}
}

@article{greensmith2004variance,
  title={Variance reduction techniques for gradient estimates in reinforcement learning},
  author={Greensmith, Evan and Bartlett, Peter L and Baxter, Jonathan},
  journal={Journal of Machine Learning Research},
  volume={5},
  number={Nov},
  pages={1471--1530},
  year={2004}
}

@article{ye2025unified,
  title={A unified zeroth-order optimization framework via oblivious randomized sketching},
  author={Ye, Haishan and Chang, Xiangyu and Chen, Xi},
  journal={arXiv preprint arXiv:2510.10945},
  year={2025}
}

@book{chikuse2003statistics,
  title={Statistics on special manifolds},
  author={Chikuse, Yasuko},
  volume={174},
  year={2003},
  publisher={Springer Science \& Business Media}
}

@article{stewart1980efficient,
  title={The efficient generation of random orthogonal matrices with an application to condition estimators},
  author={Stewart, Gilbert W},
  journal={SIAM Journal on Numerical Analysis},
  volume={17},
  number={3},
  pages={403--409},
  year={1980},
  publisher={SIAM}
}

@inproceedings{chungunbalanced,
  title={Unbalanced Optimal Total Variation Transport: A Theoretical Approach to Spatial Resource Allocation Problems},
  author={Chung, Nhan-Phu and Han, Jinhui and Li, Bohan and Li, Zehao},
  booktitle={The Thirty-ninth Annual Conference on Neural Information Processing Systems},
  year={2025}
}

@article{zhang2023adalora,
  title={Adalora: Adaptive budget allocation for parameter-efficient fine-tuning},
  author={Zhang, Qingru and Chen, Minshuo and Bukharin, Alexander and Karampatziakis, Nikos and He, Pengcheng and Cheng, Yu and Chen, Weizhu and Zhao, Tuo},
  journal={arXiv preprint arXiv:2303.10512},
  year={2023}
}

@article{dettmers2023qlora,
  title={Qlora: Efficient finetuning of quantized llms},
  author={Dettmers, Tim and Pagnoni, Artidoro and Holtzman, Ari and Zettlemoyer, Luke},
  journal={Advances in neural information processing systems},
  volume={36},
  pages={10088--10115},
  year={2023}
}

@article{xu2026parameter,
  title={Parameter-efficient fine-tuning methods for pretrained language models: A critical review and assessment},
  author={Xu, Lingling and Xie, Haoran and Qin, S Joe and Tao, Xiaohui and Wang, Fu Lee},
  journal={IEEE Transactions on Pattern Analysis and Machine Intelligence},
  year={2026},
  publisher={IEEE}
}

@article{ye2025lola,
  title={Lola: Llm-assisted online learning algorithm for content experiments},
  author={Ye, Zikun and Yoganarasimhan, Hema and Zheng, Yufeng},
  journal={Marketing Science},
  volume={44},
  number={5},
  pages={995--1016},
  year={2025},
  publisher={INFORMS}
}

@article{scheinberg2022finite,
  title={Finite difference gradient approximation: To randomize or not?},
  author={Scheinberg, Katya},
  journal={INFORMS Journal on Computing},
  volume={34},
  number={5},
  pages={2384--2388},
  year={2022},
  publisher={INFORMS}
}

@article{lin2025reusing,
  title={Reusing Historical Trajectories in Natural Policy Gradient via Importance Sampling: Convergence and Convergence Rate},
  author={Lin, Yifan and Wang, Yuhao and Zhou, Enlu},
  journal={Operations Research},
  volume={73},
  number={6},
  pages={3010--3026},
  year={2025},
  publisher={INFORMS}
}

@inproceedings{chen2020unbiased,
  title={Unbiased gradient simulation for zeroth-order optimization},
  author={Chen, Guanting},
  booktitle={2020 Winter Simulation Conference (WSC)},
  pages={2947--2959},
  year={2020},
  organization={IEEE}
}

@article{liu2024uncertainty,
  title={Uncertainty estimation and quantification for llms: A simple supervised approach},
  author={Liu, Linyu and Pan, Yu and Li, Xiaocheng and Chen, Guanting},
  journal={arXiv preprint arXiv:2404.15993},
  year={2024}
}

@article{dai2025assured,
  title={Assured Autonomy: How Operations Research Powers and Orchestrates Generative AI Systems},
  author={Dai, Tinglong and Simchi-Levi, David and Wu, Michelle Xiao and Xie, Yao},
  journal={arXiv preprint arXiv:2512.23978},
  year={2025}
}

@inproceedings{lu2018accelerating,
  title={Accelerating greedy coordinate descent methods},
  author={Lu, Haihao and Freund, Robert and Mirrokni, Vahab},
  booktitle={International Conference on Machine Learning},
  pages={3257--3266},
  year={2018},
  organization={PMLR}
}

@article{ding2025new,
  title={New understandings and computation on augmented lagrangian methods for low-rank semidefinite programming},
  author={Ding, Lijun and Lu, Haihao and Yang, Jinwen},
  journal={arXiv preprint arXiv:2505.15775},
  year={2025}
}

@article{liang2026llm,
  title={LLM for large-scale optimization model auto-formulation: A lightweight few-shot learning approach},
  author={Liang, Kuo and Lu, Yuhang and Mao, Jianming and Sun, Shuyi and Yang, Chunwei and Zeng, Congcong and Jin, Xiao and Qin, Hanzhang and Zhu, Ruihao and Teo, Chung-Piaw},
  journal={arXiv preprint arXiv:2601.09635},
  year={2026}
}

@article{wu2022joint,
  title={Joint Optimization and Statistical Inference for Zero-th Order Simulation Optimization},
  author={Wu, Yuhang and Zheng, Zeyu and Wang, Yingfei and Zhang, Guangyu and Zhang, Zuohua and Wang, Chu},
  journal={arXiv e-prints},
  pages={arXiv--2210},
  year={2022}
}

@article{li2025agentgit,
  title={AgentGit: A Version Control Framework for Reliable and Scalable LLM-Powered Multi-Agent Systems},
  author={Li, Yang and Ping, Siqi and Chen, Xiyu and Qi, Xiaojian and Wang, Zigan and Luo, Ye and Zhang, Xiaowei},
  journal={arXiv preprint arXiv:2511.00628},
  year={2025}
}

@article{che2026stochastic,
  title={Stochastic gradient descent with adaptive data},
  author={Che, Ethan and Dong, Jing and Tong, Xin T},
  journal={Operations Research},
  year={2026},
  publisher={INFORMS}
}

@inproceedings{li2025new,
  title={A new stochastic approximation method for gradient-based simulated parameter estimation},
  author={Li, Zehao and Peng, Yijie},
  booktitle={2025 Winter Simulation Conference (WSC)},
  pages={3298--3309},
  year={2025},
  organization={IEEE}
}

@article{hu2023contextual,
  title={Contextual stochastic bilevel optimization},
  author={Hu, Yifan and Wang, Jie and Xie, Yao and Krause, Andreas and Kuhn, Daniel},
  journal={Advances in Neural Information Processing Systems},
  volume={36},
  pages={78412--78434},
  year={2023}
}

@article{wang2025sinkhorn,
  title={Sinkhorn distributionally robust optimization},
  author={Wang, Jie and Gao, Rui and Xie, Yao},
  journal={Operations Research},
  year={2025},
  publisher={Informs}
}

@article{fan2020distributionally,
  title={Distributionally robust selection of the best},
  author={Fan, Weiwei and Hong, L Jeff and Zhang, Xiaowei},
  journal={Management Science},
  volume={66},
  number={1},
  pages={190--208},
  year={2020},
  publisher={INFORMS}
}

@article{wang2024understanding,
  title={Understanding the training and generalization of pretrained transformer for sequential decision making},
  author={Wang, Hanzhao and Pan, Yu and Sun, Fupeng and Liu, Shang and Talluri, Kalyan and Chen, Guanting and Li, Xiaocheng},
  journal={arXiv preprint arXiv:2405.14219},
  year={2024}
}
\section{Appendix}\label{app:proof}

\proof{Proof of Theorem \ref{thm:unbiasedness}.}
For LowRank-IPA, fix $\xi$ and consider the map $B\mapsto F(\xi,\Theta+BV^\top)$, where $B\in\mathbb R^{m\times r}$.
By the chain rule for matrix derivatives,
\begin{equation}\label{eq:chain_rule_ipa}
\nabla_B F(\xi,\Theta+BV^\top)
=
\nabla_\Theta F(\xi,\Theta+BV^\top)\,V,
\end{equation}
where $\nabla_\Theta F(\xi,\cdot)\in\mathbb R^{m\times n}$ and the product is matrix multiplication.
Evaluating \eqref{eq:chain_rule_ipa} at $B=0$ gives
\begin{equation*}
\nabla_B F(\xi,\Theta+BV^\top)\big|_{B=0}
=
\nabla_\Theta F(\xi,\Theta)\,V.
\end{equation*}
Therefore,
\begin{equation*}
\hat g_{\mathrm{LowRank\text{-}IPA}}(\xi,V,\Theta)
=
\big(\nabla_\Theta F(\xi,\Theta)\,V\big)V^\top
=
\nabla_\Theta F(\xi,\Theta)\,P.
\end{equation*}
Taking expectation and using independence of $\xi$ and $V$, we have
\[
\mathbb E_{\xi,V}\big[\hat g_{\mathrm{LowRank\text{-}IPA}}(\xi,V,\Theta)\big]
=
\mathbb E_{\xi}\Big[\nabla_\Theta F(\xi,\Theta)\,\mathbb E_V[P]\Big]
=
\mathbb E_{\xi}\big[\nabla_\Theta F(\xi,\Theta)\big]\,(cI_n)
=
c\, g(\Theta),
\]
where the interchange $g(\Theta)=\mathbb E[\nabla_\Theta F]$ is justified by the dominated convergence theorem. When $c=1$, it reduces to strong
unbiasedness.

For LowRank-LR, consider the differentiable map $B\mapsto \log p(\xi;\Theta+BV^\top)$.
By the chain rule,
\begin{equation*}\label{eq:chain_rule_lr}
\nabla_B \log p(\xi;\Theta+BV^\top)
=
\nabla_\Theta \log p(\xi;\Theta+BV^\top)\,V.
\end{equation*}
Evaluating at $B=0$ yields
\begin{equation*}\label{eq:lr_form}
\nabla_B \log p(\xi;\Theta+BV^\top)\big|_{B=0}
=
\nabla_\Theta \log p(\xi;\Theta)\,V.
\end{equation*}
Hence the estimator can be rewritten as
\begin{equation*}\label{eq:lr_proj}
\hat g_{\mathrm{LowRank\text{-}LR}}(\xi,V,\Theta)
=
F(\xi)\,\big(\nabla_\Theta \log p(\xi;\Theta)\,V\big)V^\top
=
F(\xi)\,\nabla_\Theta \log p(\xi;\Theta)\,P.
\end{equation*}
Taking expectation and using independence of $\xi$ and $V$ (so $\mathbb E_V[P]=cI_n$ can be pulled out),
\[
\mathbb E_{\xi,V}\big[\hat g_{\mathrm{LowRank\text{-}LR}}(\xi,V,\Theta)\big]
=
\mathbb E_{\xi}\Big[F(\xi)\,\nabla_\Theta \log p(\xi;\Theta)\,\mathbb E_V[P]\Big]
=
c\,\mathbb E_{\xi}\big[F(\xi)\,\nabla_\Theta \log p(\xi;\Theta)\big].
\]
By the assumption that the standard interchange condition that allows differentiation under the integral sign holds, we have
\[
\nabla_\Theta \mathbb E_{\xi\sim p(\cdot;\Theta)}[F(\xi)]=g(\Theta)
=\nabla_\Theta\int F(\xi)\,p(\xi;\Theta)\,d\xi
=\int F(\xi)\,\nabla_\Theta p(\xi;\Theta)\,d\xi.
\]
Using $\nabla_\Theta p(\xi;\Theta)=p(\xi;\Theta)\,\nabla_\Theta\log p(\xi;\Theta)$,
we obtain
\[
 g(\Theta)
=\int F(\xi)\,p(\xi;\Theta)\,\nabla_\Theta\log p(\xi;\Theta)\,d\xi
=\mathbb E_{\xi\sim p(\cdot;\Theta)}\big[F(\xi)\,\nabla_\Theta\log p(\xi;\Theta)\big].
\]
Therefore the last expectation equals $g(\Theta)$, and hence
\[
\mathbb E_{\xi,V}\big[\hat g_{\mathrm{LowRank\text{-}LR}}(\xi,V,\Theta)\big]
=
c\, g(\Theta).
\]
When $c=1$, this becomes strong unbiasedness. \Halmos
\endproof

\medskip
\proof{Proof of Proposition~\ref{propMSE}.}
Let $\mu:=g(\Theta)$ and write $\delta(\xi):=\hat g_{\mathrm{IPA/LR}}-\mu$ so that
$\hat g_{\mathrm{LowRank\text{-}IPA/LR}}-\mu=\delta(\xi)P+\mu(P-I_n)$. Expanding the Frobenius norm and using
$\|A\|_F^2=\text{tr}(A^\top A)$ gives
\[
\mathbb{E}_{V,\xi}\|\hat g_{\mathrm{LowRank\text{-}IPA/LR}}-\mu\|_F^2
=\mathbb{E}_{V,\xi}\|\delta(\xi)P\|_F^2
+\mathbb{E}_{V}\|\mu(P-I_n)\|_F^2
+2\,\mathbb{E}_{V,\xi}\text{tr}\!\big(P^\top\delta(\xi)^\top\mu(P-I_n)\big).
\]
The cross term vanishes because $\mathbb{E}_{\xi}\delta(\xi)=0$ (unbiasedness of $\hat g_{\mathrm{IPA/LR}}$),
and $P$ is independent of $\xi$, hence
$\mathbb{E}_{V,\xi}\text{tr}(P^\top\delta(\xi)^\top\mu(P-I_n))
=\mathbb{E}_{V}\text{tr}\!\big(P^\top (\mathbb{E}_{\xi}\delta(\xi))^\top \mu(P-I_n)\big)=0$.

For the first term, by trace cyclicity and $P^\top=P$,
\[
\mathbb{E}_{V,\xi}\|\delta(\xi)P\|_F^2
=\mathbb{E}_{V,\xi}\text{tr}\!\big(P\,\delta(\xi)^\top\delta(\xi)\,P\big)
=\text{tr}\!\Big(\underbrace{\mathbb{E}_{\xi}[\delta(\xi)^\top\delta(\xi)]}_{\Sigma_\xi}\,
\mathbb{E}_{V}[P^2]\Big).
\]
For the second term, we have
\[
\mathbb{E}_{V}\|\mu(P-I_n)\|_F^2
=\mathbb{E}_{V}\text{tr}\!\big((P-I_n)\mu^\top\mu(P-I_n)\big)
=\text{tr}\!\Big(\underbrace{\mu^\top\mu}_{\Sigma_\Theta}\,\mathbb{E}_{V}[(P-I_n)^2]\Big)
=\text{tr}\!\Big(\Sigma_\Theta\,\mathbb{E}_{V}[P^2-2P+I_n]\Big).
\]
Combining the two parts yields
\[
\mathbb{E}_{V,\xi}\|\hat g_{\mathrm{LowRank\text{-}IPA/LR}}-g(\Theta)\|_F^2
=\text{tr}\!\big(\Sigma_\xi\,\mathbb{E}[P^2]\big)
+\text{tr}\!\big(\Sigma_\Theta\,\mathbb{E}[P^2-2P+I_n]\big).
\]
Finally, using $\mathbb{E}_V[P]=cI_n$ we rewrite
$\mathbb{E}[P^2-2P+I_n]=\mathbb{E}[P^2-c^2I_n]+(1-c)^2I_n$, which gives
\[
\text{MSE}
=\text{tr}\!\big(\Sigma_\xi\,\mathbb{E}[P^2]\big)
+\text{tr}\!\big(\Sigma_\Theta\,\mathbb{E}[P^2-c^2I_n]\big)
+(1-c)^2\text{tr}(\Sigma_\Theta),
\]
as claimed. \Halmos
\endproof

\medskip
\proof{Proof of Theorem \ref{thm2}.}
  Let $V\in\mathbb R^{n\times r}$ and define the random projector $P:=VV^\top\succeq 0$.
Then $\mathrm{rank}(P)\le r$ almost surely. Let $\lambda_1(P)\ge \cdots \ge \lambda_n(P)\ge 0$
be the eigenvalues of $P$, and note that $\lambda_{r+1}(P)=\cdots=\lambda_n(P)=0$.
Hence
\[
\operatorname{tr}(P)=\sum_{i=1}^r \lambda_i(P),
\qquad
\operatorname{tr}(P^2)=\sum_{i=1}^r \lambda_i(P)^2.
\]

By Cauchy--Schwarz,
\[
\sum_{i=1}^r \lambda_i(P)^2
\;\ge\;
\frac{1}{r}\Big(\sum_{i=1}^r \lambda_i(P)\Big)^2
=
\frac{1}{r}\,\operatorname{tr}(P)^2.
\]
Equivalently,
\begin{equation}\label{eq:rank_trace}
\operatorname{tr}(P^2)\ \ge\ \frac{1}{r}\,\operatorname{tr}(P)^2.
\end{equation}
Moreover, equality in \eqref{eq:rank_trace} holds if and only if
$\lambda_1(P)=\cdots=\lambda_r(P)$, which forces $\mathrm{rank}(P)=r$ and all $r$ nonzero eigenvalues
are identical.

Taking expectation in \eqref{eq:rank_trace} and using Jensen's inequality gives
\[
\mathbb E[\operatorname{tr}(P^2)]
\ \ge\
\frac{1}{r}\,\mathbb E[\operatorname{tr}(P)^2]
\ \ge\
\frac{1}{r}\,\big(\mathbb E[\operatorname{tr}(P)]\big)^2.
\]
The second inequality is tight if and only if $\operatorname{tr}(P)$ is constant almost surely.

Since $\mathbb E[VV^\top]=\mathbb E[P]=cI_n$, we have
\[
\mathbb E[\operatorname{tr}(P)]
=
\operatorname{tr}(\mathbb E[P])
=
\operatorname{tr}(cI_n)
=
cn.
\]
Therefore,
\[
\operatorname{tr}(\mathbb E[P^2])
=
\mathbb E[\operatorname{tr}(P^2)]
\ \ge\
\frac{1}{r}(cn)^2
=
\frac{n^2c^2}{r},
\]
which proves the lower bound.

If the equality holds, then both inequalities above must be tight, hence:
(i) $\operatorname{tr}(P)$ is constant almost surely and equals $cn$;
(ii) $\lambda_1(P)=\cdots=\lambda_r(P)$ almost surely.
Combining (i)--(ii) yields
\[
\lambda_1(P)=\cdots=\lambda_r(P)=\frac{cn}{r}
\quad\text{and}\quad
\lambda_{r+1}(P)=\cdots=\lambda_n(P)=0
\qquad\text{a.s.}
\]
Thus, all nonzero singular values of $V$ equal $\sqrt{cn/r}$, and therefore
\[
V^\top V \;=\; \frac{cn}{r}\,I_r
\qquad\text{a.s.}
\]
Conversely, if $V^\top V=\frac{cn}{r}I_r$ almost surely, then $V$ has singular values
$\sqrt{cn/r}$ (multiplicity $r$), so $P=VV^\top$ has eigenvalues $\frac{cn}{r}$ (multiplicity $r$)
and $0$ (multiplicity $n-r$), which makes both inequalities tight and achieves
$\operatorname{tr}(\mathbb E[P^2])=\frac{n^2c^2}{r}$.

This completes the proof.
 \Halmos
\endproof

\medskip

\proof{Proof of Proposition \ref{prop:isotropic}.}
We verify the above conditions for each construction.

(i) Haar–Stiefel case.
Since $U^{\top}U=I_r$ by definition of the Stiefel manifold,
\[
      \bigl(V^{(S)}\bigr)^{\!\top}V^{(S)}
      =\alpha^{2}U^{\top}U
      =\alpha^{2}I_r
      =\frac{c\,n}{r}\,I_r.
\]
Then, due to the invariance of the Haar measure under orthogonal transformation, $U$ has the same distribution as $PU$ for any orthogonal matrix $P$. Hence, 
\begin{equation*}
    P\mathbb{E}[UU^{\top}]=P\mathbb{E}[UU^{\top}]P^{\top}P = \mathbb{E}[PU(PU)^{\top}]P = \mathbb{E}[UU^{\top}]P,
\end{equation*}
which means $\mathbb{E}[UU^{\top}]$ must commute with every orthogonal matrix, hence is a scalar multiple of $I_n$. Taking traces yields that multiple: $\text{tr}(\mathbb{E}[UU^{\top}]) = \mathbb{E}[\text{tr}(U^{\top}U)] = \text{tr}(I_r) = r$, so $\mathbb{E}[UU^{\top}]=\frac{r}{n}I_n$. Therefore, 
\begin{equation*}
    \mathbb{E}\!\bigl[P^{(S)}\bigr]
      =\alpha^{2}\,\mathbb{E}[UU^{\top}]
      =\frac{c\,n}{r}\cdot\frac{r}{n}\,I_n
      =c\,I_n.
\end{equation*}

(ii) Coordinate–axis case.
For every realization $J$, $U^{(C)}$ has orthonormal columns, hence
\[
      \bigl(V^{(C)}\bigr)^{\!\top}V^{(C)}
      =\alpha^{2}I_r
      =\frac{c\,n}{r}\,I_r.
\]
To compute $\mathbb{E}[U^{(C)}U^{(C)\!\top}]$, note that each coordinate
appears in $J$ with probability $r/n$, so $\mathbb{E}[U^{(C)}U^{(C)\!\top}]
      =\frac{r}{n}I_n.$
Multiplying by $\alpha^{2}$ yields the results exactly as above.\Halmos
\endproof

\medskip

\proof{Proof of Theorem \ref{thm:SigmaP2}.}

Let $\{q_i\}_{i=1}^n$ be the known orthonormal eigenbasis of $\Sigma$, so that
\[
\Sigma=Q\,\mathrm{diag}(\sigma_1,\ldots,\sigma_n)\,Q^\top,
\qquad
\sigma_1\ge \cdots \ge \sigma_n\ge 0.
\]
Take any feasible random matrix $P$ for \eqref{instanceind}. By feasibility, $P=VV^\top$ almost surely, hence $P\succeq 0$ almost surely, and $\operatorname{rank}(P)\le r$ almost surely. For each $i=1,\ldots,n$, define
\[
d_i:=q_i^\top Pq_i,
\qquad
s_i:=q_i^\top P^2q_i.
\]
Since $P\succeq 0$, we have $d_i\ge 0$. Also,
\[
s_i=q_i^\top P^2q_i=(Pq_i)^\top(Pq_i)=\|Pq_i\|^2\ge 0.
\]
In particular, if $s_i=0$, then $Pq_i=0$, and therefore
$d_i=q_i^\top Pq_i=\langle q_i,Pq_i\rangle=0.$
Thus the quantity
\[
\rho_i:=
\begin{cases}
d_i^2/s_i, & s_i>0,\\[0.3ex]
0, & s_i=0,
\end{cases}
\]
is well defined.

We first show that $0\le \rho_i\le 1$ almost surely. The lower bound is immediate from the definition. For the upper bound, note that
$d_i=q_i^\top Pq_i=\langle q_i,Pq_i\rangle=\langle Pq_i,q_i\rangle.$
Applying the Cauchy--Schwarz inequality, we obtain
\[
d_i^2=\langle Pq_i,q_i\rangle^2
\le \|Pq_i\|^2\,\|q_i\|^2.
\]
Because $\|q_i\|=1$ and $\|Pq_i\|^2=s_i$, this becomes
\[
d_i^2\le s_i.
\]
Hence $0\le \rho_i\le 1$ almost surely. Define
\[
\pi_i:=\mathbb E[\rho_i],\qquad i=1,\ldots,n.
\]
Then automatically $0\le \pi_i\le 1$.

Next we show that in fact $\pi_i>0$. Since $\mathbb E[P]=cI_n$, we have
\[
\mathbb E[d_i]
=
q_i^\top \mathbb E[P]q_i
=
q_i^\top(cI_n)q_i
=
c.
\]
Because $d_i\ge 0$ almost surely and its expectation is strictly positive, $d_i$ cannot vanish almost surely. Thus $\Pr(d_i>0)>0$. But if $d_i>0$, then necessarily $s_i>0$ and hence $\rho_i=d_i^2/s_i>0$. Therefore $\Pr(\rho_i>0)>0$, which implies
\[
0<\pi_i\le 1,\qquad i=1,\ldots,n.
\]

We now prove that the vector $\pi=(\pi_1,\ldots,\pi_n)$ also satisfies the budget constraint $\sum_{i=1}^n\pi_i\le r$. Fix one realization of $P$. Since $P\succeq 0$, it admits a spectral decomposition
\[
P=U\Lambda U^\top,
\]
where $U\in\mathbb R^{n\times k}$ has orthonormal columns, $\Lambda=\mathrm{diag}(\lambda_1,\ldots,\lambda_k)$ with $\lambda_j\ge 0$, and
\[
k=\operatorname{rank}(P)\le r.
\]
For each $i$, define $u_i:=U^\top q_i\in\mathbb R^k$. Then
\[
d_i=q_i^\top U\Lambda U^\top q_i=u_i^\top \Lambda u_i,
\qquad
s_i=q_i^\top U\Lambda^2U^\top q_i=u_i^\top \Lambda^2u_i.
\]

We claim that
\[
(u_i^\top \Lambda u_i)^2\le (u_i^\top \Lambda^2u_i)(u_i^\top u_i).
\]
This is again Cauchy--Schwarz, now applied in $\mathbb R^k$ to the vectors $\Lambda u_i$ and $u_i$. We have
\[
(u_i^\top \Lambda u_i)^2
=
\langle \Lambda u_i,u_i\rangle^2
\le
\|\Lambda u_i\|^2\,\|u_i\|^2.
\]
Since $\|\Lambda u_i\|^2=u_i^\top\Lambda^2u_i$ and $\|u_i\|^2=u_i^\top u_i$, the claimed inequality follows.

If $s_i>0$, dividing the above inequality by $s_i=u_i^\top \Lambda^2u_i$ gives
\[
\rho_i
=
\frac{(u_i^\top \Lambda u_i)^2}{u_i^\top \Lambda^2u_i}
\le
u_i^\top u_i
=
\|U^\top q_i\|^2.
\]
If $s_i=0$, then $\rho_i=0$, so the same inequality remains true. Therefore, for every realization of $P$,
\[
\sum_{i=1}^n \rho_i
\le
\sum_{i=1}^n \|U^\top q_i\|^2.
\]

We now simplify the right-hand side. Since $\|U^\top q_i\|^2=(U^\top q_i)^\top(U^\top q_i)=q_i^\top UU^\top q_i$, we have
\[
\sum_{i=1}^n \|U^\top q_i\|^2
=
\sum_{i=1}^n q_i^\top UU^\top q_i.
\]
Because $\{q_i\}_{i=1}^n$ is an orthonormal basis, for any square matrix $A$ one has the identity
\[
\sum_{i=1}^n q_i^\top A q_i=\operatorname{tr}(A).
\]
Applying this with $A=UU^\top$ yields
\[
\sum_{i=1}^n q_i^\top UU^\top q_i=\operatorname{tr}(UU^\top).
\]
Finally, $U$ has $k$ orthonormal columns, so $U^\top U=I_k$. Hence
$\operatorname{tr}(UU^\top)=\operatorname{tr}(U^\top U)=\operatorname{tr}(I_k)=k.$
Combining the previous displays, we obtain
$\sum_{i=1}^n \rho_i\le k\le r.$
Taking expectations on both sides gives
\[
\sum_{i=1}^n \pi_i
=
\sum_{i=1}^n \mathbb E[\rho_i]
\le r.
\]

We next derive a lower bound for the objective. Since
\[
\Phi(P)=\operatorname{tr}\!\bigl(\Sigma\,\mathbb E[P^2]\bigr),
\]
and $\Sigma$ is diagonal in the basis $\{q_i\}$, we may write
\[
\Phi(P)
=
\sum_{i=1}^n \sigma_i\, q_i^\top \mathbb E[P^2]q_i
=
\sum_{i=1}^n \sigma_i\,\mathbb E[q_i^\top P^2q_i]
=
\sum_{i=1}^n \sigma_i\,\mathbb E[s_i].
\]
Thus the objective is a weighted sum of the quantities $\mathbb E[s_i]$.

We now bound each $\mathbb E[s_i]$ from below. Since $\mathbb E[d_i]=c$, it is enough to compare $d_i$ and $s_i$. On the event $\{s_i>0\}$, we can write
$d_i=\sqrt{s_i}\cdot \frac{d_i}{\sqrt{s_i}}.$
On the event $\{s_i=0\}$, we already showed that $d_i=0$. Therefore
\[
d_i=\sqrt{s_i}\cdot \frac{d_i}{\sqrt{s_i}}\,\mathbf 1_{\{s_i>0\}}
\]
almost surely. Taking expectations and then applying Cauchy--Schwarz to the random variables $\sqrt{s_i}$ and $\frac{d_i}{\sqrt{s_i}}\mathbf 1_{\{s_i>0\}}$, we obtain
\[
(\mathbb E[d_i])^2
\le
\mathbb E[s_i]\,
\mathbb E\!\left[\frac{d_i^2}{s_i}\mathbf 1_{\{s_i>0\}}\right].
\]
By definition of $\rho_i$, the second expectation is exactly $\mathbb E[\rho_i]=\pi_i$. Since $\mathbb E[d_i]=c$, we conclude that
\[
c^2\le \mathbb E[s_i]\pi_i,
\qquad\text{that is,}\qquad
\mathbb E[s_i]\ge \frac{c^2}{\pi_i}.
\]

Substituting this estimate into the expression for $\Phi(P)$ gives a lower bound of the objective functional
\[
\Phi(P)
=
\sum_{i=1}^n \sigma_i\,\mathbb E[s_i]
\ge
\sum_{i=1}^n \sigma_i\,\frac{c^2}{\pi_i}
=
c^2\sum_{i=1}^n \frac{\sigma_i}{\pi_i}.
\]
Here the inequality holds term by term, because each coefficient $\sigma_i$ is nonnegative and each $\mathbb E[s_i]$ is bounded below by $c^2/\pi_i$. Since $0<\pi_i\le 1$ and $\sum_i\pi_i\le r$, every feasible $P$ satisfies
\[
\Phi(P)\ge
c^2\inf\Bigl\{
\sum_{i=1}^n \sigma_i/\pi_i:\ 0<\pi_i\le 1,\ \sum_{i=1}^n\pi_i\le r
\Bigr\}.
\]

This reduces the original matrix optimization problem to a finite-dimensional convex program. For the purpose of minimizing the objective, one may take an optimizer with $\sum_{i=1}^n\pi_i=r$; indeed, if $\sum_i\pi_i<r$ and there exists an index with $\sigma_i>0$ and $\pi_i<1$, then increasing that coordinate preserves feasibility and strictly decreases the objective. Hence the relevant optimization problem is
\[
\min\Bigl\{\sum_{i=1}^n \sigma_i/\pi_i:\ 0<\pi_i\le 1,\ \sum_{i=1}^n\pi_i=r\Bigr\}.
\]
Its KKT conditions give
\[
\pi_i^\star=\min\{1,\sqrt{\sigma_i/\mu}\},\qquad i=1,\ldots,n,
\]
for some $\mu>0$ chosen so that $\sum_{i=1}^n\pi_i^\star=r$. If we define
\[
t:=\#\{i:\pi_i^\star=1\},
\]
then
\[
r=t+\sum_{j:\pi_j^\star<1}\sqrt{\sigma_j/\mu},
\]
which is equivalent to
\[
\sqrt{\mu}
=
\frac{\sum_{j:\pi_j^\star<1}\sqrt{\sigma_j}}{r-t}.
\]
Substituting this into the formula for $\pi_i^\star$ yields
\[
\pi_i^{\star}
=
\min\!\Bigl\{
1,\ 
(r-t)\sqrt{\sigma_i}\Big/\!\!\sum_{j:\pi_j^{\star}<1}\sqrt{\sigma_j}
\Bigr\},
\qquad i=1,\ldots,n.
\]
Evaluating the objective at $\pi^\star$, we obtain
\[
\Phi_{\min}
=
c^2\sum_{i=1}^n \frac{\sigma_i}{\pi_i^\star}
=
c^{2}\!\sum_{i:\pi_i^{\star}=1}\sigma_i
+
c^2\sum_{i:\pi_i^{\star}<1}\frac{\sigma_i}{\pi_i^\star}.
\]
For indices with $\pi_i^\star<1$, we have $\pi_i^\star=\sqrt{\sigma_i/\mu}$, so
$\frac{\sigma_i}{\pi_i^\star}=\sqrt{\sigma_i\mu}.$
Therefore
\[
\Phi_{\min}
=
c^{2}\!\sum_{i:\pi_i^{\star}=1}\sigma_i
+
c^2\sum_{i:\pi_i^{\star}<1}\sqrt{\sigma_i\mu}.
\]
Using the expression for $\sqrt{\mu}$ found above, this becomes
\[
\Phi_{\min}
=
c^{2}\!\sum_{i:\pi_i^{\star}=1}\sigma_i
+
\frac{c^{2}}{r-t}
\Bigl(\sum_{i:\pi_i^{\star}<1}\sqrt{\sigma_i}\Bigr)^{2}.
\]
This proves \eqref{eq:Phi-star-general-v2} and \eqref{eq:pi-star-v2}.

It remains to show that this lower bound is attained. Let $J\subseteq[n]$ be a random subset such that $|J|=r$ almost surely and
\[
\Pr(i\in J)=\pi_i^\star,\qquad i=1,\ldots,n.
\]
Define
\[
P=\sum_{i\in J}\frac{c}{\pi_i^\star}\,q_iq_i^\top.
\]
Each matrix $q_iq_i^\top$ is a rank-one orthogonal projector, and different terms are mutually orthogonal because $q_i^\top q_j=0$ whenever $i\neq j$. Since exactly $r$ indices are selected, $P$ is positive semi-definite and has rank exactly $r$ almost surely. In particular, it can be factorized as $P=VV^\top$ for some random matrix $V$, so it indeed defines an element of $\mathcal D$.

We now verify the moment conditions. By linearity of expectation,
\[
\mathbb E[P]
=
\sum_{i=1}^n \Pr(i\in J)\frac{c}{\pi_i^\star}\,q_iq_i^\top
=
\sum_{i=1}^n \pi_i^\star\frac{c}{\pi_i^\star}\,q_iq_i^\top
=
c\sum_{i=1}^n q_iq_i^\top
=
cI_n.
\]
Thus $P$ is feasible. To compute $P^2$, note that
\[
(q_iq_i^\top)(q_jq_j^\top)=0
\qquad\text{for }i\neq j,
\]
because $q_i^\top q_j=0$, and also
\[
(q_iq_i^\top)^2=q_iq_i^\top.
\]
Hence all cross terms vanish, and
\[
P^2
=
\sum_{i\in J}\Bigl(\frac{c}{\pi_i^\star}\Bigr)^2 q_iq_i^\top.
\]
Taking expectations once again,
\[
\mathbb E[P^2]
=
\sum_{i=1}^n \pi_i^\star\Bigl(\frac{c}{\pi_i^\star}\Bigr)^2 q_iq_i^\top
=
c^2\sum_{i=1}^n \frac{1}{\pi_i^\star}\,q_iq_i^\top.
\]
Multiplying by $Q^\top$ on the left and $Q$ on the right simply expresses this matrix in the eigenbasis of $\Sigma$, so
\[
\mathbb E[Q^\top P^2Q]
=
c^2\,\mathrm{diag}\!\Bigl(\frac{1}{\pi_1^\star},\ldots,\frac{1}{\pi_n^\star}\Bigr).
\]
Therefore
\[
\Phi(P)
=
\operatorname{tr}\!\bigl(\Sigma\,\mathbb E[P^2]\bigr)
=
c^2\sum_{i=1}^n \frac{\sigma_i}{\pi_i^\star}
=
\Phi_{\min}.
\]
So this construction is feasible and optimal in $\mathcal D$.

Finally, suppose a feasible distribution in $\mathcal D$ satisfies \eqref{eq:opt-cond-v2}. Then
\[
\Phi(P)
=
\operatorname{tr}\!\bigl(\Sigma\,\mathbb E[P^2]\bigr)
=
\operatorname{tr}\!\Bigl(
\mathrm{diag}(\sigma_1,\ldots,\sigma_n)\,\mathbb E[Q^\top P^2Q]
\Bigr)
=
c^2\sum_{i=1}^n \frac{\sigma_i}{\pi_i^\star}.
\]
The first part of the proof has already shown that no feasible distribution can achieve a value below $\Phi_{\min}$, while the right-hand side above is exactly the value of $\Phi_{\min}$ given by \eqref{eq:Phi-star-general-v2}. Hence any feasible distribution satisfying \eqref{eq:opt-cond-v2} is optimal. \Halmos
\endproof

\medskip
\proof{Proof of Proposition~\ref{prop:sampler_optimal}.}
Algorithm~\ref{alg:sigma_dependent_sampler} samples a subset $J\subset\{1,\ldots,n\}$ with $|J|=r$
and marginal inclusion probabilities $\Pr(i\in J)=\pi_i^\star$.
It then sets
\[
V = Q_J\,\mathrm{diag}\!\big(\sqrt{\mu_i}\big)_{i\in J},
\qquad
\mu_i:=\frac{c}{\pi_i^\star},
\qquad
P:=VV^\top
=
\sum_{i\in J}\mu_i\,q_i q_i^\top.
\]
Since $|J|=r$ and the vectors $\{q_i\}$ are orthonormal, $P\succeq 0$ and $\mathrm{rank}(P)=r$ almost surely.

\paragraph{First moment.}
Write $P=\sum_{i=1}^n \mathbf 1_{\{i\in J\}}\mu_i\,q_i q_i^\top$. Taking expectation gives
\[
\mathbb E[P]
=
\sum_{i=1}^n \mathbb E[\mathbf 1_{\{i\in J\}}]\mu_i\,q_i q_i^\top
=
\sum_{i=1}^n \pi_i^\star \frac{c}{\pi_i^\star}\,q_i q_i^\top
=
c\sum_{i=1}^n q_i q_i^\top
=
cI_n.
\]

\paragraph{Second moment in the eigenbasis.}
Because $q_i q_i^\top q_j q_j^\top =0$ for $i\neq j$ and $(q_i q_i^\top)^2=q_i q_i^\top$, we have
\[
P^2=\sum_{i\in J}\mu_i^2\,q_i q_i^\top,
\qquad
\mathbb E[P^2]
=
\sum_{i=1}^n \Pr(i\in J)\,\mu_i^2\,q_i q_i^\top
=
\sum_{i=1}^n \pi_i^\star\Big(\frac{c}{\pi_i^\star}\Big)^2 q_i q_i^\top
=
c^2\sum_{i=1}^n \frac{1}{\pi_i^\star}\,q_i q_i^\top.
\]
Multiplying by $Q^\top(\cdot)Q$ yields
\[
\mathbb E[Q^\top P^2 Q]
=
c^2\,\mathrm{diag}\!\Big(\frac{1}{\pi_1^\star},\ldots,\frac{1}{\pi_n^\star}\Big).
\]
Together with $\mathbb E[P]=cI_n$ and $|J|=r$, this shows that the law produced by the algorithm
satisfies \eqref{eq:opt-cond-v2}. Hence it attains $\Phi_{\min}$ in Theorem~\ref{thm:SigmaP2}.
\Halmos
\endproof

\medskip
\proof{Proof of Proposition \ref{thm3}.}
    Throughout the proof we assume strong unbiasedness, i.e., $c=1$.
Recall that $\Sigma=\Sigma_\xi+\Sigma_\Theta\succeq 0$ and that, under $c=1$,
the MSE decomposition gives
\begin{equation}\label{eq:MSE_c1}
\mathrm{MSE}_{\min}
=
\Phi_{\min} + (1-2c)\operatorname{tr}(\Sigma_\Theta)
=
\Phi_{\min}-\operatorname{tr}(\Sigma_\Theta).
\end{equation}
Therefore, it suffices to show $\Phi_{\min}\le \operatorname{tr}(\Sigma)$.

Let $\Sigma=Q\mathrm{diag}(\sigma_1,\ldots,\sigma_n)Q^\top$ with $\sigma_1\ge\cdots\ge\sigma_n\ge 0$
and let $s:=\mathrm{rank}(\Sigma)=\#\{i:\sigma_i>0\}$.
By assumption, $s\le r$.

Consider the optimal inclusion probabilities $\pi^\star$ in Theorem~\ref{thm:SigmaP2}.
Since directions with $\sigma_i=0$ do not contribute to the objective
$\Phi(P)=\operatorname{tr}(\Sigma\,\mathbb E[P^2])$, we may always use them to satisfy the fixed-size
constraint $\sum_{i=1}^n\pi_i=r$ without changing the objective value.
In particular, we can choose an optimal solution such that
\[
\pi_i^\star=1 \quad \text{for all } i\le s \text{ (i.e., for all }\sigma_i>0\text{)},
\]
and distribute the remaining probability mass $r-s$ arbitrarily over indices with $\sigma_i=0$ so that
$\sum_i\pi_i^\star=r$ (this is feasible because $s\le r$ and each $\pi_i^\star\le 1$).

With this choice, the ``uncapped'' set $\{i:\pi_i^\star<1\}$ is contained in $\{i:\sigma_i=0\}$.
Hence $\sum_{i:\pi_i^\star<1}\sqrt{\sigma_i}=0$, and the second term in the optimal value formula
\eqref{eq:Phi-star-general-v2} vanishes. Therefore,
\[
\Phi_{\min}
=
\sum_{i:\pi_i^\star=1}\sigma_i
=
\sum_{i=1}^{s}\sigma_i
=
\operatorname{tr}(\Sigma).
\]
Substituting into \eqref{eq:MSE_c1} yields
\[
\mathrm{MSE}_{\min}
=
\operatorname{tr}(\Sigma)-\operatorname{tr}(\Sigma_\Theta)
=
\operatorname{tr}(\Sigma_\xi+\Sigma_\Theta)-\operatorname{tr}(\Sigma_\Theta)
=
\operatorname{tr}(\Sigma_\xi).
\]
In particular, $\mathrm{MSE}_{\min}\le \operatorname{tr}(\Sigma_\xi)$, and the inequality holds with equality
when $\mathrm{rank}(\Sigma)\le r$.\Halmos
\endproof

\end{document}